\documentclass[preprint,12pt]{elsarticle}




\usepackage{amssymb}
\usepackage{amsmath}

\usepackage[utf8]{inputenc} 
\usepackage[T1]{fontenc}    
\usepackage{hyperref}       
\usepackage{url}            
\usepackage{booktabs}       
\usepackage{amsfonts}       
\usepackage{nicefrac}       
\usepackage{microtype}      
\usepackage{xcolor}         

\usepackage{algorithm}
\usepackage{algpseudocode}
\usepackage[para,online,flushleft]
{threeparttable}
\usepackage{pifont}
\newcommand{\cmark}{\ding{51}}%
\newcommand{\xmark}{\ding{55}}%
\usepackage{multirow}
\usepackage{graphicx}

\usepackage{amsthm}
\newtheorem{lemma}{Lemma}
\newtheorem{theorem}{Theorem}

\newtheorem{remark}{Remark}
\usepackage{subcaption}
\setlength{\abovedisplayskip}{0.2pt}
\setlength{\belowdisplayskip}{0.2pt}


\journal{IEEE}
\begin{document}

\begin{frontmatter}



\title{Enhancing PPO with Trajectory-Aware Hybrid Policies}


\author[inst1,inst2]{Qisai Liu}
\author[inst4]{Zhanhong Jiang}
\author[inst1]{Hsin-Jung Yang}
\author[inst3]{Mahsa Khosravi}
\author[inst1,inst2]{Joshua R. Waite}
\author[inst1,inst2,inst4]{Soumik Sarkar}

\affiliation[inst1]{organization={Department of Mechanical Engineering},
            addressline={Iowa State University}, 
            city={Ames},
            postcode={50011}, 
            state={Iowa},
            country={United States}}

\affiliation[inst2]{organization={Department of Computer Science},
            addressline={Iowa State University}, 
            city={Ames},
            postcode={50011}, 
            state={Iowa},
            country={United States}}

\affiliation[inst3]{organization={Department of Industrial and Manufacturing Systems Engineering},
            addressline={Iowa State University}, 
            city={Ames},
            postcode={50011}, 
            state={Iowa},
            country={United States}}

\affiliation[inst4]{organization={Translational AI Center},
            addressline={Iowa State University}, 
            city={Ames},
            postcode={50011}, 
            state={Iowa},
            country={United States}}

\begin{abstract}
Proximal policy optimization (PPO) is one of the most popular state-of-the-art on-policy algorithms that has become a standard baseline in modern reinforcement learning with applications in numerous fields. Though it delivers stable performance with theoretical policy improvement guarantees, high variance, and high sample complexity still remain critical challenges in on-policy algorithms. To alleviate these issues, we propose Hybrid-Policy Proximal Policy Optimization (HP3O), which utilizes a trajectory replay buffer to make efficient use of trajectories generated by recent policies. Particularly, the buffer applies the "first in, first out" (FIFO) strategy so as to keep only the recent trajectories to attenuate the data distribution drift. A batch consisting of the trajectory with the best return and other randomly sampled ones from the buffer is used for updating the policy networks. The strategy helps the agent to improve its capability on top of the most recent best performance and in turn reduce variance empirically. We theoretically construct the policy improvement guarantees for the proposed algorithm. HP3O is validated and compared against several baseline algorithms using multiple continuous control environments. Our code is available \href{https://anonymous.4open.science/r/HP30-EB61/HP3O_train.py}{here}.
\end{abstract}



\begin{keyword}
Hybrid policy \sep proximal policy optimization \sep variance reduction \sep sample efficiency \sep best trajectory



\end{keyword}

\end{frontmatter}




\section{Introduction}
Model-free reinforcement learning~\cite{liu2021policy} has demonstrated significant success in many different application areas, such as building energy systems~\cite{biemann2021experimental}, urban driving~\cite{toromanoff2020end,saxena2020driving}, radio networks~\cite{kaur2020energy}, robotics~\cite{polydoros2017survey}, and medical image analysis~\cite{hu2023reinforcement}. In particular, on-policy reinforcement learning approaches such as proximal policy optimization (PPO)~\cite{schulman2017proximal,chang2023learning} provide stable performance along with theoretical policy improvement guarantees that involve a lower bound~\cite{kakade2002approximately} on the expected performance loss which can be approximated using the generated samples from the current policy. These guarantees are theoretically quite attractive and mathematically elegant, but the requirement of on-policy data and the high variance nature demands significant data to be collected between every update, inevitably causing the issue of high sample complexity and the behavior of slow learning.

Off-policy algorithms~\cite{zanette2023realizability,prudencio2023survey}, on the other hand, alleviate some of these issues as they can leverage a replay buffer to store samples that enable more efficient policy updates by reusing these samples. While the off-policy approach leads to better sample efficiency, it causes another problem called data distribution drift~\cite{zhang2020reinforcement,lesort2021understanding}, and most studies~\cite{lillicrap2015continuous,dankwa2019twin} have just overlooked this issue. Furthermore, off-policy methods also suffer from high variance and even difficulty in convergence~\cite{lyu2020variance} due to the exploration in training. Mitigating
this issue~\cite{bjorck2021high} still remains challenging due to the high variations of stored samples in the traditional replay buffer design. However, it has been receiving considerable attention in recent studies~\cite{liu2020improved,xu2019sample}. Numerous previous attempts~\cite{zhang2021convergence,xu2020improved,papini2018stochastic} took inspiration from supervised learning~\cite{wang2013variance,johnson2013accelerating} and specifically made adjustments to the estimation of policy gradients to achieve variance reduction. However, this involves auxiliary variables and complex estimation techniques, resulting in a more complicated learning process. Another simple strategy to attenuate high variance is to leverage the advantage function involving a baseline~\cite{jin2023stationary,mei2022role,wu2018variance}, which can be estimated by a parameterized model. Nevertheless, when the sampled data from the buffer has a large distribution drift, learning the parameterized model can be defective, triggering a poor advantage value. This naturally leads to the question:
\begin{center}
    \textit{Can we design a hybrid-policy algorithm by assimilating the low sample complexity from off-policy algorithms into on-policy PPO for variance reduction?}
\end{center}
\textbf{Contributions.} We provide an affirmative answer to the above question. In this work, we blend off-policy and on-policy approaches to balance the trade-off between sample efficiency and training stability. Specifically, we focus primarily on mitigating underlying issues of PPO by using a trajectory replay buffer. In contrast with traditional buffers that keep appending all generated experiences, we use a "first in, first out" (FIFO) strategy to keep only the recent trajectories to attenuate the data distribution drift. A batch consisting of the trajectory with the best return (a.k.a., best trajectory, $\tau^*$) and other randomly sampled ones from the buffer is used for updating the policy networks. This strategy helps the agent to improve its capability on top of the most recent `best performance' and in turn to also reduce variance. 
Additionally, we define a new baseline which is estimated from the best trajectory selected from the replay buffer. Such a baseline evaluates how much better the return is by selecting the present action than the most recent best one, which intuitively encourages the agent to further improve the performance. More technical detail will be discussed in Section~\ref{algorithm}.
Specifically, our contributions are as follows.
\begin{itemize}
    \item We propose a novel variant of PPO, called Hybrid-Policy PPO (HP3O), that combines the advantageous features of on-policy and off-policy techniques to improve sample efficiency and reduce variance. We also introduce another variant termed HP3O+ that leverages a new baseline to enhance the model performance. Please see Table~\ref{table:comparison} for a qualitative comparison between the proposed and existing methods.
    \item We theoretically construct the policy improvement lower bounds for the proposed algorithms. HP3O provably shows a new lower bound where policies are not temporally correlated, while HP3O+ induces a value penalty term in the lower bound, which helps reduce the variance during training.
    \item We perform extensive experiments to show the effectiveness of HP3O/HP3O+ across a few continuous control environments. Empirical evidence demonstrates that our proposed algorithms are either comparable to or outperform on-policy baselines. Though off-policy techniques such as soft actor-critic (SAC) may still have better final returns for most tasks, our hybrid-policy algorithms have significantly more advantages in terms of run time complexity.
\end{itemize}

\begin{table}[htp]
\caption{Qualitative comparison with PPO and its relevant variants}
\begin{center}
\begin{threeparttable}
\begin{tabular}{c c c c c}
    \toprule
    \textbf{Method} & \textbf{T.B.} & \textbf{On/off-policy} & \textbf{T.G.}\\ \midrule
      \texttt{PPO-Clip\cite{jin2023stationary}}   & \xmark                   &  \xmark   & \cmark\\
      \texttt{PTR-PPO\cite{liang2021ptr}}   & \cmark                      &  \cmark  &\xmark         \\ 
      \texttt{GEPPO\cite{queeney2021generalized}}   & \xmark                      &  \cmark&\cmark           \\
      \texttt{Policy-on-off PPO\cite{fakoor2020p3o}}   & \xmark                     &  \cmark&\xmark           \\
    \texttt{P3O\cite{chen2023sufficiency}}   & \xmark                     &  \xmark&\xmark           \\
      \texttt{Off-policy PPO\cite{meng2023off}}   & \xmark                     &  \cmark &\cmark           \\
      \hline
        \texttt{HP3O(+) (ours)}   & \cmark&   \cmark&\cmark\\
      \bottomrule
      
\end{tabular}
\begin{tablenotes}
T.B.: trajectory buffer; T.G.: theoretical guarantee.
\end{tablenotes}
\end{threeparttable}
\end{center}
\label{table:comparison}
\end{table}

\section{Related Works}
\textbf{On-policy methods.}
On-policy algorithms aim at improving the policy performance monotonically between every update. The work~\cite{kakade2002approximately} developing Conservative Policy Iteration (CPI) for the first time theoretically introduced a policy improvement lower bound that can be approximated by using samples from the present policy. In this regard, trust-region policy optimization (TRPO)~\cite{schulman2015trust} and PPO have become quite popular baseline algorithms. TRPO solves a trust-region optimization problem to approximately obtain the policy improvement by imposing a Kullback-Leibler (KL) divergence constraint, which
requires solving a quadratic programming that may be compute-intensive. On the contrary, PPO achieves a similar objective by adopting a clipping mechanism to constrain the latest policy not to deviate far from the previous one during the update. Their satisfactory performance in different applications~\cite{hu2019towards,lele2020stock,zhang2022ppo,dutta2022survey,bahrpeyma2023application,nguyen2024modelling,zhang2020power} triggers considerable interest in better understanding these methods~\cite{jin2023stationary} and developing new policy optimization variants~\cite{huang2021neural}.
Albeit numerous attempts have been made in the above works, the high sample complexity due to the on-policy behavior of PPO and its variants still obstructs efficient applications to real-world continuous control environments, which demands the connection with off-policy methods.

\textbf{Off-policy methods.} To address the high sample complexity issue in on-policy methods, a common approach is to reuse the samples generated by prior policies, which was devised in~\cite{hester2018deep,mnih2013playing}. Favored off-policy methods such as deep deterministic policy gradient (DDPG)~\cite{lillicrap2015continuous}, twin delayed DDPG (TD3)~\cite{fujimoto2018addressing} and soft actor-critic (SAC)~\cite{haarnoja2018soft} fulfilled this goal by employing a replay buffer to store historical data and sampling from it for computing the policy updates. As mentioned before, such approaches could cause data distribution drift due to the difference between the data distributions of current and prior policies. This work will include an implementation trick to address this issue to a certain extent. Kallus and Uehara developed a statistically efficient off-policy policy gradient (EOPPG) method~\cite{kallus2020statistically} and showed that it achieves an asymptotic lower bound that existing off-policy policy gradient approaches failed to attain. Other works such as nonparametric Bellman equation~\cite{tosatto2020nonparametric} and state distribution correction~\cite{kallus2020statistically} were also done with off-policy policy gradient. 

\textbf{Combination of on- and off-policy methods.} Making efficient use of on-policy and off-policy schemes is pivotal to designing better model-free reinforcement learning approaches. An early work merged them together to come up with the interpolated policy gradient~\cite{gu2017interpolated} for improving sample efficiency. Another work~\cite{fakoor2020p3o} developed Policy-on-off PPO to interleave off-policy updates with on-policy updates, which controlled the distance between the behavior and target policies without introducing any additional hyperparameters. Specifically, they utilized a complex gradient estimate to account for on-policy and off-policy behaviors, which may result in larger computational complexity in low-sample scenarios.
To compensate data inefficiency, Liang et al.~\cite{liang2021ptr} incorporated prioritized experience replay into PPO by proposing a truncated importance weight method to overcome the high variance and designing a policy improvement loss function for PPO under off-policy conditions. A more recent work~\cite{chen2023sufficiency} probed the insufficiency of PPO under an off-policy measure and explored in a much larger policy space to maximize the CPI objective.
The most related work to ours is~\cite{queeney2021generalized}, where the authors proposed a generalized PPO with off-policy data from prior policies and derived a generalized policy improvement lower bound. They utilized directly the past trajectories right before the present one instead of a replay buffer, which still maintains a weakly on-policy behavior. However, their method may suffer from poor performance in sparse reward environments.

\section{Problem formulation and preliminary}
\textbf{Markov decision process.} In this context, we consider an infinite-horizon Markov Decision Process (MDP) with discounted reward defined by the tuple $\mathcal{M}=(\mathcal{S}, \mathcal{A}, p, r, \rho_0, \gamma)$, where $\mathcal{S}$ indicates the set of states, $\mathcal{A}$ signifies the set of actions, $p:\mathcal{S}\times\mathcal{A}\to \mathcal{S}$ is the transition probability function, $r:\mathcal{S}\times\mathcal{A}\to\mathbb{R}$ is the reward function, $\rho_0$ is the initial state distribution of environment, and $\gamma$ is the discount factor. 
In this study, the agent's policy is a stochastic mapping represented by $\pi:\mathcal{S}\to\mathcal{A}$. Reinforcement learning aims at choosing a policy that is able to maximize the expected discounted cumulative rewards $J(\pi)=\mathbb{E}_{\tau\sim\pi}[\sum_{t=0}^\infty\gamma^tr(s_t,a_t)]$, where $\tau\sim\pi$ indicates a trajectory sampled according to $s_0\sim\rho_0$, $a_t\sim\pi(\cdot|s_t)$, and $s_{t+1}\sim p(\cdot|s_t, a_t)$. We denote by $d^\pi(s)$ a normalized discounted state visitation distribution such that $d^\pi(s)=(1-\gamma)\sum_{t=0}^\infty\gamma^t\mathbb{P}(s_t=s|\rho_0, \pi, p)$. Hence, the corresponding normalized discounted state-action visitation distribution can be expressed as $d^\pi(s,a)=d^\pi(s)\pi(s,a)$. Additionally, we define the state value function of the policy $\pi$ as $V^\pi(s)=\mathbb{E}_{\tau\sim\pi}[\sum_{t=0}^\infty]\gamma^tr(s_t,a_t)|s_0=s]$, the state-action value function, i.e., $Q$-function, as $Q^\pi(s,a)=\mathbb{E}_{\tau\sim\pi}[\sum_{t=0}^\infty]\gamma^tr(s_t,a_t)|s_0=s, a_0=a]$, and the critical advantage function as $A^\pi(s,a)=Q^\pi(s,a)-V^\pi(s)$. 

\textbf{Policy improvement guarantee.} The foundation of numerous on-policy policy optimization algorithms is built upon a classic policy improvement lower bound originally established in~\cite{kakade2002approximately}. With different scenarios~\cite{schulman2015trust,achiam2017constrained,dai2021refined}, the lower bound was refined to reflect diverse policy improvements, which can be estimated by using the samples generated from the latest policy. For completeness, we present in Lemma~\ref{lemma_1} the policy improvement lower bound from~\cite{achiam2017constrained}. 
\begin{lemma}\label{lemma_1} (Corollary 1 in~\cite{achiam2017constrained})
    Suppose that the current time step is $k$ and that the corresponding policy is $\pi_k$. For any future policy $\pi$, the following relationship holds true:
    \begin{equation}\label{eq_1}
        J(\pi)-J(\pi_k)\geq \frac{1}{1-\gamma}\mathbb{E}_{(s,a)\sim d^{\pi_k}}[\frac{\pi(a|s)}{\pi_k(a|s)}A^{\pi_k}(s,a)]-\frac{2\gamma C^\pi_{\pi_k}}{(1-\gamma)^2}\mathbb{E}_{(s,a)\sim d^{\pi_k}}[\delta(\pi,\pi_k)(s)],
    \end{equation}
where $C^\pi_{\pi_k}=\textnormal{max}_{s\in\mathcal{S}}|\mathbb{E}_{a\sim\pi(\cdot|s)}[A^{\pi_k}(s,a)]|$ and $\delta(\pi,\pi_k)(s)$ is the total variation distance between the distributions $\pi(\cdot|s)$ and $\pi_k(\cdot|s)$.
\end{lemma}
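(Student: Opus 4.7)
The plan is to follow the standard Kakade--Langford performance-difference route, combined with an importance-sampling change of variables in the action and a state-distribution shift bound that lets us swap $d^\pi$ for $d^{\pi_k}$ at the cost of a controllable error term. First I would start from the performance-difference identity
\begin{equation*}
J(\pi) - J(\pi_k) \;=\; \frac{1}{1-\gamma}\, \mathbb{E}_{s\sim d^\pi,\, a\sim \pi(\cdot|s)}\bigl[A^{\pi_k}(s,a)\bigr],
\end{equation*}
and then apply importance sampling in the action to introduce the likelihood ratio, yielding
\begin{equation*}
J(\pi) - J(\pi_k) \;=\; \frac{1}{1-\gamma}\, \mathbb{E}_{s\sim d^\pi,\, a\sim \pi_k(\cdot|s)}\!\left[\frac{\pi(a|s)}{\pi_k(a|s)} A^{\pi_k}(s,a)\right].
\end{equation*}

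Next I would add and subtract the same quantity but with the outer state distribution replaced by $d^{\pi_k}$. Defining $f(s) := \mathbb{E}_{a\sim \pi_k(\cdot|s)}\!\left[\tfrac{\pi(a|s)}{\pi_k(a|s)} A^{\pi_k}(s,a)\right] = \mathbb{E}_{a\sim \pi(\cdot|s)}[A^{\pi_k}(s,a)]$, the definition of $C^\pi_{\pi_k}$ gives $\max_s |f(s)| \le C^\pi_{\pi_k}$. The splitting produces the on-policy surrogate term (which matches the first expectation on the right-hand side of (\ref{eq_1})) plus an error term $\tfrac{1}{1-\gamma}\bigl(\mathbb{E}_{s\sim d^\pi}[f(s)] - \mathbb{E}_{s\sim d^{\pi_k}}[f(s)]\bigr)$, which I would bound by H\"older's inequality as $\tfrac{1}{1-\gamma}\,\|d^\pi - d^{\pi_k}\|_1 \cdot C^\pi_{\pi_k}$.

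The key technical step, and the part I expect to be the main obstacle, is bounding $\|d^\pi - d^{\pi_k}\|_1$ by an \emph{expected} total-variation distance between the two policies under $d^{\pi_k}$. I would invoke the standard state-distribution shift inequality
\begin{equation*}
\|d^\pi - d^{\pi_k}\|_1 \;\le\; \frac{2\gamma}{1-\gamma}\, \mathbb{E}_{s\sim d^{\pi_k}}\bigl[\delta(\pi,\pi_k)(s)\bigr],
\end{equation*}
which is itself proved by expressing $d^\pi$ and $d^{\pi_k}$ through their Markov transition operators, writing their difference in telescoping form, and using the coupling characterization of TV distance together with the geometric sum $\sum_{t\ge 0}\gamma^t = 1/(1-\gamma)$. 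Plugging this into the H\"older bound and taking the one-sided lower bound (the error may have either sign, so we subtract its absolute value) gives exactly the correction $-\tfrac{2\gamma C^\pi_{\pi_k}}{(1-\gamma)^2}\mathbb{E}_{s\sim d^{\pi_k}}[\delta(\pi,\pi_k)(s)]$, completing the proof. The main subtlety throughout is simply tracking the direction of the inequality and the $\gamma$/$(1-\gamma)$ powers; the importance-sampling and splitting steps are essentially bookkeeping, so the nontrivial content is entirely concentrated in the state-distribution shift lemma borrowed from the trust-region/CPO line of work.
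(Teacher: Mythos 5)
Your proposal is correct and follows essentially the same route the paper uses: the Kakade--Langford performance-difference identity, the add-and-subtract of the surrogate under $d^{\pi_k}$, a H\"older bound with $C^\pi_{\pi_k}$ as the $\ell_\infty$ factor, and the state-visitation shift bound $\|d^\pi-d^{\pi_k}\|_1\le\frac{2\gamma}{1-\gamma}\mathbb{E}_{s\sim d^{\pi_k}}[\delta(\pi,\pi_k)(s)]$, followed by the importance-sampling rewrite of the first term. This is exactly the argument the paper gives for its generalized Lemma~2 (with reference policy $\pi_r$), of which Lemma~1 is the special case $\pi_r=\pi_k$.
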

Lemma~\ref{lemma_1} implies that the policy improvement lower bound consists of the surrogate objective loss and the penalty term, which can be maximized by choosing a certain new policy $\pi_{k+1}$ to guarantee the policy improvement. However, directly maximizing such a lower bound could be computationally intractable if the next policy $\pi_{k+1}$ deviates far from the current one. Unless additional constraint is imposed such as a trust region in TRPO~\cite{schulman2015trust}, which unfortunately requires a complex second-order method to solve the optimization problem. Hence, PPO developed a simple yet effective heuristic for achieving this.

\textbf{Proximal policy optimization.} PPO has become a default baseline in a variety of applications, as mentioned above. It is favored because of its strong performance and simple implementation with sound theoretical motivation given by the policy improvement lower bound. Intuitively, PPO attempts to constrain the new policy close to the present one with a \textit{clipping} heuristic, which results in the most popular variant, PPO-clip~\cite{jin2023stationary}. Particularly, the following objective is solved at every policy update:
\begin{equation}\label{eq_2}
    \mathcal{L}^{clip}_k(\pi)=\mathbb{E}_{(s,a)\sim d^{\pi_k}}[\textnormal{min}(\frac{\pi(a|s)}{\pi_k(a|s)}A^{\pi_k}(s,a),\textnormal{clip}(\frac{\pi(a|s)}{\pi_k(a|s)},1-\epsilon,1+\epsilon)A^{\pi_k}(s,a))],
\end{equation}
where $\textnormal{clip}(a,b,c)=\textnormal{min}(\textnormal{max}(a,b),c)$. The clipping function plays a critical role in this objective as it consistently enforces the probability ratio between the current and next policies in a reasonable range between $[1-\epsilon, 1+\epsilon]$. The outer minimization in Eq.~\ref{eq_2} provides the lower bound guarantee for the surrogate loss in Eq.~\ref{eq_1}. In practice, one can set a small learning rate and a large number of time steps to generate sufficient samples to allow PPO to perform stably and approximate Eq.~\ref{eq_2}. However, due to its on-policy approach, high variance is a significant issue such that an extremely large number of samples may be required in some scenarios to make sure the empirical objective is able to precisely estimate the true objective in Eq.~\ref{eq_2}, which naturally causes the high sample complexity issue. This motivates us to leverage off-policy techniques to alleviate such an issue, while keeping the theoretical policy improvement.

\section{Hybrid-Policy PPO (HP3O)}\label{algorithm}
To achieve better sample efficiency of PPO, historical samples generated by previous policies are reused for policy updates, as done in off-policy algorithms. This inevitably results in a distribution drift between policies, which essentially disproves the policy improvement lower bound in Lemma~\ref{lemma_1}. In this context, to fix this issue, we will extend Lemma~\ref{lemma_1} to assimilate off-policy samples in a principled manner to derive a new policy improvement lower bound that works for our proposed algorithm, HP3O. HP3O (and its variant HP3O+), as shown in Figure.~\ref{fig:schematic_diagram}, takes a \textit{hybrid} approach that effectively synthesizes on-policy trajectory-wise policy updates and off-policy trajectory replay buffers. 
\begin{figure}[h]
\includegraphics[width=10cm]{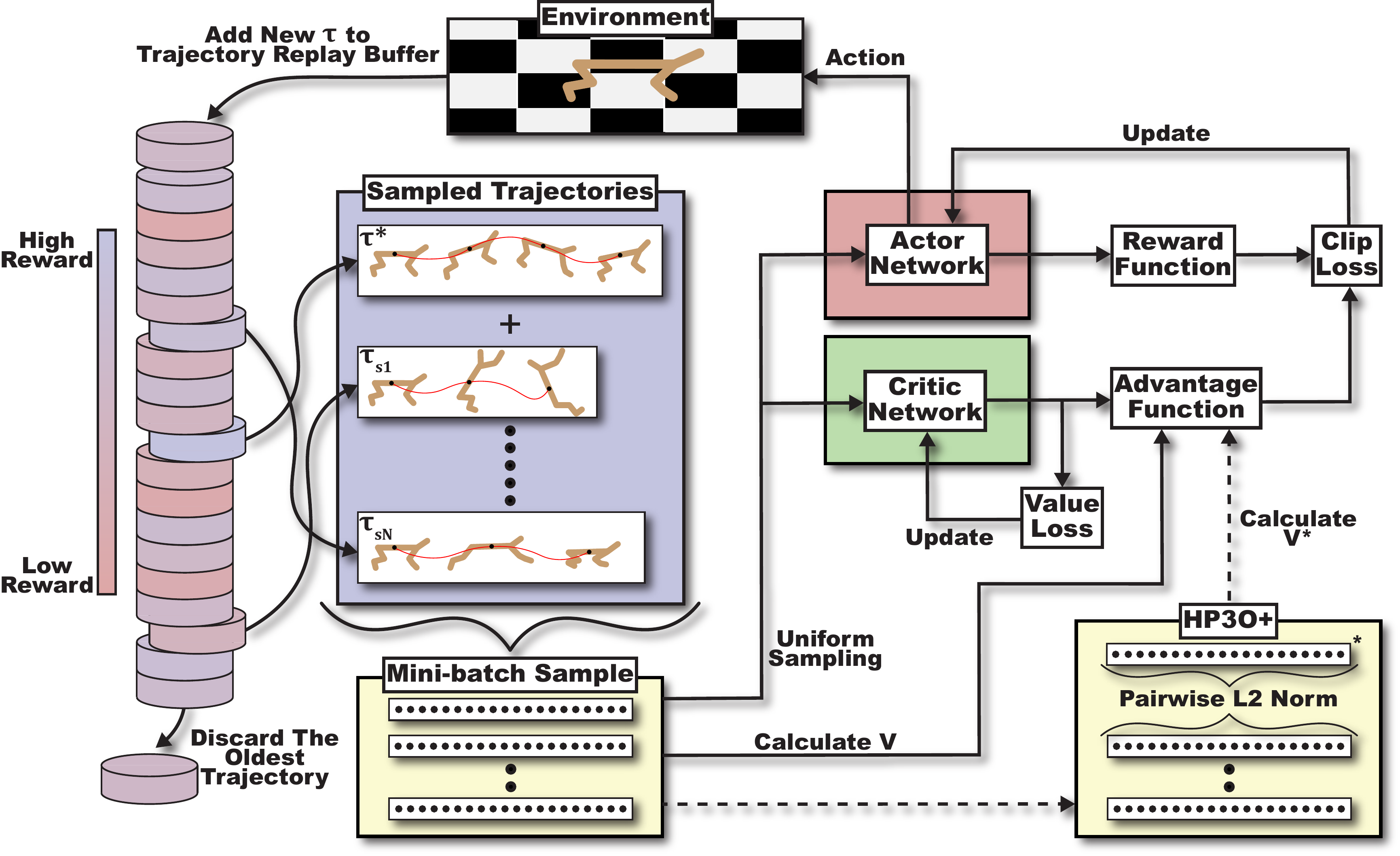}
\centering
\caption{Schematic diagram of HP3O/HP3O+: (left side) the trajectory replay buffer takes a "first in, first out" (FIFO) strategy to keep only recent trajectories; batch consisting of the trajectory with the best return ($\tau^*$) and other randomly sampled ones from the buffer are used for updating the actor/critic networks (\textit{off-policy} approach); (right side) model updating still follows the \textit{on-policy} PPO method, hence, \textit{hybrid-policy} PPO (HP3O); for HP3O+, $\tau^*$ is also used to update the advantage function}
\label{fig:schematic_diagram}
\end{figure}
\begin{algorithm}
\caption{HP3O(\textcolor{blue}{+})}
\label{alg:oppo-bat}
\begin{algorithmic}[1]
    \State \textbf{Input:} initializations of $\theta_0$, $\phi_0$, and trajectory replay buffer $R$, the number of episodes $K$, the number of time steps in each episode $T$, the number of epochs for updates $E$
    \For{$k=1,2,...,K$}
        \State Run policy $\pi_{\theta_k}$ to generate a trajectory $\tau=(s_0,a_0,r_1,s_1,...,s_{T-1},a_{T-1},r_T)$
        \State Append $\tau$ to $R$ and discard the oldest one $\tau^-$ \Comment{FIFO strategy}
        \State Sample a random minibatch $\mathcal{B}$ from the trajectory replay buffer $R$
        \State Select the best action trajectory $\tau^*_k$ from the trajectory replay buffer and add it to $\mathcal{B}$
        \For{each trajectory $j=1,2,...,|\mathcal{B}|$:}
            \For{$t=0,1,...,T-1$}
                \State $G_t^j=\sum_{l=t+1}^T\gamma^{l-t-1}r_l^j$
            \EndFor
        \EndFor
        \State Compute advantage estimates $\hat{A}^{\pi_k}_t=G_t-V_\phi(s_t)$ \Comment{HP3O}
        \State \textcolor{blue}{Compute $V^{\tau^*_k}(s_t)$ using $\tau^*_k$ and advantage estimates $\hat{A}^{\pi_k}_t=G_t-V^{\tau^*_k}(s_t)$} \textcolor{blue}{\Comment{HP3O+}}
        \For{each epoch $e=1,2,...,E$}
            \State Compute the clipping loss Eq.~\ref{eq_2}
            \State Compute the mean square loss $\mathcal{L}^V(\phi)=-\frac{1}{T}\sum_{t=0}^{T-1}(G_t-V_\phi(s_t))^2$
            \State Update $\pi_{\theta_k}$ with $\nabla_{\theta}\mathcal{L}^{clip}(\theta)$ by Adam
            \State Update $V_{\phi_k}$ with $\nabla_{\phi}\mathcal{L}^V(\phi)$ by Adam 
        \EndFor
    \EndFor
    \State \Return $\pi_{\theta_K}$ and $V_{\phi_K}$
\end{algorithmic}
\end{algorithm}
Algorithm~\ref{alg:oppo-bat} shows the algorithm framework for HP3O and HP3O+ (blue line represents the only difference for HP3O+). We denote the actor and critic by $\theta\in\mathbb{R}^m$ and $\phi\in\mathbb{R}^n$ respectively such that the parameterized policy function is $\pi_\theta$ and the parameterized value function is $V_\phi=\mathbb{E}_{\tau_\sim\pi_\phi}[\sum_{l=t}^T\gamma^{l-t}r(s_l,a_l)|s_t]$. Denote by $\tau^*_k=\textnormal{argmax}_{\tau\in R}\sum_{t=0}^T\gamma^tr(s_t,a_t) $ the best action trajectory selected from the replay buffer $R$ at the current episode $k$.

In most existing off-policy algorithms, the size of the replay buffer is fixed with a large number to ensure that a diverse set of experiences is captured. With this approach, though the random minibatch sampling allows the agent to learn from past experience, a large-size replay buffer may cause significant data distribution drifts. Additionally, a large replay buffer means that it takes more time for the buffer to fill up, especially in environments requiring extensive exploration. Hence, we apply the FIFO strategy and discard old trajectories empirically to attenuate the issue (Line 4 in Algorithm~\ref{alg:oppo-bat}). The recently proposed off-policy PPO~\cite{meng2023off} indeed uses off-policy data, but it does not employ a trajectory buffer as we do. In our approach, the trajectory buffer is an essential component because it allows us to store and process complete sequences of state-action pairs (trajectories) rather than isolated transitions. This will preserve the temporal coherence and enhance stability.
Line 5 is to sample from the trajectory replay buffer $R$, which is different from the reuse of $N$ samples generated from prior policies in~\cite{queeney2021generalized}, where the past immediate sample trajectories were used without random sampling. We note that a replay buffer in the proposed algorithm enhances the agent's performance by providing access to a more diverse set of experiences and highlighting the most impactful trajectories.
Line 6 signifies the core part of HP3O as the best action trajectory $\tau^*_k$ indicates the best return starting from state $s_t$ within the buffer. 
Line 7 through Line 12 calculate the rewards to go for each time step $t$ in each trajectory and obtain the total reward to go at each time step over all trajectories. One may wonder how to calculate the return $G_t$ if trajectories have varying lengths in some environments. In this work, we store different lengths of trajectories directly in the buffer and do not pad them. This approach preserves the natural variation in trajectory lengths that can occur in different environments. Although the length differ, we still compare the returns of these trajectories to identify the best one while ensuring the comparison remains consistent and fair. Particularly, line 13 is a key step in the proposed HP3O+. $V^{\tau^*_k}(s_t) = \mathbb{E}_{\tau^*_k\sim\pi_k}[\sum_{l=t}^T\gamma^{l-t}r(s_l,a_l)|s_t]$ induced by the current best action trajectory $\tau^*_k$ sets the best state value among all trajectories from $R$. $\hat{A}^{\pi_k}_t$ in Line 13 signifies how much better the return $G_t$ is by taking action $a_t$ than the best value we have obtained most recently. Intuitively, this "encourages" the agent to improve its performance in the next step on top of $V^{\tau^*_k}(s_t)$. While $V^{\tau^*_k}(s_t)$ can be theoretically calculated as above, in practice, to make sure that there always exists a best value for use, $V^{\tau^*_k}(s_t)$ is calculated by using a norm distance between the current trajectory and best trajectories to ensure $V^{\tau_k}$ has the best return since $s_t$. If the reward to go from $s_t$ in the best trajectory is lesser, the current trajectory is used to replace the best one for $V^{\tau^*_k}(s_t)$ calculation. Please see the Appendix for more details about the data structures of the proposed algorithms.
\begin{remark}
    We remark on the sampling method adopted in this work to obtain the trajectories apart from the best trajectory for update. We begin by randomly sampling a set of trajectories from our trajectory buffer. This set is specifically designed to include the best action trajectory, with the remaining trajectories selected randomly from the buffer. From the set of trajectories obtained by random sampling, we then apply uniform sampling. The resulting minibatch is used for training. This approach balances leveraging high-performing trajectories while maintaining exploration across the broader trajectory space, helping to reduce the risk of overfitting. However, we recognize that assigning a score to trajectories based on the loss function could offer additional benefits. Prioritizing trajectories~\cite{hou2017novel} that result in higher losses could help the agent focus on challenging experiences, potentially improving learning efficiency by addressing areas where the policy requires more refinement. This could also help in stabilizing training by emphasizing learning from mistakes, thereby potentially reducing the variance in policy updates. In fact, integrating a prioritized experience replay (PER) strategy could be a promising direction for future work. 
\end{remark}

\section{Theoretical analysis}
This section presents a theoretical analysis of the proposed HP3O and HP3O+. 
We first derive a new policy improvement lower bound for HP3O and then present a different bound for HP3O+ to indicate the value penalty term. All proofs are deferred to the Appendix.
To incorporate prior policies in the policy improvement lower bound, we need to extend the conclusion in Lemma~\ref{lemma_1}, which quantifies the improvement for two consecutive policies. In~\cite{queeney2021generalized}, policies prior to the present policy $\pi_k$ in chronological order were used. However, in our study, this order has been broken due to the random sampling from the replay buffer, which motivates us to derive a relationship among the current, future, and prior policies independent of the chronological order. Before the main result, we first present an auxiliary technical lemma.
\begin{lemma}\label{lemma_4}
    Consider a current policy $\pi_k$, and any reference policy $\pi_r$. For any future policy $\pi$,
    \begin{equation}
        J(\pi)-J(\pi_k)\geq \frac{1}{1-\gamma}\mathbb{E}_{(s,a)\sim d^{\pi_r}}[\frac{\pi(a|s)}{\pi_r(a|s)}A^{\pi_k}(s,a)]-\frac{2\gamma C^\pi_{\pi_k}}{(1-\gamma)^2}\mathbb{E}_{s\sim d^{\pi_r}}[\delta(\pi,\pi_r)(s)],
    \end{equation}
where $C^\pi_{\pi_k}$ and $\delta(\pi,\pi_r)(s)$ are defined as in Lemma~\ref{lemma_1}.
\end{lemma}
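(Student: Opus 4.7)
The plan is to mimic the derivation of Lemma 1 (Achiam's corollary), but perform the change of measure from $d^{\pi}$ to an arbitrary reference $d^{\pi_r}$ rather than to $d^{\pi_k}$, while keeping the advantage baseline anchored at $\pi_k$. The key observation is that in the standard Kakade–Langford performance difference identity,
\begin{equation*}
J(\pi)-J(\pi_k)=\frac{1}{1-\gamma}\mathbb{E}_{(s,a)\sim d^{\pi}}\bigl[A^{\pi_k}(s,a)\bigr],
\end{equation*}
the advantage $A^{\pi_k}$ is tied to the baseline $\pi_k$, whereas the sampling distribution is $d^{\pi}$; so the reference policy $\pi_r$ is free to be chosen when we swap out $d^{\pi}$ for a tractable distribution.

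The first step is to rewrite the right-hand side by factoring the action expectation as $\mathbb{E}_{a\sim\pi_r(\cdot|s)}[\frac{\pi(a|s)}{\pi_r(a|s)}A^{\pi_k}(s,a)]$, i.e., doing importance sampling on the action conditional from $\pi$ to $\pi_r$. This leaves the state distribution $d^{\pi}$ to be handled separately. Defining
\begin{equation*}
f(s):=\mathbb{E}_{a\sim\pi(\cdot|s)}\bigl[A^{\pi_k}(s,a)\bigr],
\end{equation*}
we note that $\|f\|_\infty\le C^{\pi}_{\pi_k}$ by the definition of $C^{\pi}_{\pi_k}$ in Lemma 1. Then the performance difference equals $\tfrac{1}{1-\gamma}\mathbb{E}_{s\sim d^{\pi}}[f(s)]$, and the natural decomposition is
\begin{equation*}
\mathbb{E}_{s\sim d^{\pi}}[f(s)]=\mathbb{E}_{s\sim d^{\pi_r}}[f(s)]+\bigl(\mathbb{E}_{s\sim d^{\pi}}[f(s)]-\mathbb{E}_{s\sim d^{\pi_r}}[f(s)]\bigr),
\end{equation*}
where the first term is exactly the desired surrogate expression under $d^{\pi_r}$ (after unfolding the importance ratio), and the second term is what we must bound from below.

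The second step is to control the distribution–mismatch term by Hölder's inequality:
\begin{equation*}
\bigl|\mathbb{E}_{s\sim d^{\pi}}[f(s)]-\mathbb{E}_{s\sim d^{\pi_r}}[f(s)]\bigr|\le \|f\|_\infty \cdot \|d^{\pi}-d^{\pi_r}\|_1 \le C^{\pi}_{\pi_k}\cdot \|d^{\pi}-d^{\pi_r}\|_1.
\end{equation*}
The crucial ingredient here, which I would cite (or re-derive via a discounted telescoping over the Markov chains induced by $\pi$ and $\pi_r$), is the state-visitation TV bound
\begin{equation*}
\|d^{\pi}-d^{\pi_r}\|_1 \le \frac{2\gamma}{1-\gamma}\,\mathbb{E}_{s\sim d^{\pi_r}}\bigl[\delta(\pi,\pi_r)(s)\bigr].
\end{equation*}
Note that this lemma applies to \emph{any} pair of policies, not just to $(\pi,\pi_k)$, which is precisely why we are free to choose $\pi_r$ arbitrarily.

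Combining the two steps yields
\begin{equation*}
J(\pi)-J(\pi_k)\ge \frac{1}{1-\gamma}\mathbb{E}_{(s,a)\sim d^{\pi_r}}\!\left[\frac{\pi(a|s)}{\pi_r(a|s)}A^{\pi_k}(s,a)\right]-\frac{2\gamma C^{\pi}_{\pi_k}}{(1-\gamma)^2}\mathbb{E}_{s\sim d^{\pi_r}}\bigl[\delta(\pi,\pi_r)(s)\bigr],
\end{equation*}
which is the claim. The main (and essentially only) subtlety is keeping the three policies $(\pi_k,\pi_r,\pi)$ playing disjoint roles: $\pi_k$ remains the baseline inside $A^{\pi_k}$ and in the constant $C^{\pi}_{\pi_k}$; $\pi_r$ becomes the sampling distribution (replacing $\pi_k$'s role in Lemma 1); and $\pi$ is the future policy. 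Once this bookkeeping is done, the result is a one-line consequence of the performance-difference identity and the TV–visitation bound, with no need for a chronological relationship between $\pi_k$ and $\pi_r$—which is exactly the property HP3O needs when $\pi_r$ is an arbitrary past policy drawn from the trajectory buffer.
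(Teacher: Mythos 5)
Your proposal is correct and follows essentially the same route as the paper's proof: the Kakade--Langford performance-difference identity, an add-and-subtract of the $d^{\pi_r}$-expectation, H\"older's inequality with $\|f\|_\infty = C^{\pi}_{\pi_k}$, the TV--visitation bound $\|d^{\pi}-d^{\pi_r}\|_1 \le \frac{2\gamma}{1-\gamma}\mathbb{E}_{s\sim d^{\pi_r}}[\delta(\pi,\pi_r)(s)]$, and the action-conditional importance-sampling rewrite. The only (cosmetic) omission is that the paper explicitly states the support-containment assumption ($\mathrm{supp}\,\pi(\cdot|s)\subseteq\mathrm{supp}\,\pi_r(\cdot|s)$) needed for the ratio $\pi(a|s)/\pi_r(a|s)$ to be well defined, which your importance-sampling step implicitly requires.
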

\begin{remark}
Lemma~\ref{lemma_4} implies that now the visitation distribution, the probability ratio of the surrogate objective, and the maximum value of the total variation distance depend on the reference policy $\pi_r$, which essentially extends Lemma~\ref{lemma_1} to a more generalized case. However, the improvement is still for the two consecutive policies $\pi_k$ and $\pi$ as the advantage function in the surrogate objective and $C^\pi_{\pi_k}$ rely on the latest policy $\pi_k$. Lemma~\ref{lemma_4} does not necessarily require $\pi_r$ to be the last policy prior to $\pi_k$ as in~\cite{queeney2021generalized}, which paves the way for establishing the policy improvement for $|\mathcal{B}|$ prior policies sampled randomly from the replay buffer $R$. 
\end{remark}
\begin{theorem}\label{theorem_1}
    Consider prior policies $|\mathcal{B}|$ randomly sampled from the replay buffer $R$ with indices $i=0,1,...,|\mathcal{B}|-1$. For any distribution $v=[v_1,v_2,...,v_{|\mathcal{B}|}]$ over the $|\mathcal{B}|$ prior policies, and any future policy $\pi$ generated by HP3O in Algorithm~\ref{alg:oppo-bat}, the following relationship holds true
    \begin{equation}\label{eq_theo1}
        J(\pi)-J(\pi_k)\geq \frac{1}{1-\gamma}\mathbb{E}_{i\sim v}[\mathbb{E}_{(s,a)\sim d^{\pi_i}}[\frac{\pi(a|s)}{\pi_i(a|s)}A^{\pi_k}(s,a)]]-\frac{\gamma C^\pi_{\pi_k}\epsilon}{(1-\gamma)^2},
    \end{equation}
where
$C^\pi_{\pi_k}$ is defined as in Lemma~\ref{lemma_1}.
\end{theorem}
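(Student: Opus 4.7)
The plan is to reduce Theorem~\ref{theorem_1} to a convex combination of Lemma~\ref{lemma_4} applied separately to each prior policy $\pi_i$ drawn into the minibatch $\mathcal{B}$. Since the left-hand side $J(\pi)-J(\pi_k)$ does not depend on the index $i$, any affine combination with weights $v_i\ge 0$ summing to one preserves it, while the right-hand sides can be averaged to produce the nested expectations appearing in Eq.~\ref{eq_theo1}. This is clean because Lemma~\ref{lemma_4} was deliberately written for an arbitrary reference policy rather than the chronological predecessor of $\pi_k$.

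Concretely, I would first instantiate Lemma~\ref{lemma_4} with $\pi_r = \pi_i$ for every $i \in \{0,1,\ldots,|\mathcal{B}|-1\}$, producing $|\mathcal{B}|$ inequalities, each of the form
\begin{equation*}
J(\pi)-J(\pi_k)\geq \frac{1}{1-\gamma}\mathbb{E}_{(s,a)\sim d^{\pi_i}}\!\Bigl[\tfrac{\pi(a|s)}{\pi_i(a|s)}A^{\pi_k}(s,a)\Bigr]-\frac{2\gamma C^\pi_{\pi_k}}{(1-\gamma)^2}\mathbb{E}_{s\sim d^{\pi_i}}[\delta(\pi,\pi_i)(s)].
\end{equation*}
Multiplying the $i$th inequality by $v_i$ and summing, linearity of expectation collapses the first right-hand term into exactly $\frac{1}{1-\gamma}\mathbb{E}_{i\sim v}\!\bigl[\mathbb{E}_{(s,a)\sim d^{\pi_i}}[\tfrac{\pi(a|s)}{\pi_i(a|s)}A^{\pi_k}(s,a)]\bigr]$, matching the surrogate term in the theorem, and leaves behind an averaged penalty $\frac{2\gamma C^\pi_{\pi_k}}{(1-\gamma)^2}\mathbb{E}_{i\sim v}\!\bigl[\mathbb{E}_{s\sim d^{\pi_i}}[\delta(\pi,\pi_i)(s)]\bigr]$ that must still be controlled.

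The main obstacle, and the step I expect to require the most care, is replacing this averaged total-variation penalty with the clean constant $\frac{\gamma C^\pi_{\pi_k}\epsilon}{(1-\gamma)^2}$. Here I would invoke the PPO clipping heuristic from Eq.~\ref{eq_2}, reinterpreted in the off-policy setting of Algorithm~\ref{alg:oppo-bat} as constraining each importance ratio $\pi(a|s)/\pi_i(a|s)$ to $[1-\epsilon,1+\epsilon]$ on the support of $\pi_i$. This gives the pointwise bound $|\pi(a|s)-\pi_i(a|s)|\le \epsilon\,\pi_i(a|s)$, hence
\begin{equation*}
\delta(\pi,\pi_i)(s)=\tfrac{1}{2}\int|\pi(a|s)-\pi_i(a|s)|\,da\le \tfrac{\epsilon}{2}.
\end{equation*}
Taking expectation first over $s\sim d^{\pi_i}$ and then over $i\sim v$ preserves the bound $\epsilon/2$, so the factor of two in Lemma~\ref{lemma_4} cancels exactly and yields the stated penalty. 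The subtle point I would flag explicitly is that this clipping bound must hold uniformly for every $\pi_i$ in $\mathcal{B}$, not only for the immediate predecessor as in the standard PPO-clip formulation; I would justify this by appealing to the per-trajectory clipping computed in line~14 of Algorithm~\ref{alg:oppo-bat}, where the ratio is formed against whichever behavior policy produced the sampled trajectory.
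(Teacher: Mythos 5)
Your proposal follows essentially the same route as the paper's proof: instantiate Lemma~\ref{lemma_4} with $\pi_r=\pi_i$ for each sampled prior policy, take the $v$-weighted convex combination of the resulting inequalities, and bound each total-variation penalty by $\epsilon/2$ via the clipping heuristic (multiply and divide by $\pi_i(a|s)$ inside the TV integral) so that the factor of two cancels. If anything, your explicit remark that the clipping bound must hold uniformly for every $\pi_i$ in $\mathcal{B}$ is slightly more careful than the paper, which derives the $\epsilon/2$ bound only for $\pi_k$ and implicitly reuses it for each $\pi_i$.
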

\begin{remark}\label{remark_3}
It is observed that the conclusion from Theorem~\ref{theorem_1} is similar to one of the main results in~\cite{queeney2021generalized}. The significant difference is that $\pi_i$ is not the same as $\pi_{k-i}$ in~\cite{queeney2021generalized}. It is technically attributed to Lemma~\ref{lemma_4}, where the reference policy $\pi_r$ may not have a close temporal relationship with $\pi_k$. Also, the advantage function has not been changed yet. Empirically speaking, for each minibatch $\mathcal{B}$, we have added the best trajectory in it, which essentially expedites the learning process. Additionally, Theorem~\ref{theorem_1} has an extra expectation operator over multiple trajectories on the first term of the right side in Eq.~\ref{eq_theo1}, leading to the smaller variance, compared to only one trajectory in Lemma~\ref{lemma_1}. We would also like to point out that Theorem~\ref{theorem_1} shows the policy improvement lower bound by sampling a mini-batch of trajectories associated with prior policies from the buffer, which is consistent with what has been done in Algorithm~\ref{alg:oppo-bat}. In HP3O+, we use it as a baseline to replace $V_\phi(s)$ and have surprisingly found that this leads to an extra term that penalizes the state value to reduce the variance. 
\end{remark}
We first define $\hat{A}^{\pi}(s,a)=Q^{\pi}(s,a)-V^{\pi^*}(s)$ and $G^\pi(s)=V^{\pi^*}(s)-V^{\pi}(s)$. It is immediately obtained that $A^\pi(s,a) = \hat{A}^\pi(s,a)+G^\pi(s)$. Hence, if we utilize the state value induced by the best trajectory at the moment as the baseline, there exists a \textit{value gap}  $G^\pi(s)$ between $A^\pi(s,a)$ and $\hat{A}^{\pi}(s,a)$. One may argue that the advantage $\hat{A}^{\pi}(s,a)$ is negative all the time, which implies the present action is not favorable such that the new policy should be changed to yield a lower probability for the current action and state. However, this is not always true as $V^{\pi^*}(s)$ is not the \textit{globally} optimal value, while it is approximately the optimal value up to the current time step over the last $|\mathcal{B}|$ episodes. The motivation behind $\hat{A}^{\pi}(s,a)$ is that the new baseline $V^{\pi^*}(s)$ becomes the driving force to facilitate the performance improvement between every update. We are now ready to state the policy improvement lower bound with the new baseline as follows.
\begin{lemma}\label{lemma_5}
    Consider a current policy $\pi_k$, and any reference policy $\pi_r$. For any future policy $\pi$,
\begin{equation}
\begin{split}
    J(\pi)-J(\pi_k)&\geq \frac{1}{1-\gamma}\mathbb{E}_{(s,a)\sim d^{\pi_r}}[\frac{\pi(a|s)}{\pi_r(a|s)}\hat{A}^{\pi_k}(s,a)]-\frac{2\gamma \hat{C}^\pi_{\pi_k}}{(1-\gamma)^2}\mathbb{E}_{s\sim d^{\pi_r}}[\delta(\pi,\pi_r)(s)]\\&-\frac{2\gamma C^{\pi_k}}{(1-\gamma)^2}\mathbb{E}_{s\sim d^{\pi_r}}[\delta(\pi,\pi_r)(s)],
\end{split}
\end{equation}
where $\hat{C}^\pi_{\pi_k}=\textnormal{max}_{s\in\mathcal{S}}|\mathbb{E}_{a\sim\pi(\cdot|s)}[\hat{A}^{\pi_k}(s,a)]|$, $\delta(\pi,\pi_r)(s)$ is defined as in Lemma~\ref{lemma_1}, $C^{\pi_k}=\textnormal{max}_{s\in\mathcal{S}}|V^{\pi^*_k}(s)-V^{\pi_k}(s)|$. 
\end{lemma}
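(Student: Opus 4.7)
My plan is to reduce the claim to Lemma~\ref{lemma_4} via the pointwise identity $A^{\pi_k}(s,a) = \hat{A}^{\pi_k}(s,a) + G^{\pi_k}(s)$, which follows immediately from the definitions $\hat{A}^{\pi_k}(s,a) = Q^{\pi_k}(s,a) - V^{\pi^*_k}(s)$ and $G^{\pi_k}(s) = V^{\pi^*_k}(s) - V^{\pi_k}(s)$. Substituting this identity into the importance-sampled surrogate and into the constant $C^\pi_{\pi_k}$ in Lemma~\ref{lemma_4} should reproduce exactly the two TV-distance penalty terms in the claim, up to a non-negative residual that I would then discharge using the ``best-trajectory'' property of $\pi^*_k$.

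First I would split the right-hand side of Lemma~\ref{lemma_4}. Because $G^{\pi_k}$ depends only on $s$, the importance ratio marginalizes out:
\begin{equation*}
\mathbb{E}_{(s,a)\sim d^{\pi_r}}\!\left[\frac{\pi(a|s)}{\pi_r(a|s)} G^{\pi_k}(s)\right] = \mathbb{E}_{s\sim d^{\pi_r}}\!\left[G^{\pi_k}(s)\, \mathbb{E}_{a\sim\pi_r(\cdot|s)}\!\left[\frac{\pi(a|s)}{\pi_r(a|s)}\right]\right] = \mathbb{E}_{s\sim d^{\pi_r}}[G^{\pi_k}(s)],
\end{equation*}
so the surrogate decomposes cleanly as the $\hat{A}^{\pi_k}$-surrogate plus $\frac{1}{1-\gamma}\mathbb{E}_{s\sim d^{\pi_r}}[G^{\pi_k}(s)]$. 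Second, I would bound $C^\pi_{\pi_k}$ using the triangle inequality: $|\mathbb{E}_{a\sim\pi(\cdot|s)}[A^{\pi_k}(s,a)]| \leq |\mathbb{E}_{a\sim\pi(\cdot|s)}[\hat{A}^{\pi_k}(s,a)]| + |G^{\pi_k}(s)|$, so taking the maximum over $s$ gives $C^\pi_{\pi_k} \leq \hat{C}^\pi_{\pi_k} + C^{\pi_k}$, and since the TV distance is non-negative this loosens the penalty in Lemma~\ref{lemma_4} into a sum of two terms with coefficients $\hat{C}^\pi_{\pi_k}$ and $C^{\pi_k}$ respectively. Substituting both results back into Lemma~\ref{lemma_4} reproduces the two penalty terms in the claim, plus an extra additive term $\frac{1}{1-\gamma}\mathbb{E}_{s\sim d^{\pi_r}}[G^{\pi_k}(s)]$ on the right-hand side.

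Finally, I would discharge the residual by arguing that $\tau^*_k$ is by construction the highest-return trajectory in $R$, so $V^{\pi^*_k}$ dominates $V^{\pi_k}$ on the support of $d^{\pi_r}$, hence $\mathbb{E}_{s\sim d^{\pi_r}}[G^{\pi_k}(s)] \geq 0$; dropping this non-negative term preserves the inequality and yields the stated bound. The main obstacle lies in precisely this last step: because $\tau^*_k$ is only the best among a finite FIFO window rather than a globally optimal policy, pointwise dominance $V^{\pi^*_k} \geq V^{\pi_k}$ is really an idealization, so a fully rigorous version may need either an explicit standing hypothesis on the buffer contents, or retention of the residual as a positive additive slack that can be absorbed downstream into the HP3O+ analysis. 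The rest of the derivation is essentially bookkeeping once Lemma~\ref{lemma_4} is in hand.
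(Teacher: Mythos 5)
Your proposal matches the paper's proof: the paper uses the same decomposition $A^{\pi_k}(s,a)=\hat{A}^{\pi_k}(s,a)+G^{\pi_k}(s)$, re-runs the Lemma~\ref{lemma_4} argument on each piece to obtain the two penalty terms (equivalent to your triangle-inequality bound $C^\pi_{\pi_k}\le \hat{C}^\pi_{\pi_k}+C^{\pi_k}$), and arrives at exactly the same residual $\frac{1}{1-\gamma}\mathbb{E}_{s\sim d^{\pi_r}}[G^{\pi_k}(s)]$. The final step you flag as the weak point is precisely where the paper is also thinnest: it discards the residual with the one-line remark that $\textnormal{min}_{s\in\mathcal{S}}|V^{\pi_k^*}(s)-V^{\pi_k}(s)|=0$, which is no more rigorous than the pointwise-dominance idealization you propose and would equally benefit from an explicit hypothesis on $V^{\pi^*_k}\ge V^{\pi_k}$.
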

With Lemma~\ref{lemma_5} in hand, we have another main result in the following.
\begin{theorem}\label{theorem_2}
    Consider prior policies $|\mathcal{B}|$ randomly sampled from the replay buffer $R$ with indices $i=0,1,...,|\mathcal{B}|-1$. For any distribution $v=[v_1,v_2,...,v_{|\mathcal{B}|}]$ over the $|\mathcal{B}|$ prior policies, and any future policy $\pi$ generated by HP3O+ in Algorithm~\ref{alg:oppo-bat}, the following relationship holds true
    \begin{equation}
        J(\pi)-J(\pi_k)\geq \frac{1}{1-\gamma}\mathbb{E}_{i\sim v}[\mathbb{E}_{(s,a)\sim d^{\pi_i}}[\frac{\pi(a|s)}{\pi_i(a|s)}\hat{A}^{\pi_k}(s,a)]]-\frac{\gamma \hat{C}^\pi_{\pi_k}\epsilon}{(1-\gamma)^2}-\frac{\gamma C^{\pi_k}\epsilon}{(1-\gamma)^2},
    \end{equation}
where
$\hat{C}^\pi_{\pi_k}$ and $C^{\pi_k}$ are defined as in Lemma~\ref{lemma_5}.
\end{theorem}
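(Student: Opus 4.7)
The plan is to mirror the derivation of Theorem~\ref{theorem_1} but start from the stronger one-step bound in Lemma~\ref{lemma_5} (which already carries the extra $C^{\pi_k}$ penalty produced by the new baseline $V^{\pi^*}(s)$) rather than from Lemma~\ref{lemma_4}. The high-level structure is: (i) instantiate Lemma~\ref{lemma_5} once per sampled reference policy, (ii) average the inequalities over the distribution $v$ on the minibatch of prior policies, and (iii) collapse both total-variation penalty terms using the HP3O+ clipping constraint exactly as was done for HP3O in Theorem~\ref{theorem_1}.

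Concretely, first I would fix an index $i\in\{0,1,\dots,|\mathcal{B}|-1\}$ and set $\pi_r=\pi_i$ in Lemma~\ref{lemma_5}, which is legal because the lemma imposes no chronological relation between the reference policy and $\pi_k$ (this is precisely the flexibility that Lemma~\ref{lemma_4} buys us, inherited by Lemma~\ref{lemma_5}). This yields, for each $i$,
\begin{equation*}
J(\pi)-J(\pi_k)\;\ge\;\tfrac{1}{1-\gamma}\mathbb{E}_{(s,a)\sim d^{\pi_i}}\!\bigl[\tfrac{\pi(a|s)}{\pi_i(a|s)}\hat{A}^{\pi_k}(s,a)\bigr]-\tfrac{2\gamma\hat{C}^{\pi}_{\pi_k}}{(1-\gamma)^2}\mathbb{E}_{s\sim d^{\pi_i}}[\delta(\pi,\pi_i)(s)]-\tfrac{2\gamma C^{\pi_k}}{(1-\gamma)^2}\mathbb{E}_{s\sim d^{\pi_i}}[\delta(\pi,\pi_i)(s)].
\end{equation*}
Since the left-hand side is independent of $i$, taking $\mathbb{E}_{i\sim v}$ of both sides preserves it and turns the first term on the right into the desired nested expectation $\mathbb{E}_{i\sim v}\mathbb{E}_{(s,a)\sim d^{\pi_i}}[\cdot]$.

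Next I would handle the two penalty terms. For every sampled $\pi_i$, the clipping heuristic used to generate $\pi$ constrains the importance ratio to lie in $[1-\epsilon,1+\epsilon]$ on states drawn from $d^{\pi_i}$. A direct computation of total variation then gives $\delta(\pi,\pi_i)(s)=\tfrac{1}{2}\int \pi_i(a|s)\bigl|\tfrac{\pi(a|s)}{\pi_i(a|s)}-1\bigr|\,da\le \epsilon/2$, whence $\mathbb{E}_{s\sim d^{\pi_i}}[\delta(\pi,\pi_i)(s)]\le \epsilon/2$. This is exactly the step already used in the proof of Theorem~\ref{theorem_1}, and it applies uniformly in $i$, so after taking $\mathbb{E}_{i\sim v}$ the two penalty terms become $\tfrac{2\gamma\hat{C}^{\pi}_{\pi_k}}{(1-\gamma)^2}\cdot\tfrac{\epsilon}{2}=\tfrac{\gamma\hat{C}^{\pi}_{\pi_k}\epsilon}{(1-\gamma)^2}$ and $\tfrac{\gamma C^{\pi_k}\epsilon}{(1-\gamma)^2}$, matching the statement of the theorem.

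The main obstacle I anticipate is not in the algebra but in justifying the uniform TV bound when the reference policies in $\mathcal{B}$ are chronologically unrelated to $\pi_k$: HP3O+ clips ratios between the candidate $\pi$ and each sampled $\pi_i$ rather than only between consecutive policies, so one must argue that the clipping objective implemented in Algorithm~\ref{alg:oppo-bat} effectively enforces the constraint for each $\pi_i$ in the minibatch (and hence under the distribution $v$). Once this is made precise, the remainder of the argument is a routine average-of-inequalities. The $\hat{A}^{\pi_k}$ and $G^{\pi_k}$ decomposition $A^{\pi_k}=\hat{A}^{\pi_k}+G^{\pi_k}$ used inside Lemma~\ref{lemma_5} has already absorbed the value-gap contribution, so no additional decomposition is needed here; Theorem~\ref{theorem_2} will follow by combining the averaged first term with the two $\epsilon/2$-bounded penalty contributions.
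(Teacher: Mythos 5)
Your proposal is correct and follows essentially the same route as the paper: the paper's proof of Theorem~\ref{theorem_2} simply instantiates Lemma~\ref{lemma_5} with each sampled reference policy $\pi_i$, averages over the distribution $v$, and reuses the total-variation bound $\mathbb{E}_{s\sim d^{\pi_i}}[\delta(\pi,\pi_i)(s)]\le\epsilon/2$ established in the proof of Theorem~\ref{theorem_1}. The caveat you raise about justifying the clipping-induced TV bound uniformly over chronologically unrelated $\pi_i$ is a real subtlety, but the paper accepts it implicitly in Theorem~\ref{theorem_1} as well, so your treatment matches the paper's level of rigor.
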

\begin{remark}\label{remark_4}
Theorem~\ref{theorem_2} describes the policy improvement lower bound for HP3O+, which provides the theoretical guarantees when reusing trajectories generated by prior policies rigorously. The extra term on the right-hand side $\frac{\gamma C^{\pi_k}\epsilon}{(1-\gamma)^2}$ in the above inequality is not the penalty term between two policies, while it is a value gap between the current state value and the most recent best value. As $V^{\pi^*_k}(s)$ is time-varying, this acts as a "guide" to the current one $V^{\pi_k}$ not deviating too far away from $V^{\pi^*_k}(s)$. Equivalently, the term $\frac{\gamma C^{\pi_k}\epsilon}{(1-\gamma)^2}$ can be regarded as a regularization from the critic network, which assists in enhancing the overall agent performance and reducing the variance. We also include some technical discussion regarding whether our approach will cause overfitting and the adoption of the worst trajectories in~\ref{overfitting} and~\ref{worst_trajectories}.
\end{remark}

\begin{remark}\label{remark_5}
The proposed HP3O algorithm and its variant have resorted to data randomly sampled from multiple policies in the training batch $\mathcal{B}$ that is prior to $\pi_k$ for the policy update. Thus, there exist $\textit{multiple}$ updates compared to the vanilla PPO, which only makes one policy update from $\pi_k$ to $\pi_{k+1}$. 
In this study, we aim to show how the off-policy sample reuse significantly affects the original sample efficiency PPO has. Though the direct sample complexity improvement analysis can be significantly beneficial to provide a solid theoretical foundation for the proposed algorithms, a thorough investigation of this aspect is out of the scope of this study. For instance, to arrive at an $\varepsilon$-optimality for policy gradient-based algorithms, a few works~\cite{zhong2024theoretical,zanette2021cautiously,dai2023refined,sherman2023improved} have revealed the exact complexity with respect to $\varepsilon$, but only for MDPs with linear function approximation. The exact sample complexity analysis for the off-policy PPO algorithm with nonlinear function approximation still remains extremely challenging and requires a substantial amount of non-trivial efforts. Therefore, in this paper, we instead disclose the impact of off-policy sample reuse on the tradeoff between sample efficiency and learning stability. Please see~\ref{sample_efficiency_analysis} for more details. Additionally, we also present a theoretical result in~\ref{hp3o_hp3o+} to reveal that HP3O+ increases updates in the total variational distance of the policy throughout training, given the same sample size, when it is compared to HP3O.
\end{remark}


\section{Experiments}
The experimental evaluation aims to understand how the sample complexity and stability of our proposed algorithms compare with existing baseline on-policy and off-policy learning algorithms. Concretely, we conduct the comparison between our methods and prior approaches across challenging continuous control environments from the Gymnasium benchmark suite~\cite{brockman2016openai}. While easy control tasks can be solved by various algorithms, the more complex tasks are typically sample intensive with on-policy algorithms~\cite{schulman2017proximal}. Additionally, the high variance of the algorithms negatively impacts stability and convergence. Furthermore, though some off-policy algorithms enjoy high sample efficiency, the actual run time can be impractically large, which impedes its applications to real-world tasks. As our proposed hybrid-policy learning algorithms are developed on top of PPO, we mainly compare our methods to PPO, another popular on-policy method A2C~\cite{peng2018adversarial}, and three other relevant off-policy PPO approaches, including P3O~\cite{chen2023sufficiency} (a modification of PPO to leverage both on- and off-policy principles), GEPPO~\cite{queeney2021generalized}, and Off-policy PPO (abbreviated as OffPolicy)~\cite{meng2023off}. We acknowledge that SAC, a fully off-policy algorithm, may achieve comparatively higher returns in most of the continuous control problems at the expense of much longer training time and with careful hyperparameter tuning. Hence, we also compare with SAC in terms of variance reduction and run time complexity. 
As shown in Table~\ref{table:comparison}, there are other off-policy versions of PPO, such as Policy-on-off PPO~\cite{fakoor2020p3o}. However, the corresponding code base lacks a complete implementation to reproduce their results, which is evident in their code where the actor head for Mujoco environments is not implemented. Moreover, making the code functional for our purpose would require extensive effort, as it is built on MXNet, a deprecated open-source project. The above limitations have prevented us from performing head-to-head comparisons.
\subsection{Comparative evaluation}

Figure~\ref{Mean_Variance_Comparison} shows the total average return during training for A2C, PPO, P3O, GEPPO, OffPolicy, HP3O, and HP3O+. Each experiment includes five different runs with various random seeds. The solid curves indicate the mean, while the shaded areas represent the standard deviation over the five runs.
Clearly, the results show that, overall, both HP3O+ and HP3O are comparable to or outperform all baselines across diverse tasks with smaller variances, which supports our theoretical claims. For instance, in the HalfCheetah environment, our methods demonstrate a sharper average slope compared to the baseline, particularly in the later stages of training, where other baselines show a more flattened curve. This indicates that our method continues to learn effectively with fewer samples. In the Hopper environment, P3O performs slightly better than HP3O but at the cost of extremely large reward variance, indicating an unstable training process. However, HP3O+ significantly dominates in the latter phase with a much smaller variance. In the Swimmer environment, while A2C and P3O learn slowly and make almost no progress, HP3O and HP3O+ achieve the similarly highest reward with very low variance, as suggested by Remark~\ref{remark_3}. Notably, OffPolicy ranks second in terms of performance, but with the cost of extremely high variance. 
Additionally, OffPolicy shows notably strong performance in the Walker environment. This is primarily attributed to the adoption of a new clipped surrogate that iteratively resorts to off-policy data to progress during training.
Generally, our proposed methods learn more stably than all baselines by dequeuing the buffer to suppress the instability caused by data distribution drift in most environments. 
Overall, HP3O+ excels HP3O in most environments, with also variance reduction particularly in the latter training phase.
As the learning trajectories are always around the best trajectory from the buffer. Essentially, the empirical evidence supports our theoretical results in Theorem~\ref{theorem_2} and Theorem~\ref{theorem_5}, which show that HP3O+ enables larger updates in the total variational distance of the policy, given the same number of changes to the policy.
Additional results are included in the Appendix, including Table~\ref{tab:mean_std_summary} to showcase rewards at or close to the converged stage.

\begin{figure}[h]
    \centering
    \includegraphics[width= 0.95\textwidth]{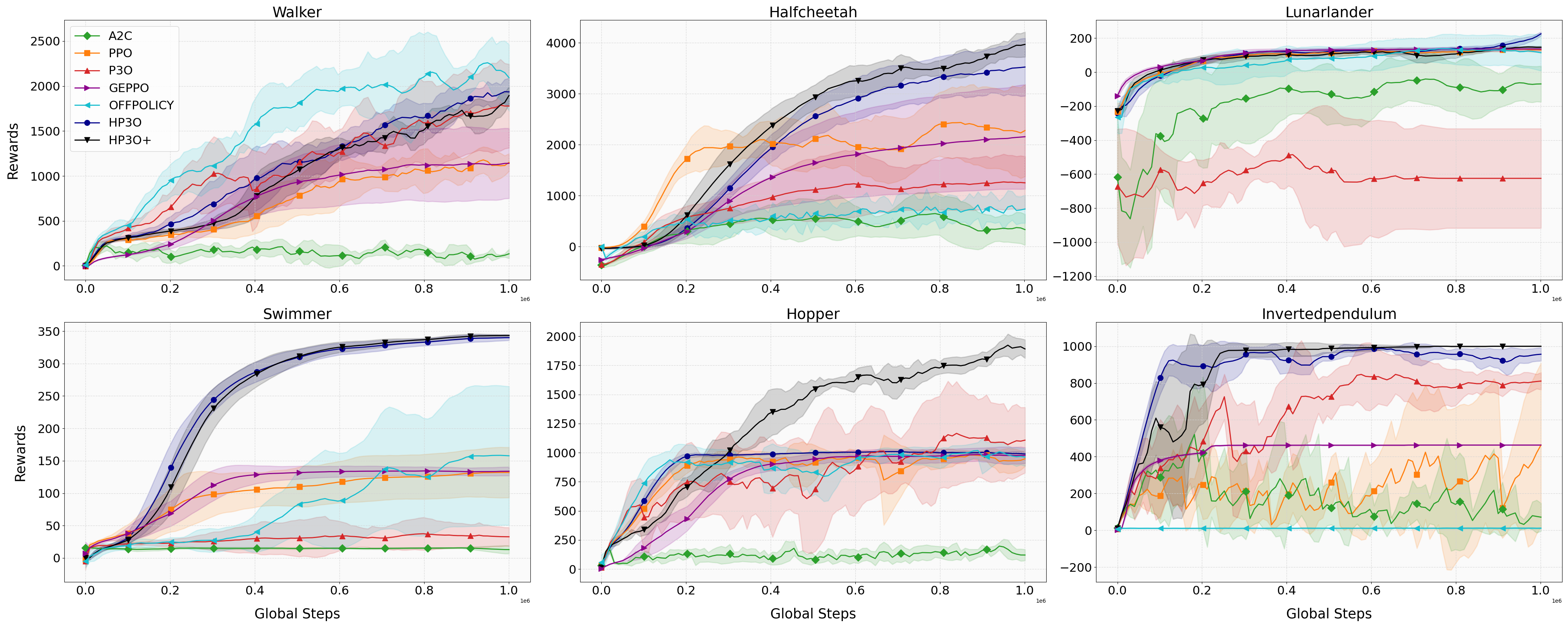}
    \caption{Training curves (over 1M steps) on continuous control benchmarks. HP3O+ (black) performs consistently across all tasks and is comparable to or outperforming other baseline methods.}
    \label{Mean_Variance_Comparison}
\end{figure}

\begin{figure}[h!]
    \centering
    \begin{subfigure}[b]{0.57\textwidth}
        \centering
        \includegraphics[width=\textwidth]{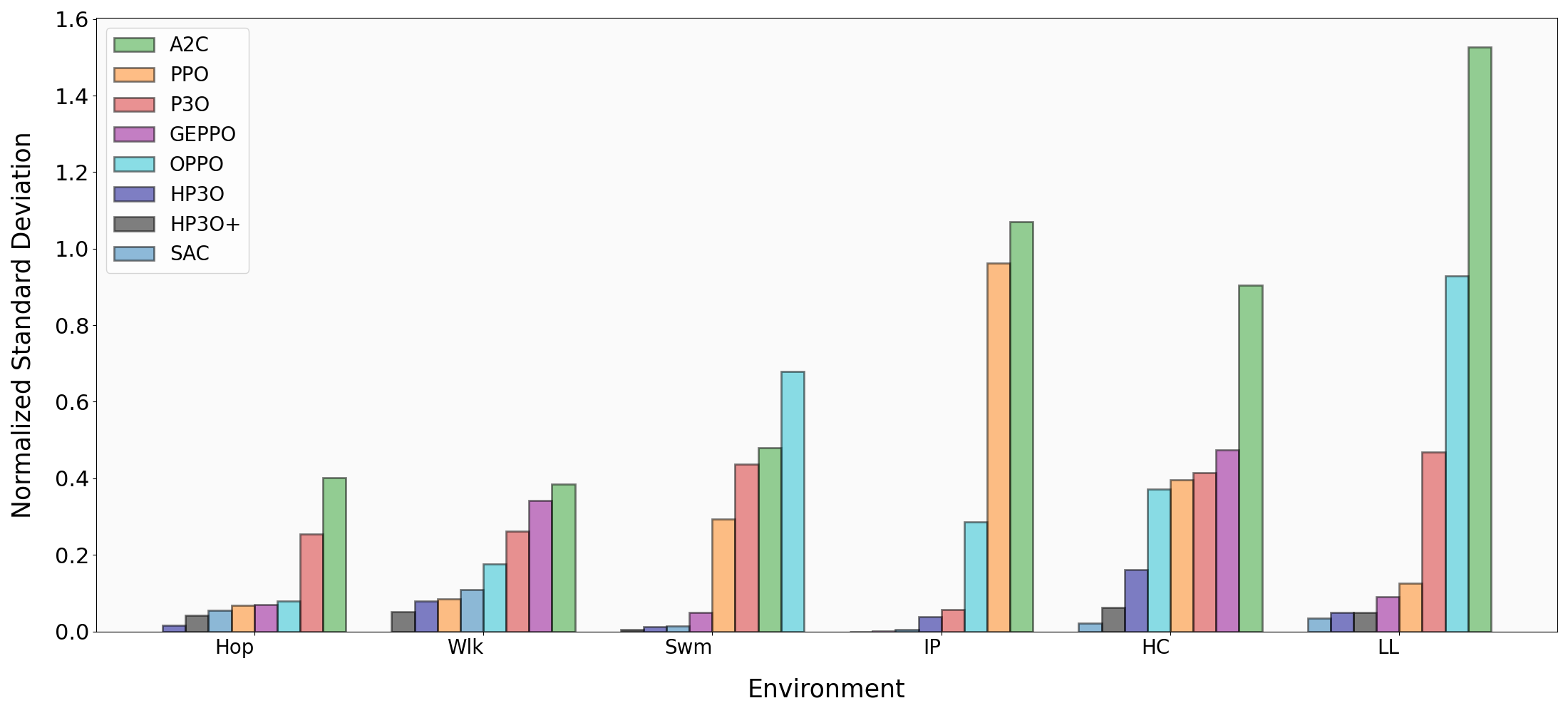}
        \caption{Normalized Standard Deviation among different methods for various environments.}
        \label{fig:SAC}
    \end{subfigure}
    \hspace{0.01\textwidth} 
    \begin{subfigure}[b]{0.385\textwidth}
        \centering
        \includegraphics[width=\textwidth]{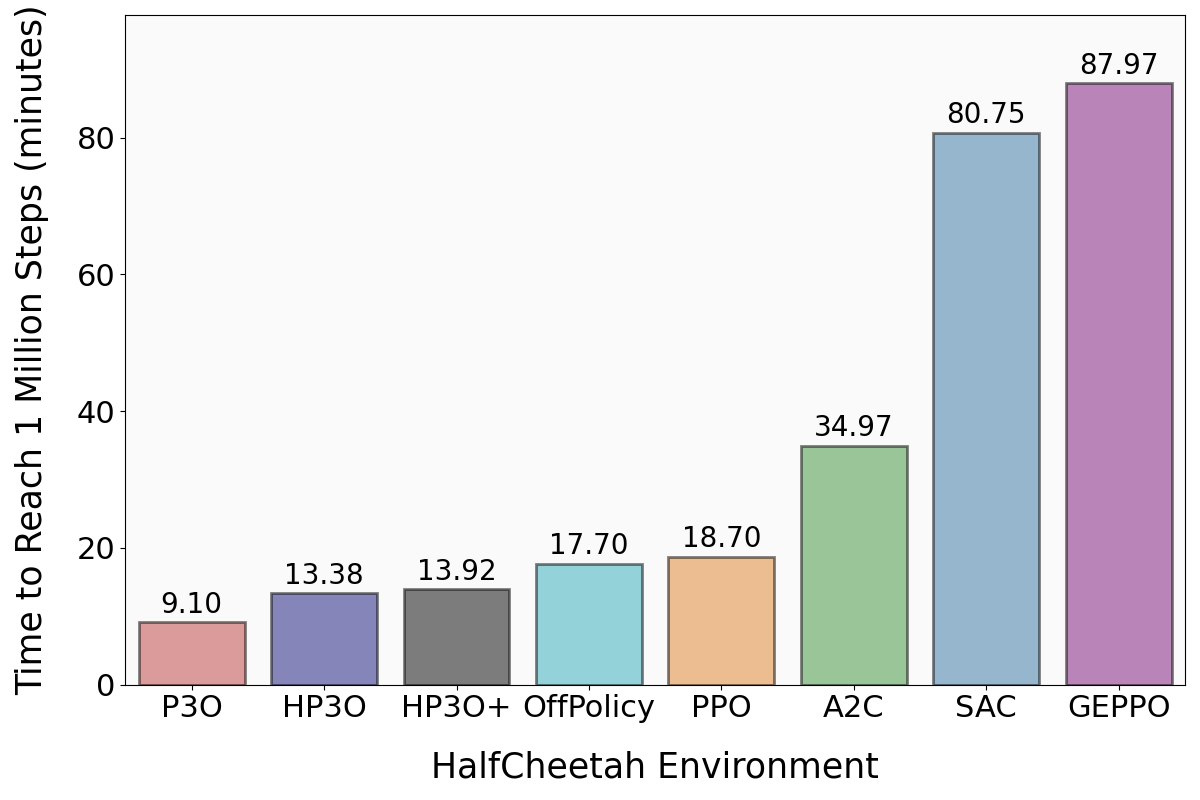}
        \caption{Runtime for HalfCheetah Environment among different methods}
        \label{fig:run_time}
    \end{subfigure}
    \caption{Comparison of Normalized Standard Deviation and Runtime for 1 million steps.}
    \label{fig:SAC_Comparison}
\end{figure}
\subsection{Ablation study}
The experimental results in the previous section imply that algorithms based on the hybrid-policy approach can outperform the conventional on-policy methods on challenging control tasks. In this section, we further compare all policy optimization algorithms to SAC for variance reduction and run time complexity. We also inspect the robustness of the algorithms against variations of trajectories.  


\textbf{Variance.} Figure~\ref{fig:SAC} shows the comparison of the relative standard deviation of the ultimate average return (at 1M steps) for different algorithms. It suggests that, on average, HP3O+ achieves the lowest relative standard deviation (which is the ratio of the standard deviation to the average reward over five runs at the last step). This implies that hybrid-policy algorithms have more advantages in regularizing the learning process to maintain stability compared to typical on-policy algorithms. Intuitively, as the policy and environment change over time, the use of replay buffers helps mitigate this issue by providing a more stationary training dataset. The buffer contains a mix of experiences collected under different policies, instead of the only current policy from PPO, which helps
in reducing the variance in updates. SAC attains a relatively small standard deviation according to Figure~\ref{fig:SAC} (also, on average, the maximum reward reported in the Appendix). This is not surprising since the maximum entropy principle can significantly help meaningful exploration to achieve the highest return. However, this comes at the cost of runtime complexity.

\textbf{Run time complexity.} As shown in Figure~\ref{fig:run_time}, the run time for all algorithms is presented (all methods are implemented with the same hardware). Both GEPPO and SAC require much more run time to explore and then converge, which may impede its applications to solving real-world problems. P3O achieves the lowest run time complexity while performing worse than HP3O and HP3O+. However, our proposed approaches take approximately the same training time as PPO but with higher sample efficiency, as shown in Figure~\ref{Mean_Variance_Comparison}. Thus, HP3O/HP3O+ are able to achieve a desirable trade-off in practice between sample efficiency and computational time. These experiments used a local machine with an NVIDIA RTX 4090. Additional results regarding wall-clock time for diverse methods to reach a certain reward are included in~\ref{computational_efficiency}.

\textbf{Robustness.} We also compute the \textit{explained variance}~\cite{lahuis2014explained} for all algorithms under consideration for evaluating robustness. Please check the~\ref{additional_results} for more details about this metric. Intuitively, it quantifies how good a model is to explain the variations in the data. Therefore, the higher the explained variance of a model, the more the model is able to explain the variations in trajectories. Essentially, the data in this work are trajectories produced by different policies, leading to a data distribution drift. 
Therefore, explained variance can, to some extent, be viewed as an indicator of how well an algorithm is robust against the data distribution drift.
Figure~\ref{fig:Explained Variance HalfCheetah} shows the explained variances for HP3O and PPO in the HalfCheetah environment for five different runs with different random seeds. HP3O has the highest explained variance over all runs suggesting that it is more robust against the variations of trajectories during learning. While for PPO, its explained variance can reach large negative values during training, which indicates the training instability when the trajectories vary significantly.


\begin{figure}[h]
    \centering
    \includegraphics[width=0.8\textwidth]{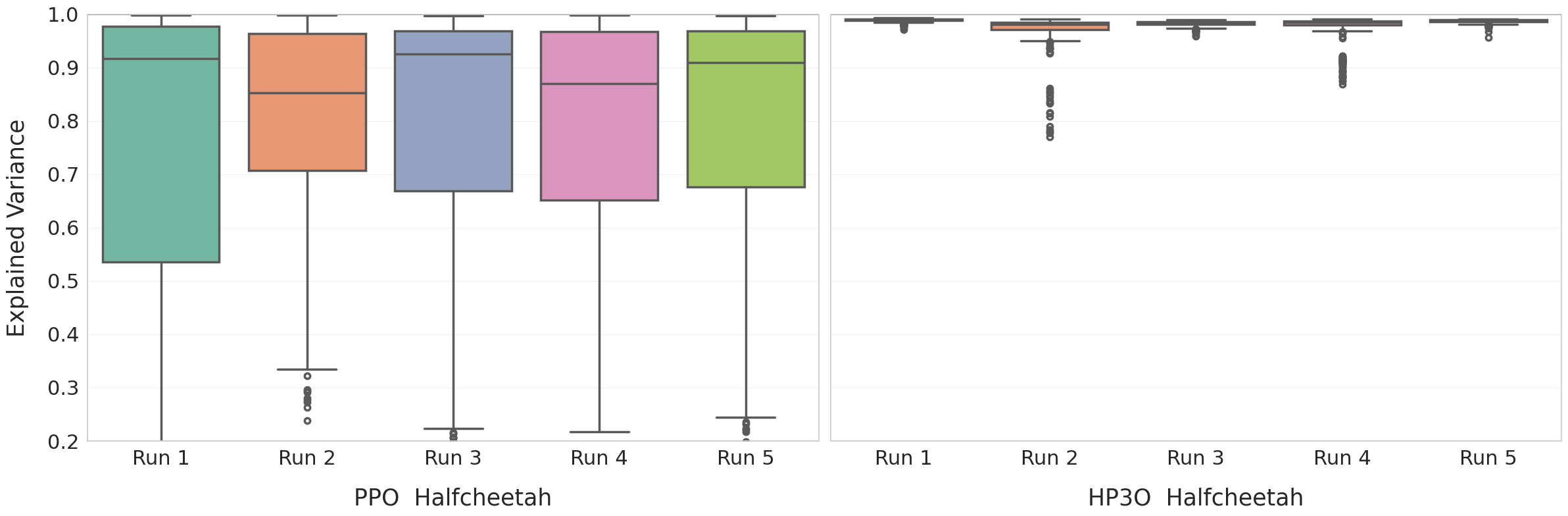}
    \caption{Explained Variance for HalfCheetah for PPO and HP3O}
    \label{fig:Explained Variance HalfCheetah}
\end{figure}



\subsection{Limitations}
Though theoretical and empirical results have shown that the proposed HP3O outperforms the popular baseline PPO over diverse control tasks, some limitations need to be discussed for potential improvement in the future. First, HP3O/HP3O+ require more hyperparameter tuning for the trajectory replay buffer, which can impact model performance compared to PPO. It has been acknowledged that hyperparameter tuning is critical for reinforcement learning such that for the hardest benchmarks, the already narrow basins of effective hyperparameters may become prohibitively small for our proposed algorithms, leading to poor performance. Second, in sparse reward environments, dequeuing the trajectory replay buffer can result in insufficient learning. Unlike the traditional replay buffer, which stores all experiences, our design requires the buffer to discard old trajectories so that the potential data distribution drift can be alleviated. This may cause a problem that good trajectories may only be learned once. Thus, the tradeoff between data distribution drift and learning frequency for the buffer needs to be investigated more in future work. Finally, there remains substantial room for performance improvement for the proposed algorithms compared to SAC. Further work in algorithm design is required to ensure HP3O/HP3O+ is on par with SAC but with low variance. The current ones can be regarded as one of the first steps toward bridging the gap between on-policy and off-policy methods.


\section{Conclusion}
In this work, we presented a novel hybrid-policy reinforcement learning algorithm by incorporating a replay buffer into the popular PPO algorithm. Specifically, we utilized random sampling to reuse samples generated by the prior policies to improve the sample efficiency of PPO. We developed HP3O and theoretically derived its policy improvement lower bound. Subsequently, we designed a new advantage function in HP3O+ and presented a modified lower bound to provide theoretical guarantees. We investigated the stationary point convergence for HP3O and used several continuous control environments and baselines to showcase the superiority of the proposed algorithms. Additionally, we focused on variance reduction while maintaining high reward returns, encouraging the community to consider both high rewards and variance reduction. The theoretical claims of higher sample efficiency and variance reduction were empirically supported.

\newpage
\bibliographystyle{elsarticle-num} 
\bibliography{HP3O}

\appendix
\newpage
\section{Appendix}
In this section, we present additional analysis and experimental results as a supplement to the main contents. To conveniently refer to the theoretical results, we repeat the statements for all lemmas and theorems.
\subsection{Additional Theoretical Analysis}
\begin{lemma}(Lemma 6.1 in~\cite{kakade2002approximately})
For any policies $\hat{\pi}$ and $\pi$, we have 
\begin{equation}
    J(\hat{\pi}) - J(\pi)=\frac{1}{1-\gamma}\mathbb{E}_{s\sim d_{\hat{\pi}}}[\mathbb{E}_{a\sim\hat{\pi}(\cdot|s)}[A^{\pi}(s,a)]]
\end{equation}
\end{lemma}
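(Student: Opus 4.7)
The plan is to prove this performance difference identity by a standard telescoping argument over the discounted reward sum along trajectories generated by $\hat{\pi}$. First I would rewrite $J(\pi)$ as $\mathbb{E}_{s_0\sim\rho_0}[V^\pi(s_0)]$; because the initial state distribution $\rho_0$ does not depend on the policy, I can absorb this expectation into the trajectory expectation under $\hat{\pi}$, giving
$$J(\hat{\pi}) - J(\pi) = \mathbb{E}_{\tau\sim\hat{\pi}}\Big[\sum_{t=0}^\infty \gamma^t r(s_t,a_t) - V^\pi(s_0)\Big].$$

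Next I would introduce the telescoping sum $V^\pi(s_0) = \sum_{t=0}^\infty\bigl(\gamma^t V^\pi(s_t) - \gamma^{t+1} V^\pi(s_{t+1})\bigr)$, which is valid under standard discounting because $\gamma^t V^\pi(s_t)\to 0$. Substituting and regrouping terms inside the expectation yields
$$J(\hat{\pi}) - J(\pi) = \mathbb{E}_{\tau\sim\hat{\pi}}\Big[\sum_{t=0}^\infty \gamma^t \bigl(r(s_t,a_t) + \gamma V^\pi(s_{t+1}) - V^\pi(s_t)\bigr)\Big].$$
Now I would condition on $s_t$ and take the inner expectation over $a_t\sim\hat{\pi}(\cdot|s_t)$ and $s_{t+1}\sim p(\cdot|s_t,a_t)$. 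By the Bellman relation for $\pi$ (not $\hat{\pi}$), we have $\mathbb{E}_{s_{t+1}}[r(s_t,a_t)+\gamma V^\pi(s_{t+1})\mid s_t,a_t] = Q^\pi(s_t,a_t)$, so the bracket simplifies to $Q^\pi(s_t,a_t) - V^\pi(s_t) = A^\pi(s_t,a_t)$, giving
$$J(\hat{\pi}) - J(\pi) = \sum_{t=0}^\infty \gamma^t \,\mathbb{E}_{s_t,a_t\sim\hat{\pi}}\bigl[A^\pi(s_t,a_t)\bigr].$$

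Finally I would convert the time-indexed sum into the discounted state-visitation form using the definition $d^{\hat{\pi}}(s)=(1-\gamma)\sum_{t=0}^\infty \gamma^t \mathbb{P}(s_t=s\mid\rho_0,\hat{\pi},p)$ supplied in the preliminaries, pulling out the factor $1/(1-\gamma)$ and writing $a\sim\hat{\pi}(\cdot|s)$ as the outer policy expectation. This yields exactly the claimed identity. The main obstacle I anticipate is a technical one rather than a conceptual one: justifying the interchange of the infinite sum and the trajectory expectation (and thus the validity of the telescoping step); this requires invoking either the boundedness of $r$ together with dominated convergence, or Fubini's theorem on $\sum_t \gamma^t \mathbb{E}[\,|\cdot|\,] < \infty$, which follows from $|A^\pi|$ being bounded by $2\|r\|_\infty/(1-\gamma)$. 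Once this measure-theoretic step is handled, the algebraic manipulations are routine.
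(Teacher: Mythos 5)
Your proof is correct and is exactly the classical telescoping argument for the performance difference lemma; the paper itself gives no proof of this statement, citing it directly as Lemma 6.1 of Kakade and Langford, and your derivation reproduces the standard argument found in that reference (and in the TRPO appendix), including the appropriate care about interchanging the infinite sum with the trajectory expectation. Nothing is missing.
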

Lemma 4 signifies the cumulative return difference between two policies, $\pi$ and $\hat{\pi}$.

\begin{lemma}
Consider any two policies $\hat{\pi}$ and $\pi$. Then the total variation distance between the state visitation distributions $d^{\hat{\pi}}$ and $d^\pi$ is bounded by
\begin{equation}
    \delta(d^\pi,d^{\hat{\pi}})\leq \frac{\gamma}{1-\gamma}\mathbb{E}_{s\sim d^{\hat{\pi}}}[\delta(\pi,\hat{\pi})(s)],
\end{equation}
where $\delta(\pi,\hat{\pi})(s)$ is defined in Lemma~\ref{lemma_1}.
\end{lemma}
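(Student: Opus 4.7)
The plan is to prove the bound by representing the discounted state visitation distributions as resolvents of policy-induced Markov chains and then invoking the standard resolvent identity. I introduce the policy-induced state transition kernel $P_\pi(s'|s)=\sum_a \pi(a|s)\,p(s'|s,a)$. Viewing distributions over $\mathcal{S}$ as row vectors, the definition of $d^\pi$ yields the compact form $d^\pi = (1-\gamma)\rho_0(I-\gamma P_\pi)^{-1} = (1-\gamma)\rho_0 \sum_{t=0}^\infty \gamma^t P_\pi^t$, and analogously for $d^{\hat{\pi}}$.

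Next I would apply the resolvent identity $A^{-1}-B^{-1}=B^{-1}(B-A)A^{-1}$ with $A=I-\gamma P_\pi$ and $B=I-\gamma P_{\hat{\pi}}$ to obtain the pivotal representation
$$d^\pi - d^{\hat{\pi}} = \gamma\, d^{\hat{\pi}}\,(P_\pi - P_{\hat{\pi}})\,(I-\gamma P_\pi)^{-1},$$
after absorbing the factor $(1-\gamma)\rho_0(I-\gamma P_{\hat{\pi}})^{-1} = d^{\hat{\pi}}$. From here I would take the $\ell_1$ norm of both sides and exploit the fact that $P_\pi$ is a stochastic kernel, which gives $\|v P_\pi\|_1 \leq \|v\|_1$ for any signed measure $v$; iterating termwise in the Neumann series yields $\|v(I-\gamma P_\pi)^{-1}\|_1 \leq \|v\|_1/(1-\gamma)$. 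Specializing to $v = d^{\hat{\pi}}(P_\pi - P_{\hat{\pi}})$ produces $\|d^\pi - d^{\hat{\pi}}\|_1 \leq \frac{\gamma}{1-\gamma}\,\|d^{\hat{\pi}}(P_\pi - P_{\hat{\pi}})\|_1$.

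Finally, I would bound $\|d^{\hat{\pi}}(P_\pi - P_{\hat{\pi}})\|_1$ by expanding the kernels in terms of the policies: $P_\pi(s'|s)-P_{\hat{\pi}}(s'|s)=\sum_a (\pi(a|s)-\hat{\pi}(a|s))\,p(s'|s,a)$. Interchanging the sums over $s'$ and $a$, using $\sum_{s'} p(s'|s,a)=1$, and invoking the definition $\delta(\pi,\hat{\pi})(s)=\tfrac{1}{2}\sum_a|\pi(a|s)-\hat{\pi}(a|s)|$ from Lemma~\ref{lemma_1}, I arrive at $\|d^{\hat{\pi}}(P_\pi - P_{\hat{\pi}})\|_1 \leq 2\,\mathbb{E}_{s\sim d^{\hat{\pi}}}[\delta(\pi,\hat{\pi})(s)]$. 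Combining this with $\delta(d^\pi,d^{\hat{\pi}})=\tfrac{1}{2}\|d^\pi - d^{\hat{\pi}}\|_1$ yields the claimed inequality, with the two factors of $\tfrac{1}{2}$ cancelling cleanly.

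The main obstacle to anticipate is the careful handling of the $\ell_1$-to-$\ell_1$ contraction estimate for the resolvent $(I-\gamma P_\pi)^{-1}$: because $v = d^{\hat{\pi}}(P_\pi - P_{\hat{\pi}})$ is a \emph{signed} measure, one must justify via the Jordan decomposition $v=v^+-v^-$ that $\|v P_\pi\|_1 \leq \|v^+ P_\pi\|_1 + \|v^- P_\pi\|_1 = \|v^+\|_1 + \|v^-\|_1 = \|v\|_1$, so that the geometric-series bound in $\gamma$ remains valid termwise. Once this contraction estimate is secured, the remaining steps reduce to elementary algebraic manipulations of sums.
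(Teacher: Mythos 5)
Your proof is correct and is essentially the standard argument from Achiam et al.\ (2017), which is exactly what the paper invokes here (it gives no details and simply states that the proof follows from that reference): the resolvent/Neumann-series representation of $d^\pi$, the resolvent identity, the $\ell_1$-contraction of stochastic kernels, and the kernel-difference expansion with the two factors of $\tfrac{1}{2}$ cancelling. No substantive differences to report.
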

The proof follows similarly from~\cite{achiam2017constrained}. Next we present the proof for Lemma 2.

\textbf{Lemma 2:}     Consider a present policy $\pi_k$, and any reference policy $\pi_r$. We then have, for any future policy $\pi$,
    \begin{equation}
        J(\pi)-J(\pi_k)\geq \frac{1}{1-\gamma}\mathbb{E}_{(s,a)\sim d^{\pi_r}}[\frac{\pi(a|s)}{\pi_r(a|s)}A^{\pi_k}(s,a)]-\frac{2\gamma C^\pi_{\pi_k}}{(1-\gamma)^2}\mathbb{E}_{s\sim d^{\pi_r}}[\delta(\pi,\pi_r)(s)],
    \end{equation}
where $C^\pi_{\pi_k}$ and $\delta(\pi,\pi_r)(s)$ are defined as in Lemma~\ref{lemma_1}.
\begin{proof}
    The proof is similar to the proof of Lemma 7 in~\cite{queeney2021generalized}. We start from the equality in Lemma 4 by adding and subtracting the term
    \begin{equation}
        \frac{1}{1-\gamma}\mathbb{E}_{s\sim d^{\pi_r}}[\mathbb{E}_{a\sim\pi(\cdot|s)}[A^{\pi_k}(s,a)]]
    \end{equation}
\end{proof}
With this, we obtain the following relationship:
\begin{equation}\label{eq_11}
\begin{split}
    J(\pi)-J(\pi_k)&=\frac{1}{1-\gamma}\mathbb{E}_{s\sim d^{\pi_r}}[\mathbb{E}_{a\sim\pi(\cdot|s)}[A^{\pi_k}(s,a)]]\\&+\frac{1}{1-\gamma}(\mathbb{E}_{s\sim d^{\pi}}[\mathbb{E}_{a\sim\pi(\cdot|s)}[A^{\pi_k}(s,a)]]-\mathbb{E}_{s\sim d^{\pi_r}}[\mathbb{E}_{a\sim\pi(\cdot|s)}[A^{\pi_k}(s,a)]])\\&\geq \frac{1}{1-\gamma}\mathbb{E}_{s\sim d^{\pi_r}}[\mathbb{E}_{a\sim\pi(\cdot|s)}[A^{\pi_k}(s,a)]]\\&-\frac{1}{1-\gamma}|\mathbb{E}_{s\sim d^{\pi}}[\mathbb{E}_{a\sim\pi(\cdot|s)}[A^{\pi_k}(s,a)]]-\mathbb{E}_{s\sim d^{\pi_r}}[\mathbb{E}_{a\sim\pi(\cdot|s)}[A^{\pi_k}(s,a)]]|
\end{split}
\end{equation}
The last inequality follows from the Triangle inequality. Subsequently, we can bound the second term of the last inequality using Hölder's inequality:
\begin{equation}\label{eq_12}
\begin{split}
    \frac{1}{1-\gamma}|\mathbb{E}_{s\sim d^{\pi}}[\mathbb{E}_{a\sim\pi(\cdot|s)}[A^{\pi_k}(s,a)]]&-\mathbb{E}_{s\sim d^{\pi_r}}[\mathbb{E}_{a\sim\pi(\cdot|s)}[A^{\pi_k}(s,a)]]|\\&\leq \frac{1}{1-\gamma}\|d^\pi-d^{\pi_r}\|_1\|\mathbb{E}_{a\sim\pi(\cdot|s)}[A^{\pi_k}(s,a)]\|_\infty,
\end{split}
\end{equation}
where $d^\pi$ and $d^{\pi_r}$ both signify the state visitation distributions. In light of the definition of total variation distance and Lemma~\ref{lemma_5}, the following relationship can be obtained accordingly
\begin{equation}\label{eq_13}
    \|d^\pi-d^{\pi_r}\|_1=2\delta(d^\pi,d^{\pi_r})\leq \frac{2\gamma}{1-\gamma}\mathbb{E}_{s\sim d^{\pi_r}}[\delta(\pi,\pi_r)(s)].
\end{equation}
Also note that
\begin{equation}\label{eq_14}
    \|\mathbb{E}_{a\sim\pi(\cdot|s)}[A^{\pi_k}(s,a)]\|_\infty=\textnormal{max}|\mathbb{E}_{a\sim\pi(\cdot|s)}[A^{\pi_k}(s,a)]|=C^\pi_{\pi_k}.
\end{equation}
Hence, substituting Eq.~\ref{eq_13} and Eq.~\ref{eq_14} into Eq.~\ref{eq_12} and combining Eq.~\ref{eq_11} yields the following inequality:
\begin{equation}
    J(\pi)-J(\pi_k)\geq \frac{1}{1-\gamma}\mathbb{E}_{s\sim d^{\pi_r}}[\mathbb{E}_{a\sim\pi(\cdot|s)}[A^{\pi_k}(s,a)]]-\frac{2\gamma C^\pi_{\pi_k}}{(1-\gamma)^2}\mathbb{E}_{s\sim d^{\pi_r}}[\delta(\pi,\pi_r)(s)].
\end{equation}
Finally, without loss of generality, we assume that the support of $\pi$ is contained in the support of $\pi_r$ for all states, which is true for common policy representations used in policy optimization. We can rewrite the first term on the right hand side of the last inequality as 
\begin{equation}
    \frac{1}{1-\gamma}\mathbb{E}_{s\sim d^{\pi_r}}[\mathbb{E}_{a\sim\pi(\cdot|s)}[A^{\pi_k}(s,a)]]=\frac{1}{1-\gamma}\mathbb{E}_{(s,a)\sim d^{\pi_r}}[\frac{\pi(a|s)}{\pi_r(a|s)}A^{\pi_k}(s,a)], 
\end{equation}
which leads to the desirable results.

\textbf{Theorem 1:} Consider prior policies $|\mathcal{B}|$ randomly sampled from the replay buffer $R$ with indices $i=0,1,...,|\mathcal{B}|-1$. For any distribution $v=[v_1,v_2,...,v_{|\mathcal{B}|}]$ over the $|\mathcal{B}|$ prior policies, and any future policy $\pi$ generated by HP3O in Algorithm~\ref{alg:oppo-bat}, the following relationship holds true
    \begin{equation}
        J(\pi)-J(\pi_k)\geq \frac{1}{1-\gamma}\mathbb{E}_{i\sim v}[\mathbb{E}_{(s,a)\sim d^{\pi_i}}[\frac{\pi(a|s)}{\pi_i(a|s)}A^{\pi_k}(s,a)]]-\frac{\gamma C^\pi_{\pi_k}\epsilon}{(1-\gamma)^2},
    \end{equation}
where
$C^\pi_{\pi_k}$ is defined as in Lemma~\ref{lemma_1}.
\begin{proof}
    Based on the definition of total variation distance, we have that
    \begin{equation}
        \mathbb{E}_{s\sim d^{\pi_k}}[\delta(\pi,\pi_k)(s)] = \mathbb{E}[\frac{1}{2}\int_{a\mathcal{A}}|\pi(a|s)-\pi_k(a|s)|\textnormal{d}a].
    \end{equation}
We still make the assumption that the support of $\pi$ is contained in the support of $\pi_k$ for all states, which is true for the common policy representations used in policy optimization. Then, by multiplying and dividing by $\pi_k(a|s)$, we can observe that
\begin{equation}\label{eq_19}
\begin{split}
    \mathbb{E}_{s\sim d^{\pi_k}}[\delta(\pi,\pi_k)(s)] &= \mathbb{E}[\frac{1}{2}\int_{a\mathcal{A}}\pi_k(a|s)|\frac{\pi(a|s)}{\pi_k(a|s)}-1|\textnormal{d}a]\\&=\frac{1}{2}\mathbb{E}_{(s,a)\sim d^{\pi_k}}[|\frac{\pi(a|s)}{\pi_k(a|s)}-1|]\leq \frac{\epsilon}{2}.
\end{split}
\end{equation}
The last inequality follows from the setup of PPO. With prior policies $\pi_i, i=0,1,2,...,|\mathcal{B}|-1$, we assume that the support of $\pi$ is contained in the support of $\pi_i$ for all states, which is true for common policy representations used in policy optimization. Based on Lemma~\ref{lemma_4}, we can obtain
\begin{equation}
    J(\pi)-J(\pi_k)\geq \frac{1}{1-\gamma}\mathbb{E}_{(s,a)\sim d^{\pi_i}}[\frac{\pi(a|s)}{\pi_i(a|s)}A^{\pi_k}(s,a)]-\frac{2\gamma C^\pi_{\pi_k}}{(1-\gamma)^2}\mathbb{E}_{s\sim d^{\pi_i}}[\delta(\pi,\pi_i)(s)].
\end{equation}
Consider policy weights $v=[v_1,v_2,...,v_{|\mathcal{B}|}]$ over the policies in the minibatch $\mathcal{B}$. Thus, for any choice of distribution $v$, the convex combination determined by $v$ of the $|\mathcal{B}|$ lower bounds given by the last inequality yields the lower bound
\begin{equation}\label{eq_21}
\begin{split}
    J(\pi)-J(\pi_k)&\geq \frac{1}{1-\gamma}\mathbb{E}_{i\sim v}[\mathbb{E}_{(s,a)\sim d^{\pi_i}}[\frac{\pi(a|s)}{\pi_i(a|s)}A^{\pi_k}(s,a)]]\\&-\frac{2\gamma C^\pi_{\pi_k}}{(1-\gamma)^2}\mathbb{E}_{i\sim v}[\mathbb{E}_{s\sim d^{\pi_i}}[\delta(\pi,\pi_i)(s)]].
\end{split}
\end{equation}
Combining Eq.~\ref{eq_19} and Eq.~\ref{eq_21}, with some mathematical manipulation, results in the desirable conclusion.
Now we're ready to prove Lemma 3.
\end{proof}
\textbf{Lemma 3:} Consider a present policy $\pi_k$, and any reference policy $\pi_r$. We then have, for any future policy $\pi$,
\begin{equation}
\begin{split}
    J(\pi)-J(\pi_k)&\geq \frac{1}{1-\gamma}\mathbb{E}_{(s,a)\sim d^{\pi_r}}[\frac{\pi(a|s)}{\pi_r(a|s)}\hat{A}^{\pi_k}(s,a)]\\&-\frac{2\gamma \hat{C}^\pi_{\pi_k}}{(1-\gamma)^2}\mathbb{E}_{s\sim d^{\pi_r}}[\delta(\pi,\pi_r)(s)]\\&-\frac{2\gamma C^{\pi_k}}{(1-\gamma)^2}\mathbb{E}_{s\sim d^{\pi_r}}[\delta(\pi,\pi_r)(s)],
\end{split}
\end{equation}
where $\hat{C}^\pi_{\pi_k}=\textnormal{max}_{s\in\mathcal{S}}|\mathbb{E}_{a\sim\pi(\cdot|s)}[\hat{A}^{\pi_k}(s,a)]|$, $\delta(\pi,\pi_r)(s)$ is defined as in Lemma~\ref{lemma_1}, $C^{\pi_k}=\textnormal{max}_{s\in\mathcal{S}}|V^{\pi^*_k}(s)-V^{\pi_k}(s)|$. 
\begin{proof}
    Due to Lemma~\ref{lemma_1}, we have
\begin{equation}
\begin{split}
    J(\pi)-J(\pi_k)&=\frac{1}{1-\gamma}\mathbb{E}_{s\sim d^{\pi}}[\mathbb{E}_{a\sim\pi(\cdot|s)}[A^{\pi_k}(s,a)]]\\&=\frac{1}{1-\gamma}\mathbb{E}_{s\sim d^{\pi}}[\mathbb{E}_{a\sim\pi(\cdot|s)}[Q^{\pi_k}(s,a)-V^{\pi_k}(s)]]\\&=\frac{1}{1-\gamma}\mathbb{E}_{s\sim d^{\pi}}[\mathbb{E}_{a\sim\pi(\cdot|s)}[Q^{\pi_k}(s,a)-V^{\pi_k^*}(s)+V^{\pi_k^*}(s)-V^{\pi_k}(s)]].
\end{split}
\end{equation}
Let $\hat{A}^{\pi_k}(s,a)=Q^{\pi_k}(s,a)-V^{\pi_k^*}(s)$ and $G^{\pi_k}(s)=V^{\pi_k^*}(s)-V^{\pi_k}(s)$ such that 
\begin{equation}
\begin{split}
    J(\pi)-J(\pi_k)&=\frac{1}{1-\gamma}\mathbb{E}_{s\sim d^{\pi}}[\mathbb{E}_{a\sim\pi(\cdot|s)}[\hat{A}^{\pi_k}(s,a)]]+\frac{1}{1-\gamma}\mathbb{E}_{s\sim d^{\pi}}[\mathbb{E}_{a\sim\pi(\cdot|s)}[G^{\pi_k}(s)]].
\end{split}
\end{equation}
Define $\|G^{\pi_k}(s)\|_\infty=\textnormal{max}_{s\in\mathcal{S}}|V^{\pi_k^*}(s)-V^{\pi_k}(s)|=C^{\pi_k}$. Follow similarly the proof from Lemma~\ref{lemma_4}, we can attain the relationship as follows:
\begin{equation}
    \begin{split}
        J(\pi)-J(\pi_k)&\geq \frac{1}{1-\gamma}\mathbb{E}_{s\sim d^{\pi_r}}[\mathbb{E}_{a\sim\pi(\cdot|s)}[\hat{A}^{\pi_k}(s,a)]]+\frac{1}{1-\gamma}\mathbb{E}_{s\sim d^{\pi_r}}[G^{\pi_k}(s)]\\&-\frac{2\gamma \hat{C}^\pi_{\pi_k}}{(1-\gamma)^2}\mathbb{E}_{s\sim d^{\pi_r}}[\delta(\pi,\pi_r)(s)]\\&-\frac{2\gamma C^{\pi_k}}{(1-\gamma)^2}\mathbb{E}_{s\sim d^{\pi_r}}[\delta(\pi,\pi_r)(s)].
    \end{split}
\end{equation}
\end{proof}
The fact that $\textnormal{min}_{s\in\mathcal{S}}|V^{\pi_k^*}(s)-V^{\pi_k}(s)|=0$ retains the desirable result.

\textbf{Theorem 2:}     Consider prior policies $|\mathcal{B}|$ randomly sampled from the replay buffer $R$ with indices $i=0,1,...,|\mathcal{B}|-1$. For any distribution $v=[v_1,v_2,...,v_{|\mathcal{B}|}]$ over the $|\mathcal{B}|$ prior policies, and any future policy $\pi$ generated by HP3O+ in Algorithm~\ref{alg:oppo-bat}, the following relationship holds true
    \begin{equation}\label{eq_26}
        J(\pi)-J(\pi_k)\geq \frac{1}{1-\gamma}\mathbb{E}_{i\sim v}[\mathbb{E}_{(s,a)\sim d^{\pi_i}}[\frac{\pi(a|s)}{\pi_i(a|s)}\hat{A}^{\pi_k}(s,a)]]-\frac{\gamma \hat{C}^\pi_{\pi_k}\epsilon}{(1-\gamma)^2}-\frac{\gamma C^{\pi_k}\epsilon}{(1-\gamma)^2},
    \end{equation}
where
$\hat{C}^\pi_{\pi_k}$ and $C^{\pi_k}$ are defined as in Lemma~\ref{lemma_5}.
\begin{proof}
    Following the proof techniques in Theorem~\ref{theorem_1} and combining the conclusion from Lemma~\ref{lemma_5} obtains Eq.~\ref{eq_26}.
\end{proof}
\subsection{Risk of Overfitting?}\label{overfitting}
In our approach, each set of sampled trajectories includes the current best
action trajectory in the buffer, but we use a uniform distribution to sample mini-batch data points from all the trajectories rather than only focusing on the
best one. Additionally, the number of sampled trajectories is a tunable parameter that we adjust based on the specific environment. Therefore, we ensure that
the model is exposed to a diverse set of experiences, which also helps mitigate
the risk of overfitting.
Another important point is that our trajectory buffer operates on a FIFO (FirstIn-First-Out) basis. As newer trajectories are added to the buffer, the oldest ones
are replaced. This buffer maintains a dynamic structure where trajectories are
continually updated to reflect the most recent learning and also helps to reduce
distribution drift. We expect that these newer trajectories are more likely to be
better-performing as they are generated from the most current learned policy.
All these techniques are implemented in our buffer and help to balance exploration with prioritizing higher-performing trajectories while also reducing the
risk of overfitting.
\subsection{Incorporation of The Worst Trajectories}\label{worst_trajectories}
In our approach, we prioritize leveraging
higher-performing trajectories to optimize the agent’s learning efficiency and
to accelerate convergence toward optimal policies. This focus allows the agent
to reinforce successful behaviors more effectively.
However, we understand the concern regarding forgetting catastrophic behaviors, which could potentially lead to the agent’s catastrophic behaviors. In
practice, the FIFO buffer and uniform sampling from the sampled trajectories
make sure that a diverse range of experiences, including suboptimal or catastrophic behaviors, are preserved to some extent within the buffer. This diversity
helps the agent to maintain a broad understanding of the environment, including both successful and unsuccessful strategies.
Additionally, while we do not explicitly prioritize the worst trajectories, our
approach does not entirely discard them. By maintaining a diverse buffer, the
agent is still exposed to these behaviors, which can serve as alerting examples.
This exposure helps the agent learn to avoid repeating such catastrophic actions
without the need to focus on the worst trajectories explicitly. We believe this
balance allows the agent to focus on learning from successful strategies while
still retaining an understanding of less optimal behaviors, reducing the risk of
catastrophic forgetting.

\subsection{Sample Efficiency Analysis}\label{sample_efficiency_analysis}
In this section, we present the sample efficiency analysis for the proposed HP3O algorithm, compared to the vanilla PPO algorithm, which remains the most popular on-policy scheme so far. Though the analysis is conducted particularly for the comparison between PPO and HP3O, the techniques apply extensively to other on-policy policy-gradient-based algorithms whenever they satisfy the conservative policy iteration property~\cite{kakade2002approximately,achiam2017constrained} to have the policy improvement lower bounds.

In this study, we aim to show how the off-policy sample reuse significantly affects the original sample efficiency PPO has. We will not directly show the exact sample complexity of HP3O and the improvement on top of PPO. For instance, to arrive at an $\varepsilon$-optimality for policy gradient-based algorithms, a few works~\cite{zhong2024theoretical,zanette2021cautiously,dai2023refined,sherman2023improved} have revealed the exact complexity with respect to $\varepsilon$, but only for MDPs with linear function approximation. The exact sample complexity analysis for the on-policy PPO algorithm remains extremely challenging and requires a substantial amount of non-trivial effort. Thereby, in this paper, we disclose the impact of off-policy sample reuse on the tradeoff between sample efficiency and learning stability.

To start with the comparison between PPO and HP3O, we denote by $\epsilon_H$ and $\epsilon_P$ the clipping parameters for HP3O and PPO. Such a clipping parameter indicates the worst-case expected performance loss of update at every time step. We next present a lemma that shows the relationship between $\epsilon_H$ and $\epsilon_P$.
\begin{lemma}\label{lemma_6}
    Consider prior policies $|\mathcal{B}|$ randomly sampled from the replay buffer $R$ with indices $i=0,1,...,|\mathcal{B}|-1$. For any distribution $v=[v_1,v_2,...,v_{|\mathcal{B}|}]$ over the $|\mathcal{B}|$ prior policies, both HP3O and PPO have the same worst-case expected performance loss at every update when the clipping parameters satisfy the following condition:
    \begin{equation}
        \epsilon_H = \frac{\epsilon_P}{\mathbb{E}_{i\sim v}[i+1]}.
    \end{equation}
\end{lemma}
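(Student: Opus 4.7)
The plan is to identify the ``worst-case expected performance loss at every update'' with the magnitude of the penalty term that governs how far the updated policy $\pi$ can drift from the current policy $\pi_k$ in each algorithm's improvement bound. Because both lower bounds (Lemma~\ref{lemma_1} for PPO and Theorem~\ref{theorem_1} for HP3O) carry the same coefficient $\frac{2\gamma C^\pi_{\pi_k}}{(1-\gamma)^2}$, matching the worst-case loss reduces to matching the expected total-variation deviation $\mathbb{E}[\delta(\pi,\pi_k)]$ that the respective clipping rule allows.

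For PPO, the clipping argument reproduced in Eq.~\ref{eq_19} gives directly $\mathbb{E}_{s\sim d^{\pi_k}}[\delta(\pi,\pi_k)(s)] \leq \epsilon_P/2$. For HP3O, the clipping is instead applied between $\pi$ and a prior policy $\pi_i$ drawn from the replay buffer, so the analogous argument only yields $\mathbb{E}_{s\sim d^{\pi_i}}[\delta(\pi,\pi_i)(s)] \leq \epsilon_H/2$. The key technical step I would carry out is to bound $\delta(\pi,\pi_k)$ via the triangle inequality through the intermediate policies $\pi_k,\pi_{k-1},\dots,\pi_{k-i}=\pi_i$, each consecutive pair being within $\epsilon_H/2$ by the same clipping argument applied at the earlier updates. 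Combined with the $\pi\to\pi_i$ piece, this produces an $i$-dependent bound of $(i+1)\epsilon_H/2$ on the expected deviation associated with sample index $i$.

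Averaging over the sampling distribution $v$ then yields the aggregate bound $\mathbb{E}_{i\sim v}[i+1]\cdot \epsilon_H/2$ on the worst-case expected total-variation distance of $\pi$ from $\pi_k$ under HP3O. Equating this with the corresponding PPO quantity $\epsilon_P/2$ forces $\epsilon_H\cdot\mathbb{E}_{i\sim v}[i+1] = \epsilon_P$, which rearranges to the stated identity. A sanity check is that when $|\mathcal{B}|=1$ and $i=0$ deterministically (i.e.\ sampling only from the current policy), $\mathbb{E}_{i\sim v}[i+1]=1$ and HP3O collapses back to PPO with $\epsilon_H=\epsilon_P$, as expected.

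The main obstacle I anticipate is justifying the triangle-inequality chaining cleanly, because each consecutive clipping bound holds under its own visitation distribution $d^{\pi_{k-j}}$ rather than uniformly under $d^{\pi_k}$ or $d^{\pi_i}$. The cleanest route is to invoke the state-wise pointwise clipping $|\pi(a|s)/\pi_{k-j}(a|s)-1|\leq \epsilon_H$ (which is what PPO actually enforces at every state) so that $\delta(\pi_{k-j+1},\pi_{k-j})(s)\leq \epsilon_H/2$ holds for every $s$, thereby bypassing the distributional mismatch and making the sum of triangle-inequality terms valid in expectation under any reference distribution. A secondary subtlety is that ``worst-case'' here must be read as an \emph{expected} worst case over the sampling distribution $v$, not a supremum over trajectories, which is exactly the sense in which $\mathbb{E}_{i\sim v}[i+1]$ appears in the final relation.
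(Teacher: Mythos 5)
Your proposal is correct and follows essentially the same route as the paper's proof: both bound the HP3O penalty by a triangle-inequality chain of $i+1$ consecutive clipped updates, each contributing $\epsilon_H/2$, average over $i\sim v$ to obtain $\mathbb{E}_{i\sim v}[i+1]\,\epsilon_H/2$, and equate this with PPO's $\epsilon_P/2$. The only cosmetic difference is that the paper chains $\delta(\pi,\pi_i)$ through the intermediate policies rather than $\delta(\pi,\pi_k)$, and your explicit handling of the visitation-distribution mismatch via statewise clipping is if anything more careful than the paper's.
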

\begin{proof}
Recall from PPO such that 
\begin{equation}\label{eq_28}
    \frac{2\gamma C^\pi_{\pi_k}}{(1-\gamma)^2}\mathbb{E}_{s\sim d^{\pi_k}}[\delta(\pi,\pi_k)(s)]\leq \frac{2\gamma C^\pi_{\pi_k}}{(1-\gamma)^2}\frac{\epsilon_P}{2}.
\end{equation}
For HP3O, its penalty term in the policy improvement lower bound in Theorem~\ref{theorem_1} can be upper bounded by using the Triangle inequality. Therefore, we have the following relationship
\begin{equation}\label{eq_29}
\begin{split}
    &\frac{2\gamma C^\pi_{\pi_k}}{(1-\gamma)^2}\mathbb{E}_{i\sim v}[\mathbb{E}_{s\sim d^{\pi_i}}[\delta(\pi,\pi_i)(s)]]\\&\leq \frac{2\gamma C^\pi_{\pi_k}}{(1-\gamma)^2}\mathbb{E}_{i\sim v}\bigg[\sum_{j=0}^i\mathbb{E}_{s\sim d^{\pi_{i}}}[\delta(\pi_{j+1},\pi_j)(s)]\bigg].
\end{split}    
\end{equation}
The last inequality holds if the prior policies are in a chronological order based on their histories. In practice, we do not set such an order for them, but due to the FIFO strategy we have leveraged, they can still be set in this for the sake of analysis. Since we still resort to the clipping mechanism in HP3O, each policy update approximately bounds each expected total variation distance $\mathbb{E}_{s\sim d^{\pi_{i}}}[\delta(\pi_{j+1},\pi_j)(s)]$ by $\frac{\epsilon_H}{2}$, which follows analogously from that in PPO. With this in hand, we are now ale to further bound Eq.~\ref{eq_29} in the following relationship
\begin{equation}\label{eq_30}
    \begin{split}
    &\frac{2\gamma C^\pi_{\pi_k}}{(1-\gamma)^2}\mathbb{E}_{i\sim v}[\mathbb{E}_{s\sim d^{\pi_i}}[\delta(\pi,\pi_i)(s)]]\\&\leq \frac{2\gamma C^\pi_{\pi_k}}{(1-\gamma)^2}\mathbb{E}_{i\sim v}\bigg[\frac{\epsilon_H}{2}(i+1)\bigg]\\&\leq \frac{2\gamma C^\pi_{\pi_k}}{(1-\gamma)^2}\frac{\epsilon_H}{2}\mathbb{E}_{i\sim v}[i+1]
    \end{split}
\end{equation}
Comparing the bounds in Eq.~\ref{eq_28} and Eq.~\ref{eq_30} yields the desirable result.
\end{proof}
Lemma~\ref{lemma_6} technically shows us that if the two clipping parameters $\epsilon_H$ and $\epsilon_P$ satisfy the condition of $\epsilon_H = \frac{\epsilon_P}{\mathbb{E}_{i\sim v}[i+1]}$, the worst-case expected performance loss at each update remains roughly the same. This intuitively makes sense as HP3O leverages prior policies from the replay buffer to update the policy model, which requires it to perform smaller updates. A benefit from this is to make policy updates more frequently, thus schematically stabilizing policy learning. In what follows, we present more analysis about this tradeoff.

To ease the analysis, we assume that the policies in the training batch $\mathcal{B}$ are randomly sampled with uniform policy weights, i.e., $v_i=\frac{1}{|\mathcal{B}|}$, for $i=0,1,...,|\mathcal{B}|-1$, for collecting data to train the network models. However, more advanced techniques such as Prioritized Experience Replay (PER)~\cite{schaul2015prioritized} can be applied accordingly. In each episode, we also assume that for PPO, it requires $N=Mn$ samples for sufficiently training the critic and actor networks, where $M$ is the number of mini-batch and $n$ is the batch size. In this setting, PPO makes one episodic update upon the current policy $\pi_k$ by traversing $N$ samples generated by $\pi_k$. However, for HP3O, since there exist multiple policies prior to $\pi_k$, it is able to make $M$ updates sourced from different prior policies per $N$ samples collected from $\mathcal{B}$, as long as $|\mathcal{B}|\leq M$. Thus, we next show that HP3O is able to increase the change in the total variational distance of the policy throughout training, without sacrificing stability, when it is compared to PPO. 
\begin{theorem}\label{theorem_3}
    Suppose that $|\mathcal{B}|=M$ and that the policies in the training batch $\mathcal{B}$ are randomly sampled with uniform policy weights, i.e., $v_i=\frac{1}{|\mathcal{B}|}$, for $i=0,1,...,|\mathcal{B}|-1$. Then, HP3O has a larger frequency of change in total variation distance of the policy throughout training by a factor of $\frac{2M}{M+1}$ compared to PPO, while using the same number of samples for each update as PPO.
\end{theorem}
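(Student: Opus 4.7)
The plan is to directly combine the clipping-parameter relation from Lemma~\ref{lemma_6} with a counting argument on the number of updates performed per batch of $N=Mn$ samples. The clipping mechanism (analogously to the derivation in Eq.~\ref{eq_19}) ensures that each update by either algorithm bounds the expected total variation distance between consecutive policies by $\epsilon/2$, where $\epsilon$ is the respective clipping parameter. So the total ``change'' accumulated per $N$ samples is simply the product of the number of updates and the per-update TV bound.

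First, I would specialize Lemma~\ref{lemma_6} to uniform sampling. With $v_i=1/M$ for $i=0,1,\ldots,M-1$, a direct computation gives
\begin{equation*}
    \mathbb{E}_{i\sim v}[i+1] = \frac{1}{M}\sum_{i=0}^{M-1}(i+1) = \frac{1}{M}\cdot\frac{M(M+1)}{2} = \frac{M+1}{2},
\end{equation*}
so that Lemma~\ref{lemma_6} yields $\epsilon_H = \frac{2\,\epsilon_P}{M+1}$, i.e., HP3O must take smaller per-update clips to match PPO's worst-case per-update performance loss.

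Next, I would argue about the number of policy updates per $N$ samples. By the assumption preceding the theorem, PPO performs a single episodic update from $\pi_k$ using all $N=Mn$ on-policy samples. In contrast, since $|\mathcal{B}|=M$ and the mini-batch size is $n$, HP3O performs $M$ updates per $N$ samples drawn from $\mathcal{B}$ (one per sampled prior policy). Multiplying the number of updates by the per-update TV bound, the aggregate change in total variation distance per $N$ samples is $1\cdot\frac{\epsilon_P}{2}=\frac{\epsilon_P}{2}$ for PPO and $M\cdot\frac{\epsilon_H}{2}=\frac{M\,\epsilon_P}{M+1}$ for HP3O. Taking the ratio yields
\begin{equation*}
    \frac{M\,\epsilon_P/(M+1)}{\epsilon_P/2} = \frac{2M}{M+1},
\end{equation*}
which is precisely the stated factor.

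The only subtlety, and the step I would spend the most care on, is justifying that the $M$ HP3O updates per batch each individually satisfy the $\epsilon_H/2$ TV bound under the same derivation used for PPO. Because the clipping mechanism is applied identically (the minibatch is drawn through $\pi_i$ with a clipped ratio $\pi/\pi_i$), each of the $M$ updates inherits the bound $\mathbb{E}_{s\sim d^{\pi_i}}[\delta(\pi_{j+1},\pi_j)(s)] \leq \epsilon_H/2$, exactly as in the PPO argument of Eq.~\ref{eq_19} and as already invoked in the proof of Lemma~\ref{lemma_6}. Once this is noted, additivity across the $M$ updates and the calculation above give the result immediately.
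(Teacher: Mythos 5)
Your proposal is correct and follows essentially the same route as the paper's proof: specialize Lemma~\ref{lemma_6} to uniform weights to get $\epsilon_H = \frac{2\epsilon_P}{M+1}$, count one update per $N$ samples for PPO versus $M$ for HP3O, and take the ratio $M\cdot\frac{\epsilon_H}{2} \big/ \frac{\epsilon_P}{2} = \frac{2M}{M+1}$. Your added care about each of the $M$ updates individually inheriting the $\epsilon_H/2$ clipping bound is a reasonable elaboration of a step the paper takes for granted.
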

\begin{proof}
    Pertaining to Lemma~\ref{lemma_6} and the fact that $|\mathcal{B}|=M$, we have the following relationship:
    \begin{equation}
        \epsilon_H=\frac{\epsilon_P}{\frac{1}{M}\sum_{i=0}^{M-1}(i+1)}=\frac{2\epsilon_P}{M+1}.
    \end{equation}
    PPO makes one episodic policy update after $N$ samples are collected, say from $k$ to $k+1$, which yields a policy change of $\frac{\epsilon_P}{2}$ in terms of the total variation distance. While for HP3O, it resorts to data from prior policies to obtain $N$ samples and makes $M$ policy updates, as mentioned before. This results in the overall policy change of 
    \begin{equation}
        M\frac{\epsilon_H}{2}=\frac{2M}{M+1}\frac{\epsilon_P}{2}.
    \end{equation}
    Thus, HP3O has a larger frequency of changes in the total variation distance of the policy throughout training by a factor of $\frac{2M}{M+1}$ compared to PPO, with the same number of samples.
\end{proof}
By far, we have discussed the tradeoff between learning stability and sample size that biases toward learning stability when maintaining the same sample size as in PPO. Alternatively, we can perceive the problem from another perspective, in which HP3O needs to increase the sample size while maintaining the same change in total variation distance throughout training. A formal result is summarized as follows.
\begin{theorem}\label{theorem_4}
    Suppose that $|\mathcal{B}|=2M-1$ and that the policies in the training batch $\mathcal{B}$ are randomly sampled with uniform policy weights, i.e., $v_i=\frac{1}{|\mathcal{B}|}$, for $i=0,1,...,|\mathcal{B}|-1$. Thus, HP3O increases the sample size used for each policy update by a factor of $\frac{2M-1}{M}$ compared to PPO, simultaneously maintaining the same change in the total variation of distance of the policy throughout training as PPO.
\end{theorem}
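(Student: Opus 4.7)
The plan is to mirror the proof structure of Theorem 3, but reinterpret the resulting tradeoff in terms of samples per outer update rather than frequency of policy change. The starting point will again be the clipping identity in Lemma 6, and the rest reduces to a direct computation followed by a per-sample rate comparison with PPO.

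First I would apply Lemma 6 with $|\mathcal{B}| = 2M-1$ and uniform weights $v_i = 1/(2M-1)$, using
\begin{equation*}
\mathbb{E}_{i\sim v}[i+1] \;=\; \frac{1}{2M-1}\sum_{i=0}^{2M-2}(i+1) \;=\; \frac{1}{2M-1}\cdot\frac{(2M-1)(2M)}{2} \;=\; M,
\end{equation*}
which immediately gives the admissible HP3O clipping parameter $\epsilon_H = \epsilon_P / M$.

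Next I would address the sample-size claim. Following the same per-episode bookkeeping as in Theorem 3, one outer update of HP3O consumes one mini-batch of size $n$ per prior policy in $\mathcal{B}$, for a total of $(2M-1)n$ samples, whereas a single PPO update consumes $Mn = N$ samples. Dividing gives the claimed factor $(2M-1)/M$ in samples per outer update. Then I would check the cumulative policy movement. A single HP3O outer update is composed of $2M-1$ inner mini-batch updates, each bounded in total variation by $\epsilon_H/2$, so its aggregate policy change is $(2M-1)\,\epsilon_H/2 = (2M-1)\epsilon_P/(2M)$. Dividing by the per-update sample budget $(2M-1)n$ yields a per-sample policy-change rate of $\epsilon_P/(2Mn) = \epsilon_P/(2N)$, which coincides exactly with PPO's rate $\epsilon_P/2$ per $N$ samples. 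Aggregating over any common training horizon therefore produces the same cumulative change in total variation distance as PPO.

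The main obstacle is staying faithful to the bookkeeping conventions of Theorem 3: the per-step bound $\epsilon_H/2$ is only valid after invoking the Triangle-inequality decomposition used inside the proof of Lemma 6 under a chronological ordering of the prior policies, and one must verify that the ``one mini-batch per prior policy'' interpretation remains consistent when $|\mathcal{B}|$ grows from $M$ to $2M-1$ without double counting samples. Once this accounting is pinned down, the remaining arithmetic is routine and the claim follows.
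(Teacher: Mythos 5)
Your proposal is correct and follows essentially the same route as the paper: apply Lemma~\ref{lemma_6} with $|\mathcal{B}|=2M-1$ and uniform weights to get $\mathbb{E}_{i\sim v}[i+1]=M$ and hence $\epsilon_H=\epsilon_P/M$, count $(2M-1)n$ samples per update against PPO's $Mn$, and then match the cumulative total-variation change. The only (immaterial) difference is your final normalization — you compare per-sample policy-change rates over $2M-1$ inner updates, whereas the paper directly asserts $M$ policy updates per $N$ samples and computes $M\epsilon_H/2=\epsilon_P/2$ — and both yield the identical conclusion.
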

\begin{proof}
    As $|\mathcal{B}|=2M-1$, HP3O uses $(2M-1)n$ to calculate each policy update from the prior to the new policy, compared to $Mn$ samples used in PPO. Hence, HP3O increases the sample size used for each policy update by a factor of $\frac{2M-1}{M}$ compared to PPO. Immediately, based on Lemma~\ref{lemma_6}, we can obtain
    \begin{equation}
        \epsilon_H=\frac{\epsilon_P}{\frac{1}{2M-1}\sum_{i=0}^{2M-2}(i+1)}=\frac{\epsilon_P}{M}.
    \end{equation}
    We have shown in Theorem~\ref{theorem_3} that PPO makes one policy update with $N$ samples collected, while HP3O makes $M$ policy updates with the same number of samples collected. We then have
    \begin{equation}
        \frac{M\epsilon_H}{2} = \frac{\epsilon_P}{2}.
    \end{equation}
    This implies that the overall change in total variation distance in HP3O is the same as in PPO.
\end{proof}
One implication from Theorem~\ref{theorem_3} and~\ref{theorem_4} is that HP3O with uniform policy weights enhances the tradeoff between learning stability and sample efficiency in the vanilla PPO when $|\mathcal{B}|$ is selected between $[M, 2M-1]$. This also motivates us to set the FIFO strategy as the selected training batch $\mathcal{B}$ cannot deviate too far away from the current policy. Otherwise, the negative impact of distribution drift could be extreme.

\subsection{HP3O vs. HP3O+}\label{hp3o_hp3o+}
In the last subsection, we have shown that HP3O enables more frequent changes in the total variational distance of the policy throughout training, with the smaller updates. Though more changes in the total variational distance of the policy may help improve the sample efficiency, but in order to address the distribution drift, smaller updates are the resulting outcome, possibly slowing down the convergence. Hence, introducing the best trajectory $\pi^*$ in HP3O+ assists in mitigating this issue. Since it can increase the update, while maintaining the same number of changes as in HP3O. Such a behavior is empirically shown to enhance the model performance. We summarize the larger update in the total variational distance in a formal theoretical result as follows.
\begin{theorem}\label{theorem_5}
    Denote by $D_{TV}^H$ and $D_{TV}^{H+}$ the updates of total variational distance of the policies for HP3O and HP3O+, respectively, at the time step $k$. Then we have $D_{TV}^H\leq D_{TV}^{H+}$ for all $k$.
\end{theorem}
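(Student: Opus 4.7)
The plan is to reduce the claim to a comparison between the magnitudes of the probability-ratio deviations induced by the two clipping surrogates. I would first invoke the identity established in the proof of Theorem~\ref{theorem_1},
\begin{equation}
\mathbb{E}_{s \sim d^{\pi_k}}[\delta(\pi,\pi_k)(s)] \;=\; \tfrac{1}{2}\,\mathbb{E}_{(s,a)\sim d^{\pi_k}}\!\left[\left|\tfrac{\pi(a|s)}{\pi_k(a|s)} - 1\right|\right],
\end{equation}
so that both $D_{TV}^H$ and $D_{TV}^{H+}$ are expressible in terms of the probability ratios produced by the respective post-update policies. Since HP3O and HP3O+ share the same clipping parameter $\epsilon$ and differ only in the advantage driving the surrogate (namely $A^{\pi_k}$ versus $\hat{A}^{\pi_k} = A^{\pi_k} - G^{\pi_k}$ with $G^{\pi_k}(s) = V^{\pi_k^*}(s) - V^{\pi_k}(s) \geq 0$), the comparison reduces to how this nonnegative state-dependent shift in the advantage modifies the maximizer of the clipping surrogate.

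The core of the argument is a pointwise analysis of the maximizer of the PPO clipping surrogate $\min(r A,\ \operatorname{clip}(r, 1-\epsilon, 1+\epsilon)\,A)$. A direct case analysis shows that the maximizer is $r = 1+\epsilon$ when the advantage is positive, $r = 1-\epsilon$ when it is negative, and any value in $[1-\epsilon,1+\epsilon]$ when it is exactly zero. Subtracting the nonnegative shift $G^{\pi_k}(s)$ then has two effects: (i) any $(s,a)$ with $A^{\pi_k}(s,a) = 0$ becomes strictly negative under HP3O+ whenever $G^{\pi_k}(s) > 0$, contributing an extra $\epsilon$ to $|r-1|$ where HP3O contributes zero; and (ii) any $(s,a)$ whose positive advantage flips sign under the shift still yields $|r-1| = \epsilon$ in magnitude, just in the opposite direction. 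In all remaining cases both algorithms push the ratio to the same boundary of the clip interval. Hence the HP3O+ contribution to $|r-1|$ is pointwise at least as large as the HP3O one, and taking expectations gives $D_{TV}^H \leq D_{TV}^{H+}$.

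The main obstacle will be handling the unsaturated regime, in which the maximizer of the clipped surrogate lies strictly inside $(1-\epsilon, 1+\epsilon)$, together with the cross-state coupling imposed by function approximation (the policy is parameterized, so the per-$(s,a)$ analysis above is an idealization). My plan for that is to first establish the inequality in the tabular setting where the pointwise analysis is exact, and then extend it via a monotone homotopy from $A^{\pi_k}$ to $\hat{A}^{\pi_k}$ — for instance, by considering the family $A^{\pi_k} - t\,G^{\pi_k}$ for $t \in [0,1]$ and verifying that the induced expected TV distance between the surrogate's optimizer and $\pi_k$ is nondecreasing in $t$. Combined with the boundary analysis above, this would close the inequality $D_{TV}^H \leq D_{TV}^{H+}$ for all $k$.
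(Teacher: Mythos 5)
Your proposal proves a different statement from the one the paper actually establishes. In the paper's proof, $D_{TV}^H$ and $D_{TV}^{H+}$ are not the realized total-variation distances of the policies produced by maximizing the two clipped surrogates; they are defined to be the penalty terms in the respective policy-improvement lower bounds, namely $D_{TV}^H=\frac{\gamma C^\pi_{\pi_k}\epsilon}{(1-\gamma)^2}$ from Theorem~\ref{theorem_1} and $D_{TV}^{H+}=\frac{\gamma \hat{C}^\pi_{\pi_k}\epsilon}{(1-\gamma)^2}+\frac{\gamma C^{\pi_k}\epsilon}{(1-\gamma)^2}$ from Theorem~\ref{theorem_2}. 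With that reading, the whole theorem reduces to the scalar inequality $C^\pi_{\pi_k}\le \hat{C}^\pi_{\pi_k}+C^{\pi_k}$, which follows in one line from the decomposition $A^{\pi_k}(s,a)=\hat{A}^{\pi_k}(s,a)+V^{\pi_k^*}(s)-V^{\pi_k}(s)$, the triangle inequality applied to $\mathbb{E}_{a\sim\pi(\cdot|s)}[\cdot]$, and taking the maximum over states. No analysis of the argmax of the clipping objective is needed or performed.

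The claim you set out to prove instead --- that the optimizer of the HP3O+ surrogate is at least as far from $\pi_k$ in expected TV distance as the optimizer of the HP3O surrogate --- is substantially harder, and your argument for it does not close. In the saturated case your pointwise analysis yields $|r-1|=\epsilon$ for both algorithms wherever the relevant advantage is nonzero (a sign flip moves the maximizer from $1+\epsilon$ to $1-\epsilon$ but leaves $|r-1|$ unchanged), so the two expectations coincide except on the measure-zero set where $A^{\pi_k}(s,a)=0$ exactly; that gives essentially equality rather than a meaningful ordering. The two steps you flag as obstacles --- the unsaturated regime and the cross-state coupling induced by a parameterized policy --- are precisely where the argument would have to do real work, and the proposed monotone homotopy along $A^{\pi_k}-t\,G^{\pi_k}$ is stated as a plan but never verified: monotonicity in $t$ of the optimizer's expected TV distance is not obvious and is not established. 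If you adopt the paper's definition of the two quantities, all of this machinery is unnecessary and the triangle inequality suffices.
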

\begin{proof}
    In light of Theorem~\ref{theorem_1} and Theorem~\ref{theorem_2}, we know that $D_{TV}^H=\frac{\gamma C^\pi_{\pi_k}\epsilon}{(1-\gamma)^2}$ and $D_{TV}^{H+}=\frac{\gamma \hat{C}^\pi_{\pi_k}\epsilon}{(1-\gamma)^2}+\frac{\gamma C^{\pi_k}\epsilon}{(1-\gamma)^2}$. We next show the the latter is bounded below by the former. As $A^{\pi_k}(s,a) = \hat{A}^{\pi_k}(s,a)+G^{\pi_k}(s)$, $\hat{A}^{\pi_k}(s,a)=Q^{\pi_k}(s,a)-V^{\pi^*_k}(s)$, and $G^{\pi_k}(s)=V^{\pi^*_k}(s)-V^{\pi_k}(s)$, we have the following relationship
    \begin{equation}
        A^{\pi_k}(s,a) = \hat{A}^{\pi_k}(s,a)+V^{\pi^*_k}(s)-V^{\pi_k}(s)
    \end{equation}
Taking the expectation of the action $a~\sim\pi(\cdot|s)$ on both sides yields 
\begin{equation}
        \mathbb{E}_{a~\sim\pi(\cdot|s)}[A^{\pi_k}(s,a)] = \mathbb{E}_{a~\sim\pi(\cdot|s)}[\hat{A}^{\pi_k}(s,a)]+V^{\pi^*_k}(s)-V^{\pi_k}(s),
\end{equation}
which leads to
\begin{equation}
\begin{split}
    |\mathbb{E}_{a~\sim\pi(\cdot|s)}[A^{\pi_k}(s,a)]| &= |\mathbb{E}_{a~\sim\pi(\cdot|s)}[\hat{A}^{\pi_k}(s,a)]+V^{\pi^*_k}(s)-V^{\pi_k}(s)|\\&\leq |\mathbb{E}_{a~\sim\pi(\cdot|s)}[\hat{A}^{\pi_k}(s,a)]|+|V^{\pi^*_k}(s)-V^{\pi_k}(s)|.
\end{split}
\end{equation}
This last inequality is due to the Triangle inequality. Taking the maximum operator of the state $s\sim\mathcal{S}$ on both sides results in the following:
\begin{equation}
    \text{max}_{s\sim\mathcal{S}}|\mathbb{E}_{a~\sim\pi(\cdot|s)}[A^{\pi_k}(s,a)]|\leq \text{max}_{s\sim\mathcal{S}}|\mathbb{E}_{a~\sim\pi(\cdot|s)}[\hat{A}^{\pi_k}(s,a)]|+\text{max}_{s\sim\mathcal{S}}|V^{\pi^*_k}(s)-V^{\pi_k}(s)|.
\end{equation}
Multiplying both sides in the above inequality by $\frac{\gamma\epsilon}{(1-\gamma)^2}$ yields the desirable result.
\end{proof}

Theorem~\ref{theorem_5} examines the conditions under which HP3O+ effectively balances exploration and stability. The primary focus is on analyzing the updates introduced by the total variation distance (TVD) for HP3O and HP3O+, rather than directly comparing the overall lower bound. As highlighted in~\ref{sample_efficiency_analysis} and aligned with the analysis in GEPPO, HP3O facilitates more frequent but smaller changes in the TVD of the policy throughout training compared to vanilla PPO. These frequent changes can enhance sample efficiency, but the smaller update size may also lead to slower convergence due to distribution drift.

To mitigate this, HP3O+ incorporates a best-trajectory mechanism that increases the magnitude of updates in TVD while maintaining the same frequency of changes as HP3O. Although this modification could theoretically yield a weaker lower bound, the policy improvement lower bound for HP3O+ may still be higher, given that the analysis is centered around TVD and does not capture all practical effects, such as exploration-exploitation dynamics, which significantly impact empirical performance.

The interplay between the terms \( -D^H_{TV} \) and \( -D^{H^+}_{TV} \) is influenced by the overall training configuration, particularly hyperparameter settings that regulate exploration dynamics. HP3O+ is structured to introduce a more controlled form of exploration, leading to improved empirical performance despite potential differences in theoretical bounds. Empirical results further indicate that HP3O+ benefits from larger TVD updates, helping to counteract the drawbacks of smaller, incremental changes in HP3O and leading to a faster convergence rate.

The balance between update magnitude and frequency plays a crucial role in optimizing exploration and exploitation. The empirical evidence suggests that the structured exploration introduced by HP3O+ often outweighs any theoretical disadvantages, resulting in overall performance improvements. To provide a more comprehensive perspective, the appendix includes additional discussion connecting these theoretical insights with the empirical results.

This analysis situates HP3O+ within the broader landscape of PPO-based methods, emphasizing its ability to maintain an effective trade-off between stability and exploration while achieving stronger empirical results.

\subsection{Training results for other environments}

The following plot in Figure~\ref{fig:Cartpole} presents the training curves obtained by training both the baseline algorithms and our policy. These results further support our claim in the main paper that our policy reduces variance while maintaining a high reward at the end.


\begin{figure}[H]
    \centering
    \begin{subfigure}[t]{0.65\textwidth}
        \centering
        \includegraphics[width=\textwidth]{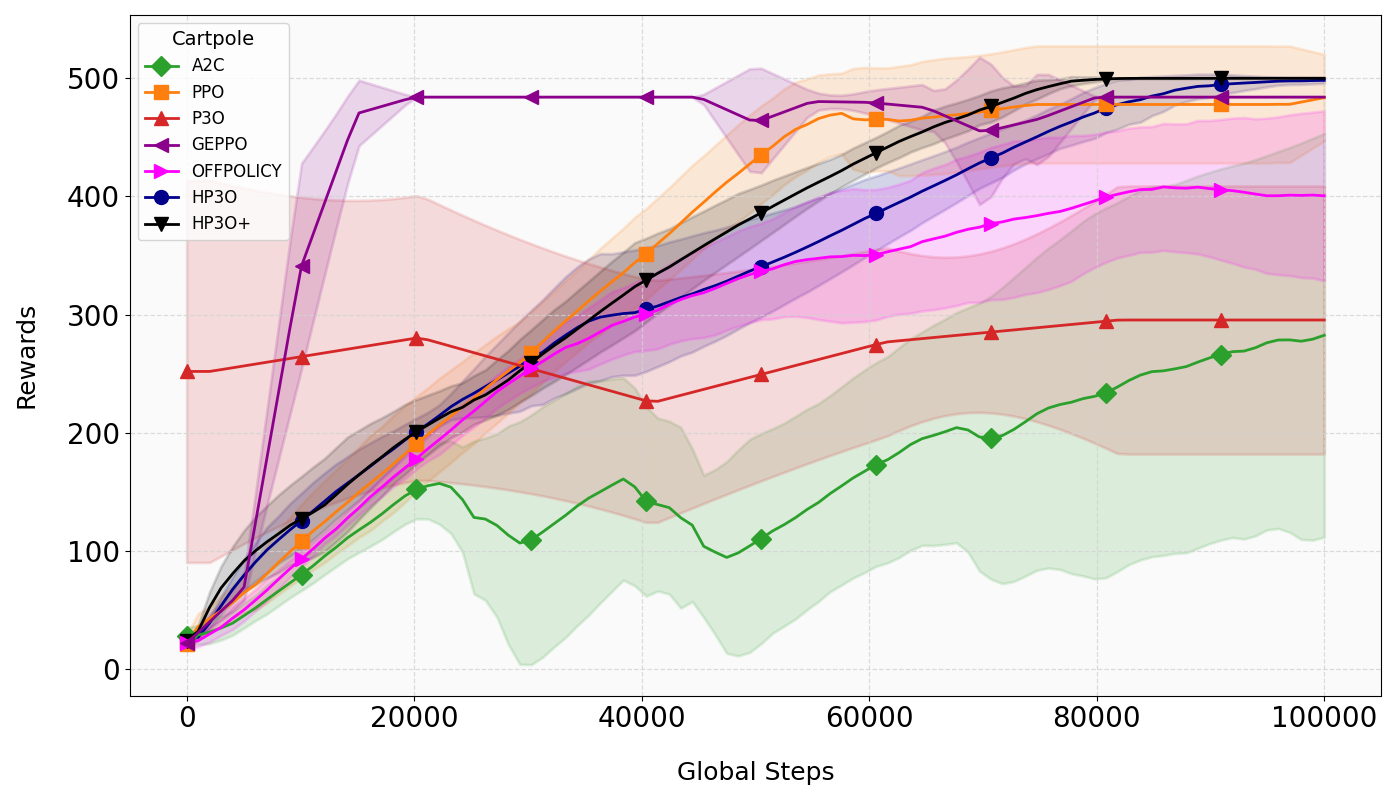}
        \caption{Training performance of HP3O and PPO on the Cartpole environment over 100k steps.}
        \label{fig:Cartpole}
    \end{subfigure}
    \hfill
    \begin{subfigure}[t]{0.65\textwidth}
        \centering
        \includegraphics[width=\textwidth]{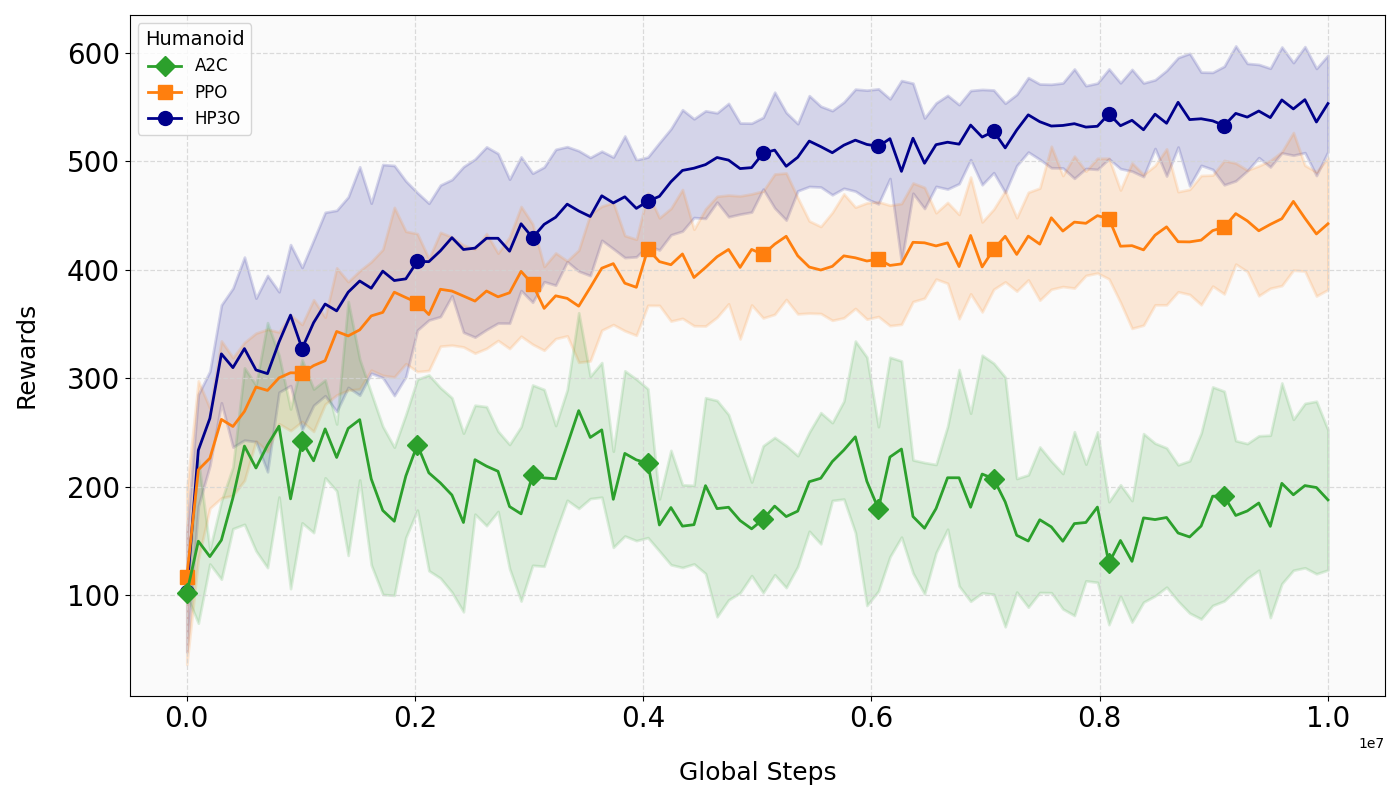}
        \caption{Training performance of HP3O and PPO on the Humanoid environment over 10 million steps.}
        \label{fig:humanoid}
    \end{subfigure}
    \caption{Comparison of HP3O and PPO training curves across different environments. (a) shows the performance on Cartpole, while (b) shows the performance on Humanoid.}
    \label{fig:mean_variance}
\end{figure}

\subsection{Additional Experimental Results}\label{additional_results}
\textbf{Definition of explained variance.} The explained variance (EV) measures the proportion to which a mathematical model accounts for the variation of a given data set, which can be mathematically defined in the following:
\begin{equation}
    EV = 1-\frac{Var(y-\hat{y})}{Var(y)},
\end{equation}
where $y$ is the groundtruth and $\hat{y}$ is the prediction. EV values typically vary from 0 to 1. In some scenarios, the value may be a large negative number, which indicates a poor prediction of $y$.
Explained variance is a well-known metric in reinforcement learning, particularly for assessing the accuracy of value function predictions. In our experiment, explained variance was used to evaluate how well the value function predicts actual returns. The different runs correspond to separate training instances with different random seeds. The explained variance score is a risk metric that measures the dispersion of errors in a dataset. A score closer to 1.0 is better, as it indicates smaller squares of standard deviations of errors.

\subsection{Explained variance for other environments}\label{explained_variance}
Explained variance is a well-known metric in reinforcement learning, particularly for assessing the accuracy of value function predictions. In our experiment, explained variance was used to evaluate how well the value function predicts actual returns. The different runs correspond to separate training instances with different random seeds. 


\begin{figure}[H]
    \centering
    \begin{subfigure}[t]{0.48\textwidth}
        \centering
        \includegraphics[width=\textwidth]{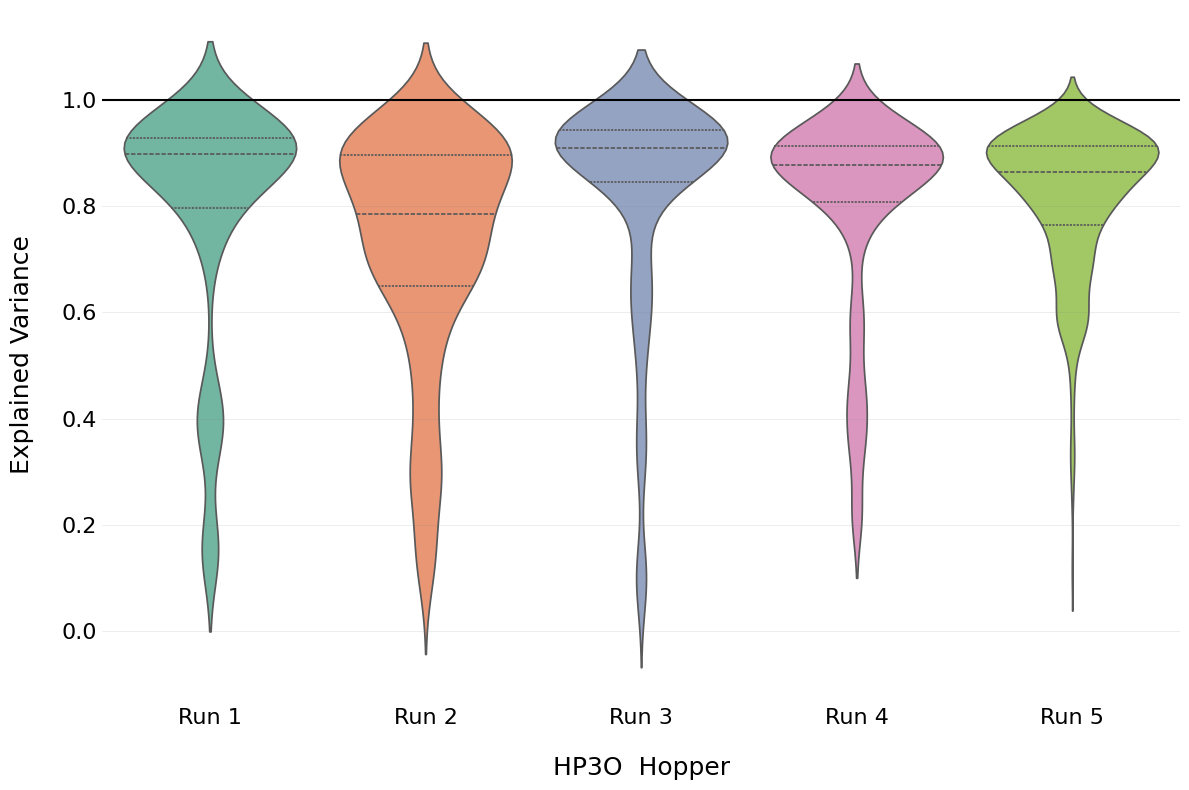}
        \caption{Hopper with HP3O}
        \label{fig:explained_variance_hopper_hp3o}
    \end{subfigure}
    \hfill
    \begin{subfigure}[t]{0.48\textwidth}
        \centering
        \includegraphics[width=\textwidth]{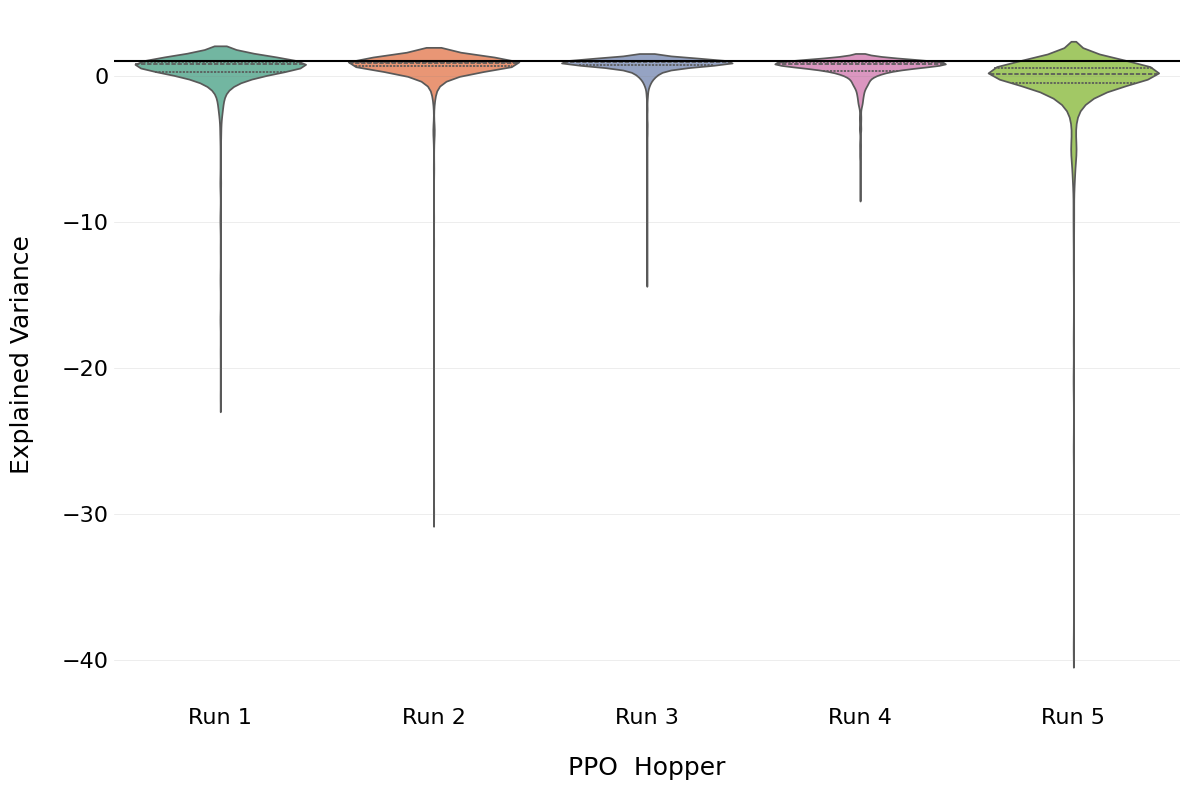}
        \caption{Hopper with PPO}
        \label{fig:explained_variance_hopper_ppo}
    \end{subfigure}
    \caption{Explained Variance for Hopper}
\end{figure}

\begin{figure}[h]
    \centering
    \begin{subfigure}[t]{0.48\textwidth}
        \centering
        \includegraphics[width=\textwidth]{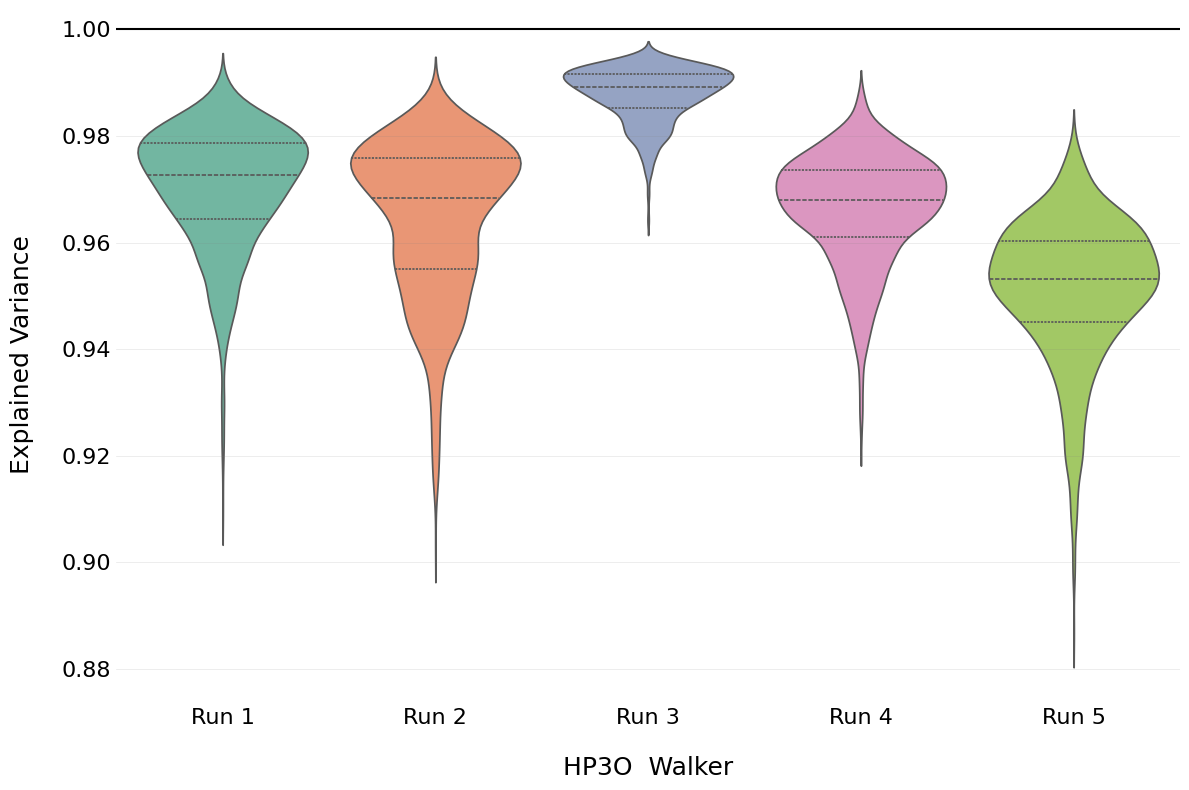}
        \caption{Walker with HP3O}
        \label{fig:explained_variance_walker_hp3o}
    \end{subfigure}
    \hfill
    \begin{subfigure}[t]{0.48\textwidth}
        \centering
        \includegraphics[width=\textwidth]{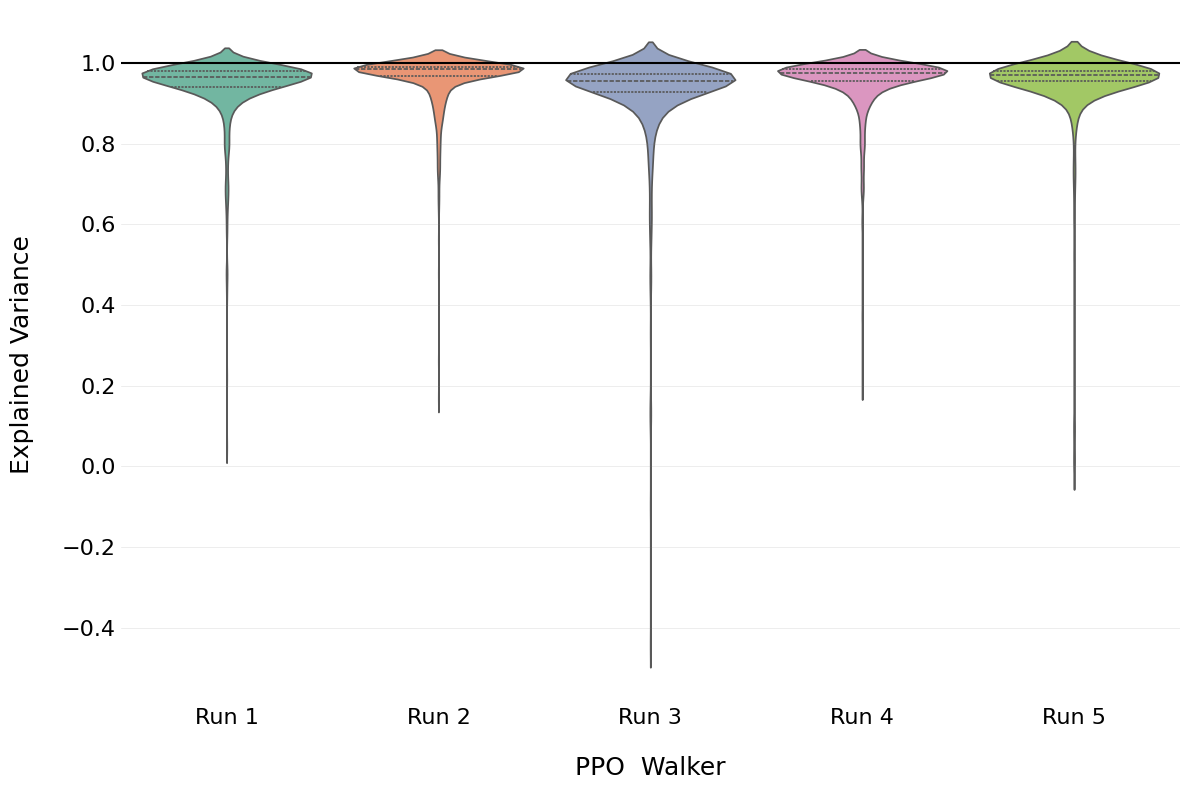}
        \caption{Walker with PPO}
        \label{fig:explained_variance_walker_ppo}
    \end{subfigure}
    \caption{Explained Variance for Walker}
\end{figure}

\begin{figure}[h]
    \centering
    \begin{subfigure}[t]{0.48\textwidth}
        \centering
        \includegraphics[width=\textwidth]{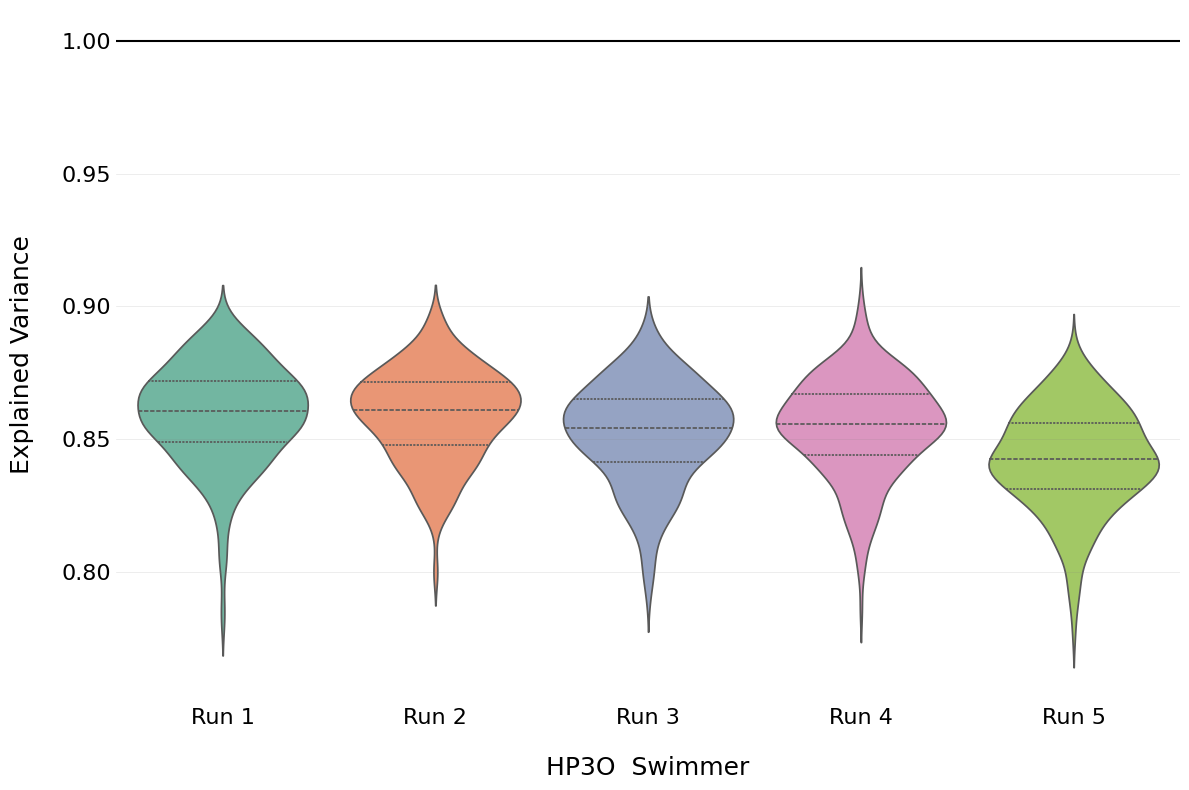}
        \caption{Swimmer with HP3O}
        \label{fig:explained_variance_swimmer_hp3o}
    \end{subfigure}
    \hfill
    \begin{subfigure}[t]{0.48\textwidth}
        \centering
        \includegraphics[width=\textwidth]{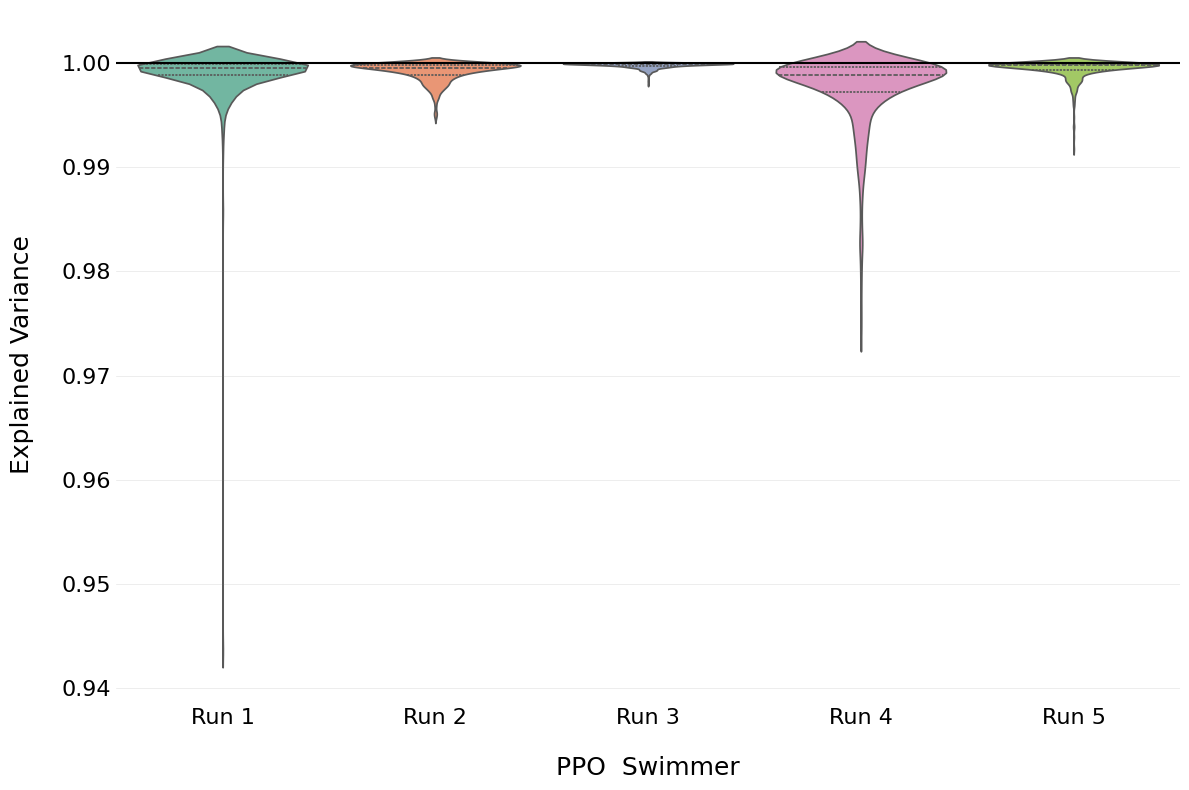}
        \caption{Swimmer with PPO}
        \label{fig:explained_variance_swimmer_ppo}
    \end{subfigure}
    \caption{Explained Variance for Swimmer}
\end{figure}

\begin{figure}[h]
    \centering
    \begin{subfigure}[t]{0.48\textwidth}
        \centering
        \includegraphics[width=\textwidth]{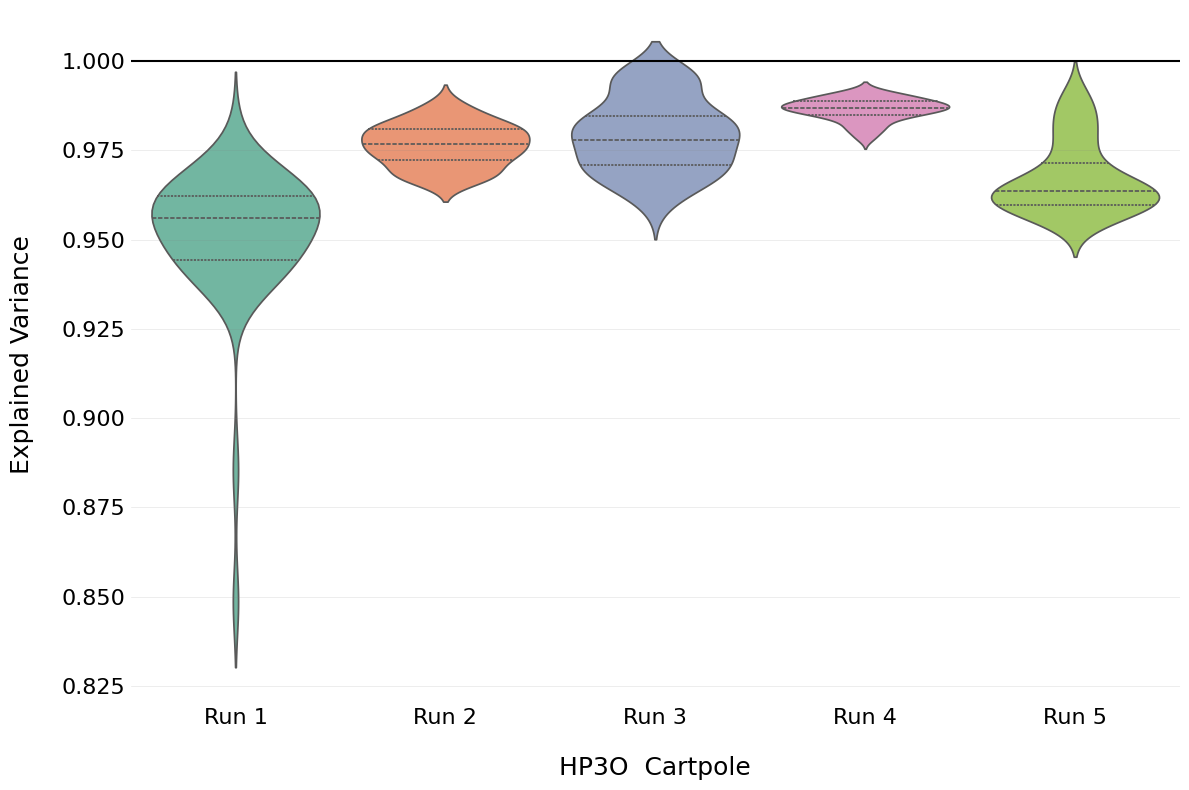}
        \caption{Cartpole with HP3O}
        \label{fig:explained_variance_cartpole_hp3o}
    \end{subfigure}
    \hfill
    \begin{subfigure}[t]{0.48\textwidth}
        \centering
        \includegraphics[width=\textwidth]{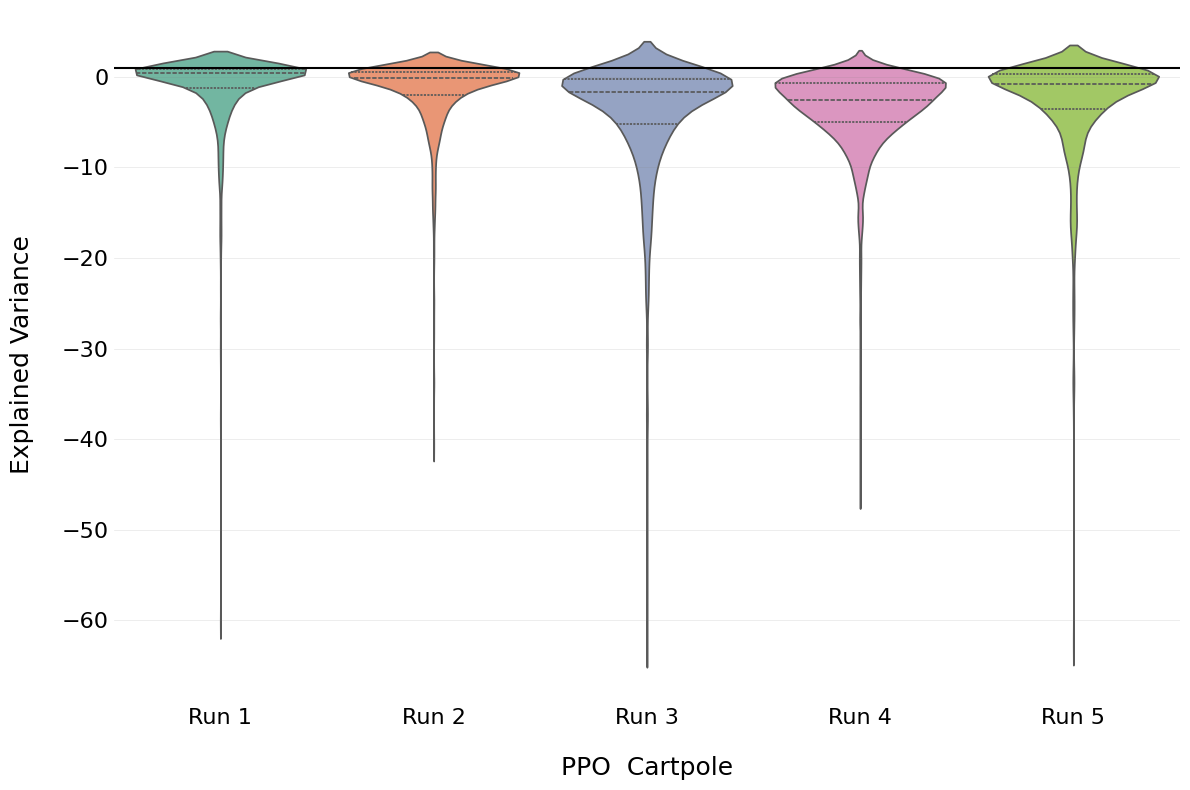}
        \caption{Cartpole with PPO}
        \label{fig:explained_variance_cartpole_ppo}
    \end{subfigure}
    \caption{Explained Variance for Cartpole}
\end{figure}

\begin{figure}[h]
    \centering
    \begin{subfigure}[t]{0.48\textwidth}
        \centering
        \includegraphics[width=\textwidth]{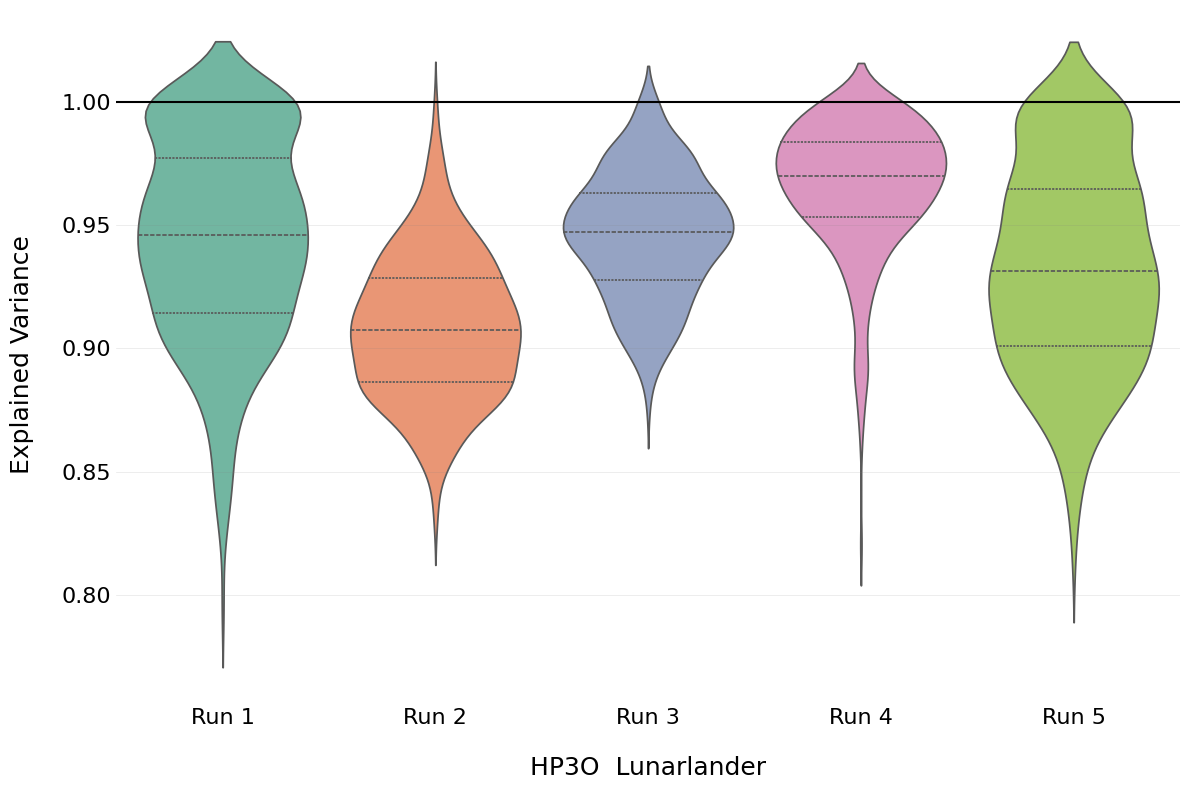}
        \caption{LunarLander with HP3O}
        \label{fig:explained_variance_lunarlander_hp3o}
    \end{subfigure}
    \hfill
    \begin{subfigure}[t]{0.48\textwidth}
        \centering
        \includegraphics[width=\textwidth]{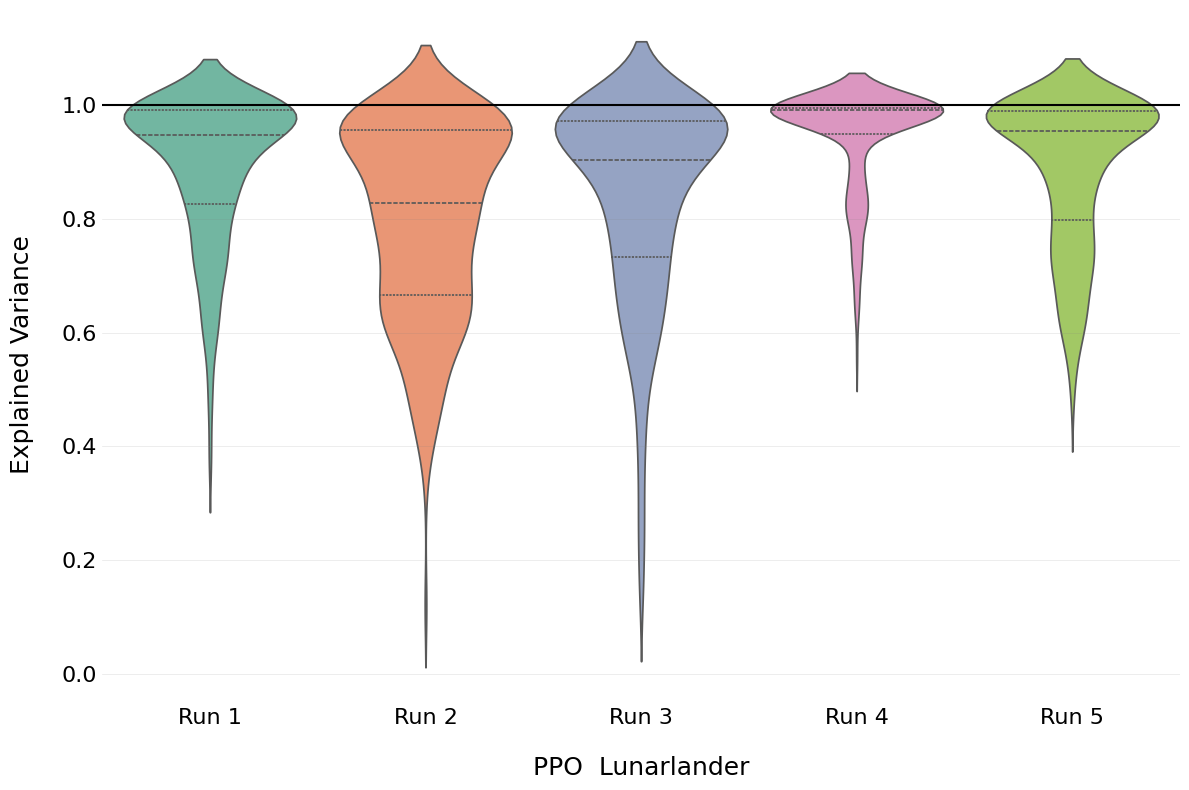}
        \caption{LunarLander with PPO}
        \label{fig:explained_variance_lunarlander_ppo}
    \end{subfigure}
    \caption{Explained Variance for LunarLander}
\end{figure}

\subsection{SAC Training benchmarks}

The following plots showcase the benchmark training results obtained by using the SAC policy. In some environments, SAC shows a relatively large variance. A notable disadvantage of SAC is that it only works with continuous action spaces.

\begin{figure}[h]
    \centering
    \begin{subfigure}[t]{0.48\textwidth}
        \centering
        \includegraphics[width=\textwidth]{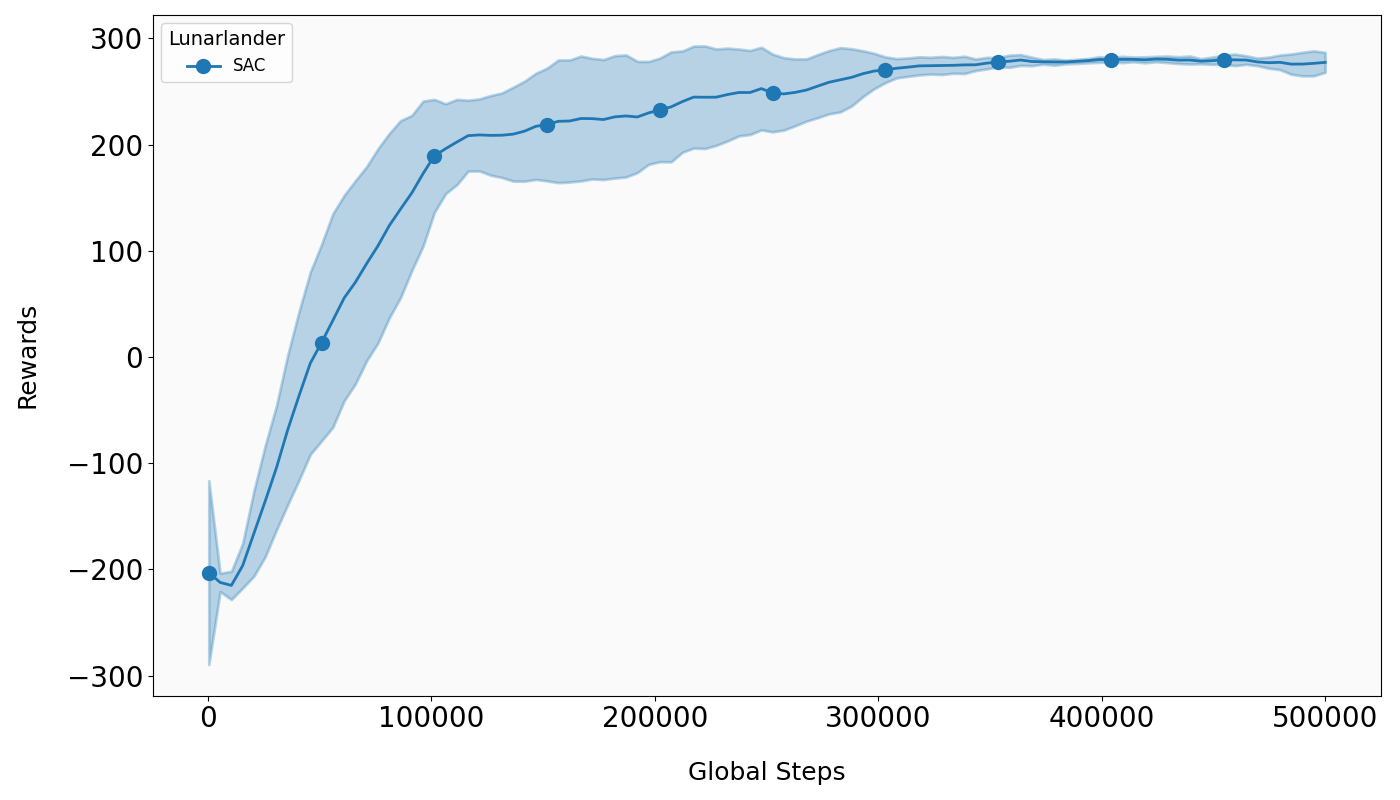}
        \caption{LunarLander}
        \label{fig:SAC_LunarLander}
    \end{subfigure}
    \hfill
    \begin{subfigure}[t]{0.48\textwidth}
        \centering
        \includegraphics[width=\textwidth]{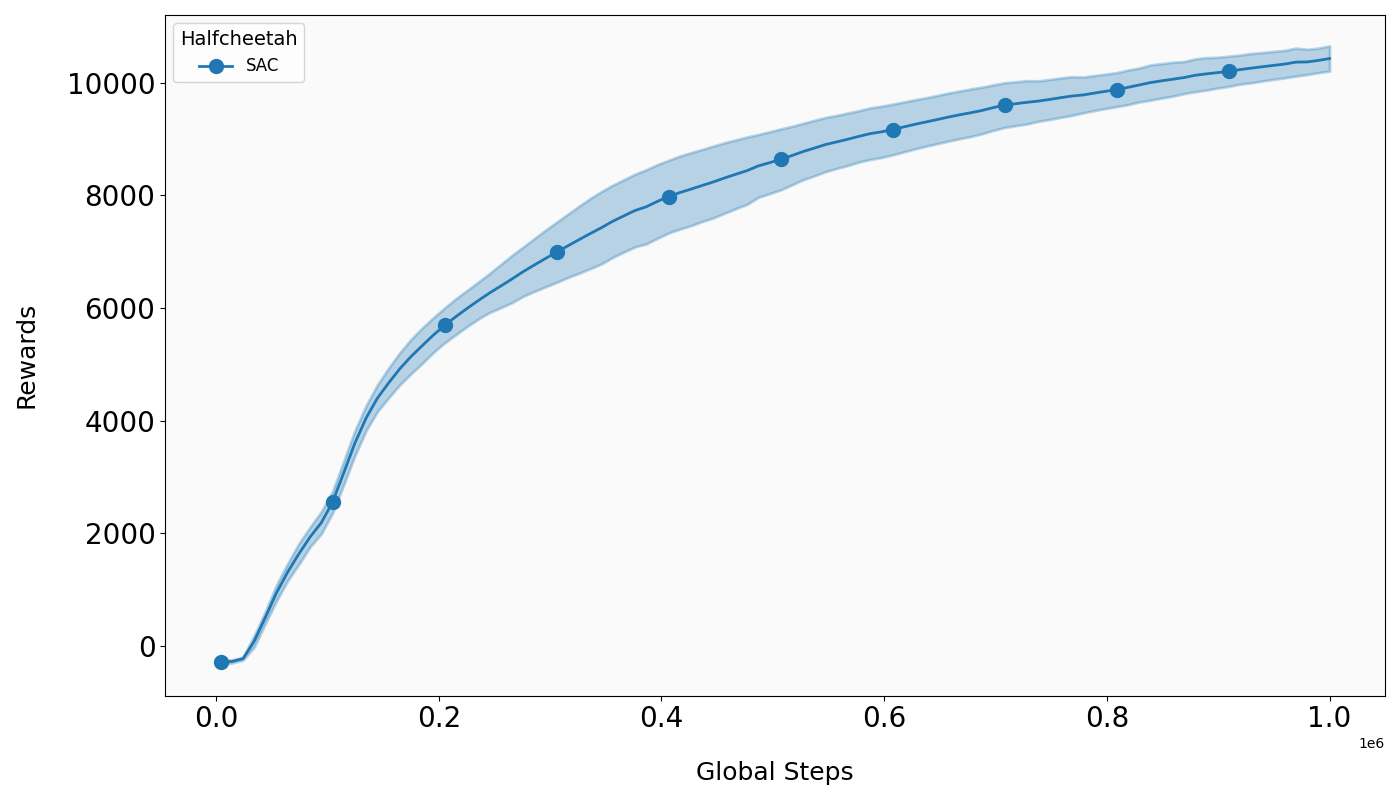}
        \caption{HalfCheetah}
        \label{fig:SAC_HalfCheetah}
    \end{subfigure}

    \begin{subfigure}[t]{0.48\textwidth}
        \centering
        \includegraphics[width=\textwidth]{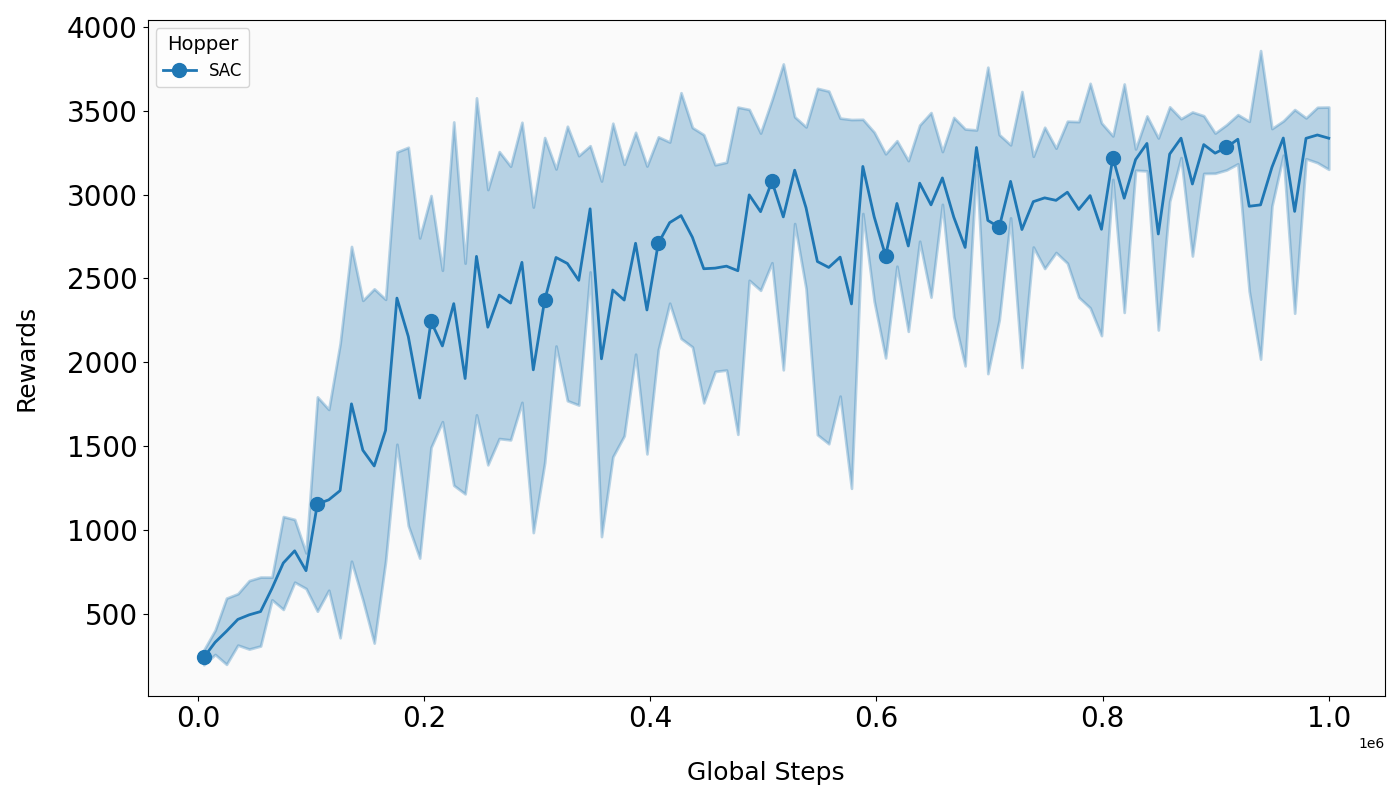}
        \caption{Hopper}
        \label{fig:SAC_Hopper}
    \end{subfigure}
    \hfill
    \begin{subfigure}[t]{0.48\textwidth}
        \centering
        \includegraphics[width=\textwidth]{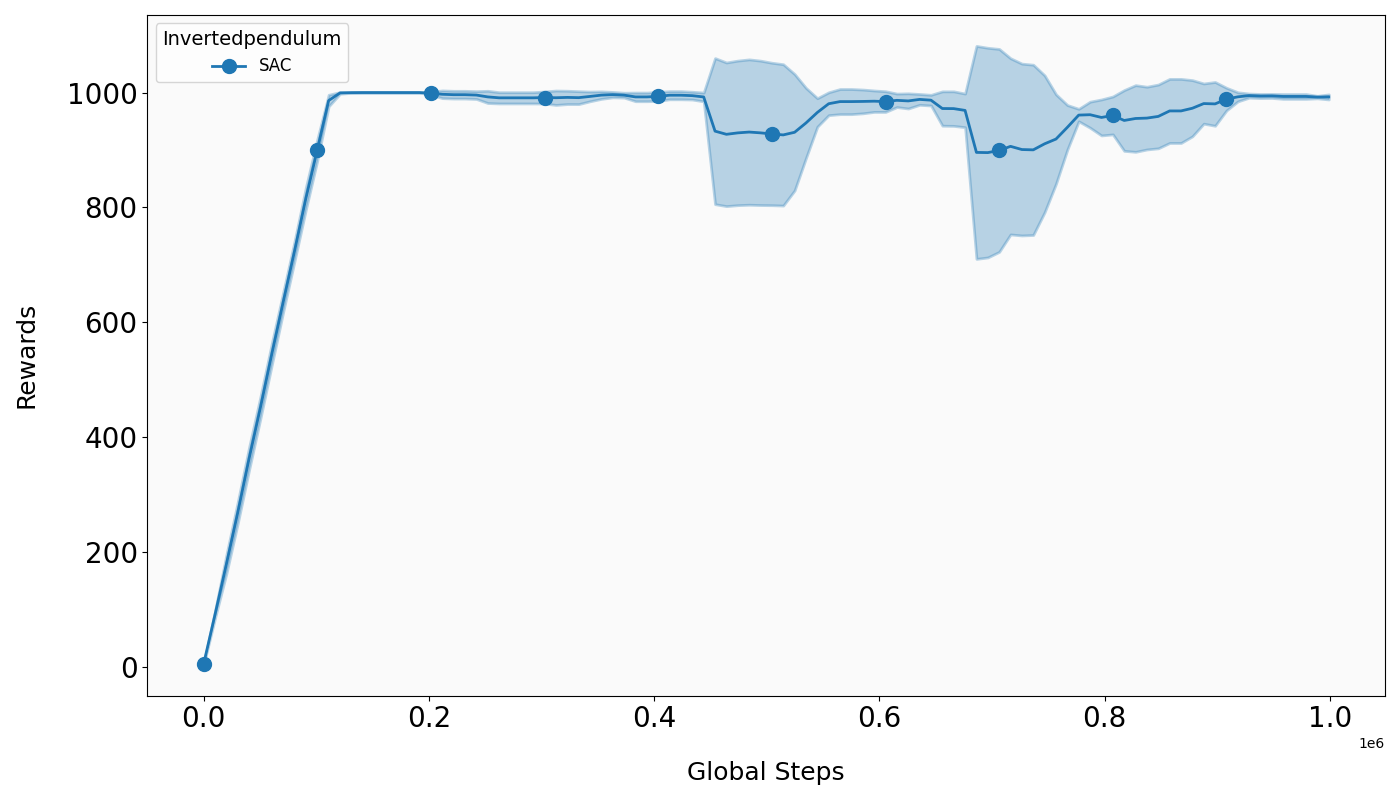}
        \caption{InvertedPendulum}
        \label{fig:SAC_InvertedPendulum}
    \end{subfigure}

    \begin{subfigure}[t]{0.48\textwidth}
        \centering
        \includegraphics[width=\textwidth]{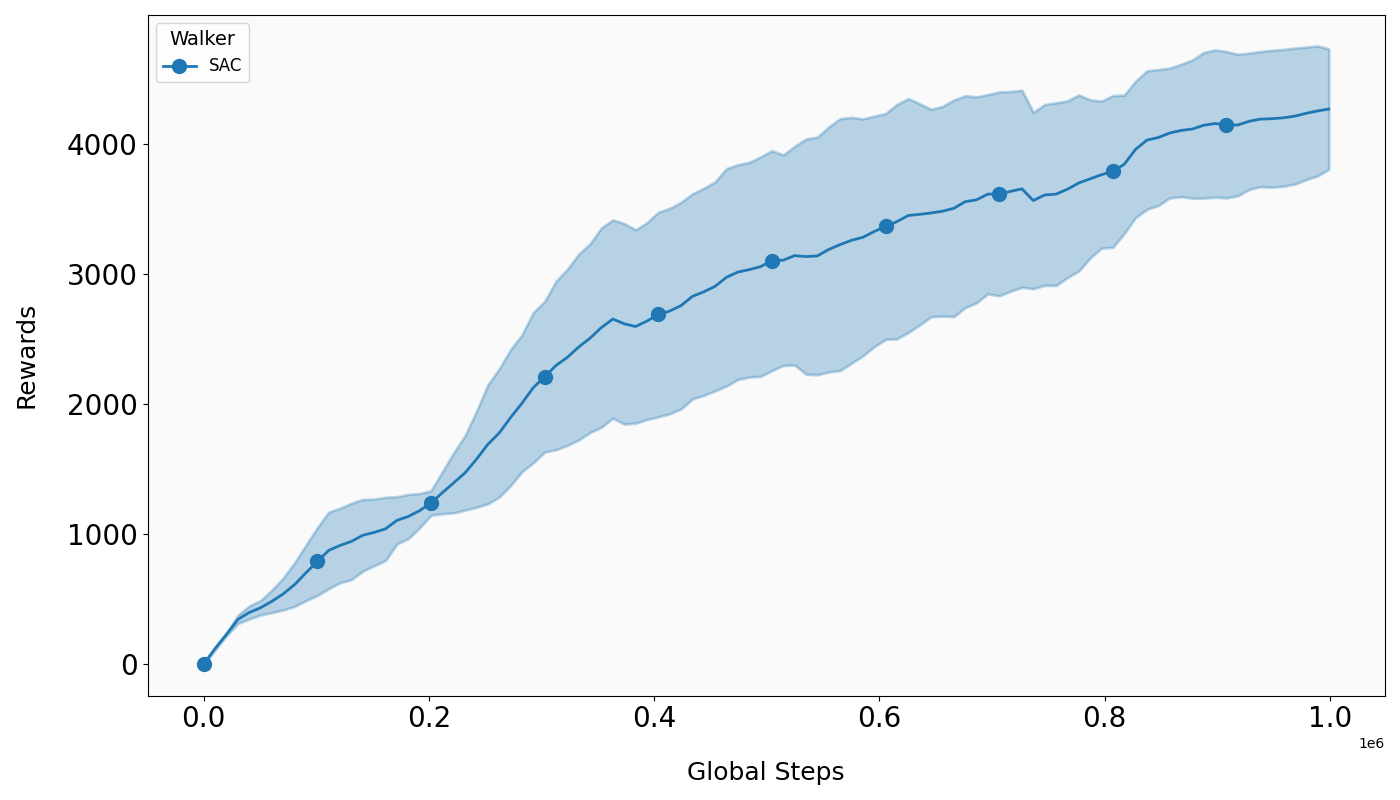}
        \caption{Walker}
        \label{fig:SAC_Walker}
    \end{subfigure}
    \hfill
    \begin{subfigure}[t]{0.48\textwidth}
        \centering
        \includegraphics[width=\textwidth]{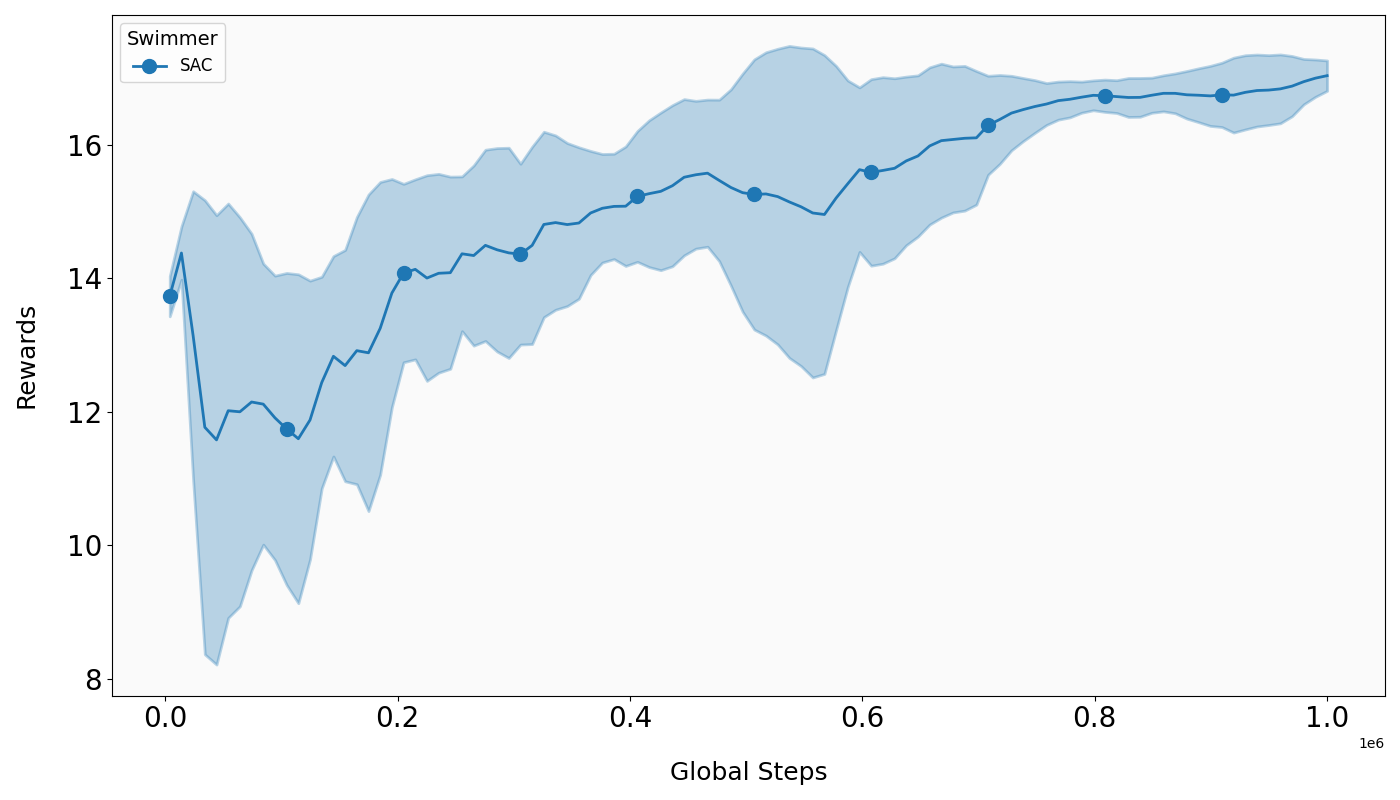}
        \caption{Swimmer}
        \label{fig:SAC_Swimmer}
    \end{subfigure}

    \begin{subfigure}[t]{0.48\textwidth}
        \centering
        \includegraphics[width=\textwidth]{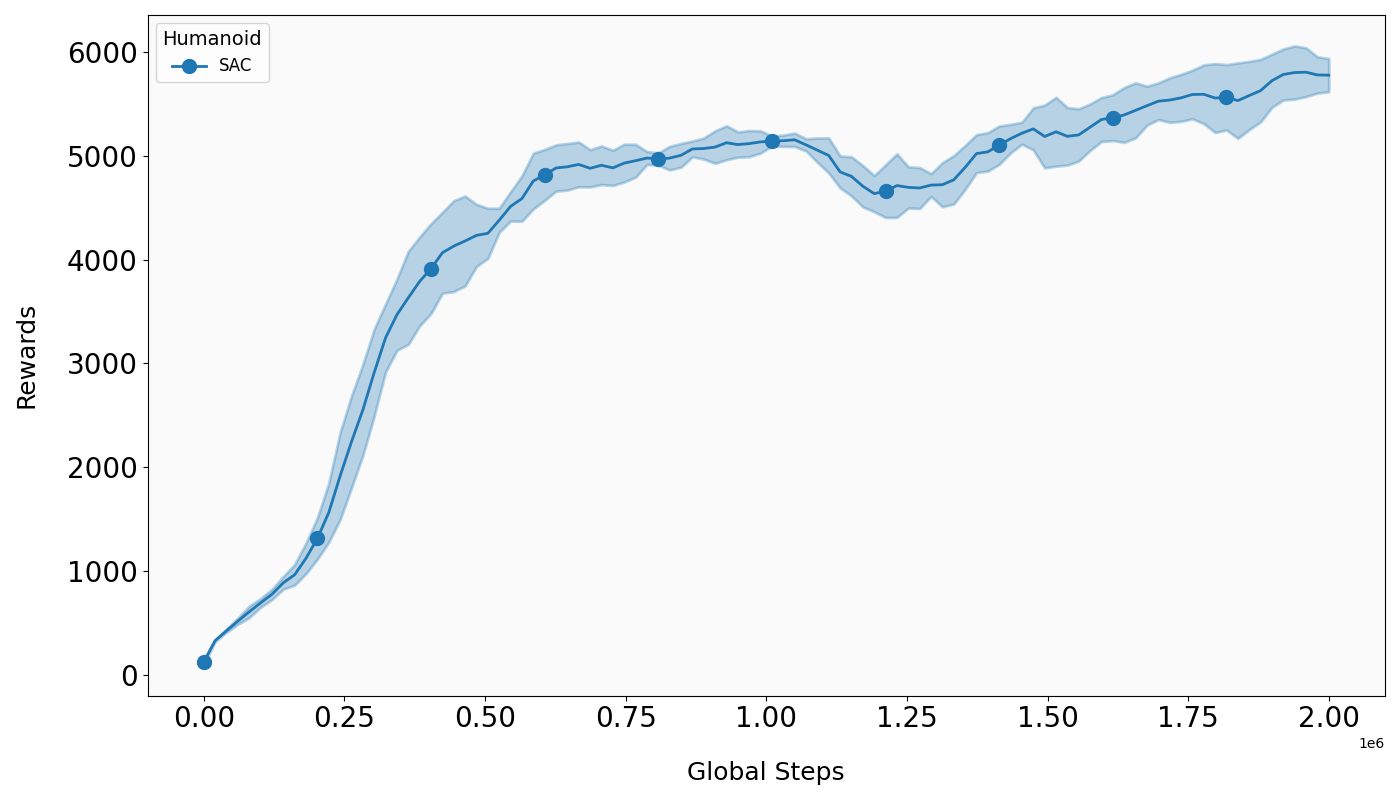}
        \caption{Humanoid}
        \label{fig:SAC_Humanoid}
    \end{subfigure}
    \caption{Training curves for SAC across various environments. The solid curves indicate the mean, while the shaded areas represent the standard deviation over the five runs.}
    \label{fig:SAC_training_curves}
\end{figure}

\subsection{Experimental configuration}
Experiments were performed on a local machine with an Intel Core i7-14700 CPU, 128 GB of RAM, and NVIDIA RTX 4090 GPU. We provide detailed information on our algorithms' hyperparameters for all environments in our GitHub repository, which will be released once the paper is published.

\subsection{Data Storage for RL Training}
\begin{itemize}
  \item \textbf{Trajectory Buffer}: Stores complete trajectories $\tau$ as sequences of $(s_t, a_t, r_t, s_{t+1})$.
    \begin{itemize}
      \item \textbf{Data Types}: Arrays of states, actions, rewards, and next states.
      \item \textbf{Dimensions}:
        \begin{itemize}
          \item States $s_t$: Typically $\mathbb{R}^n$ where $n$ is the dimension of the state space.
          \item Actions $a_t$: Depends on the action space, usually $\mathbb{R}^m$ where $m$ is the dimension of the action space.
          \item Rewards $r_t$: Scalar values.
          \item Next states $s_{t+1}$: Same as states $s_t$.
        \end{itemize}
    \end{itemize}
\end{itemize}

\subsection{Converged Reward}\label{converged_rewards}
In this subsection, we present the converged reward at the last time step. In some environments, the training curves do not fully converge, which may make it challenging to assess the ultimate performance. However, for consistency across all algorithms, we maintained the same number of training time steps for each experiment. This allows for a fair comparison of sample efficiency across different methods, even if the algorithms did not always fully converge within the given time frame. Additionally, in some environments, we do present converged training curves, demonstrating the capabilities of the algorithms.
In the reinforcement learning community, it is a common practice to show learning curves at a fixed number of steps for comparative analysis, even if full convergence is not always achieved. Notably, papers on SAC~\cite{haarnoja2018soft}, PPO~\cite{schulman2017proximal}, GEPPO~\cite{queeney2021generalized}, and Off-Policy PPO~\cite{meng2023off} follow similar practices, with many of the environments presented in these works employing non-converged curves to provide valuable insights into training dynamics and sample efficiency. Table~\ref{tab:mean_std_summary} shows the detailed converged reward performance of all different RL algorithms over different continuous tasks. In order to ensure a fair comparison between GEPPO and our method, we first analyzed the performance differences between the PPO baselines in our implementation vs. in the GEPPO repository. These discrepancies were primarily due to variations in implementation details (e.g., leveraging TensorFlow packages, early version of Mujoco environment), which significantly impacted the baseline performance. To address the discrepancies, we normalized the PPO baseline results to match our implementation. As a result, both baselines produced comparable outcomes. We then applied the same normalization factor to the GEPPO results, repeating this procedure for each environment to ensure fair and consistent comparisons across all settings.
\begin{table}[htbp]
\centering
\caption{Summary of mean and standard deviation of rewards for each policy across diverse environments (in the form Mean $\pm$ Std) at or close to the converged stage. The bold one represents the best reward performance.}
\label{tab:mean_std_summary}
\resizebox{\textwidth}{!}{%
\begin{tabular}{lccccccc}
\toprule
\textbf{Environment} & \textbf{A2C} & \textbf{GEPPO} & \textbf{HP3O} & \textbf{HP3O+} & \textbf{OffPolicy} & \textbf{P3O} & \textbf{PPO}\\ \midrule
CartPole & $282.47 \pm 170.87$ & $21.76 \pm 2.54$ & $498.25 \pm 2.51$ & $\mathbf{500.00 \pm 0.00}$ & $400.31 \pm 72.60$ & $295.38 \pm 113.44$ & $483.59 \pm 36.70$\\ 
Halfcheetah & $334.16 \pm 302.14$ & $2156.31 \pm 1024.06$ & $3523.20 \pm 565.39$ & $\mathbf{3967.47 \pm 244.80}$ & $738.11 \pm 274.89$ & $1251.17 \pm 517.84$ & $2276.87 \pm 902.20$ \\ 
Hopper & $120.31 \pm 48.29$ & $976.33 \pm 68.36$ & $988.67 \pm 16.53$ & $\mathbf{1891.35 \pm 79.47}$ & $961.64 \pm 76.69$ & $1107.49 \pm 281.71$ & $946.90 \pm 64.86$\\ 
InvertedPendulum & $71.26 \pm 76.20$ & $463.02 \pm 0.00$ & $956.66 \pm 36.20$ & $\mathbf{1000.00 \pm 0.00}$ & $11.24 \pm 3.22$ & $810.77 \pm 46.26$ & $463.02 \pm 445.28$\\ 
LunarLander & $-69.13 \pm 105.52$ & $136.07 \pm 12.33$ & $\mathbf{225.91 \pm 10.97}$ & $146.63 \pm 7.10$ & $114.68 \pm 106.51$ & $-624.72 \pm 292.51$ & $130.58 \pm 16.53$ \\ 
Swimmer & $12.99 \pm 6.25$ & $133.83 \pm 6.69$ & $340.00 \pm 4.18$ & $\mathbf{343.40 \pm 1.41}$ & $157.73 \pm 107.07$ & $32.75 \pm 14.32$ & $131.98 \pm 38.76$\\ 
Walker & $134.05 \pm 51.54$ & $1140.31 \pm 389.75$ & $1934.91 \pm 152.69$ & $1895.29 \pm 98.74$ & $\mathbf{2093.24 \pm 370.08}$ & $1777.91 \pm 465.01$ & $1150.36 \pm 97.18$ \\ 
\bottomrule
\end{tabular}
}
\end{table}

\subsection{Computational Efficiency}\label{computational_efficiency}
To probe particularly the computational efficiency of diverse algorithms presented in this study, we compare them in the wall-clock time spent to reach a certain reward in the HalfCheetah environment. Due to the time limitation, we are unable to obtain results for all other environments, while including them in the final version. Such an investigation offers us useful insights about which methods are more practically feasible and deployable given the limited real-time budget. Figure~\ref{fig:time_to_reach_halfcheetah} shows the specific performance of wall-clock time cost for different approaches reaching the rewards of 1100 and 2100, respectively. One immediate observation from the results is that GEPPO requires significantly more time to converge compared to all other schemes. For a couple of algorithms, such as A2C and OffPolicy, the training progresses are pretty slow, eventually failing to achieve rewards of 1100 and 2100 in the HalfCheetah environment. Another implication of interest from the results is that at the beginning, PPO may progress faster, compared to both HP3O and HP3O+. However, due to its on-policy behavior, the sample inefficiency issue still affects the overall training progress. Different from that, both HP3O and HP3O+ make consistent progress throughout the training process and take minimal time to achieve certain rewards. Between them, HP3O+ has slightly better performance, which empirically validates our conclusion from Theorem~\ref{theorem_5}. The finding also complies with that in Figure~\ref{fig:run_time}.
\begin{figure}[h]
    \centering
    \begin{subfigure}[t]{0.48\textwidth}
        \centering
        \includegraphics[width=\textwidth]{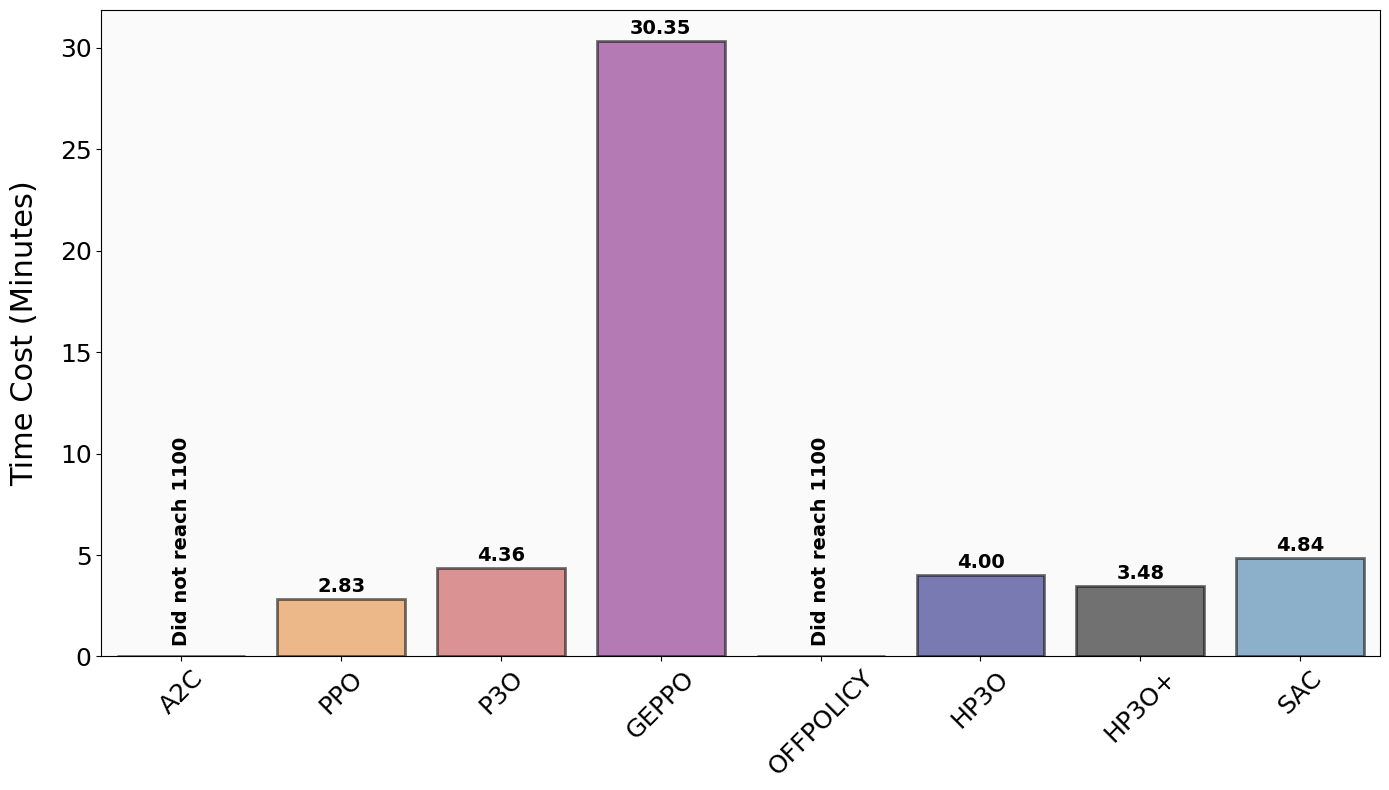}
        \caption{Time to reach the reward of 1100 for HalfCheetah}
        \label{fig:Bar_Time_to_Reach_Reward_1100_Halfcheetah}
    \end{subfigure}
    \hfill
    \begin{subfigure}[t]{0.48\textwidth}
        \centering
        \includegraphics[width=\textwidth]{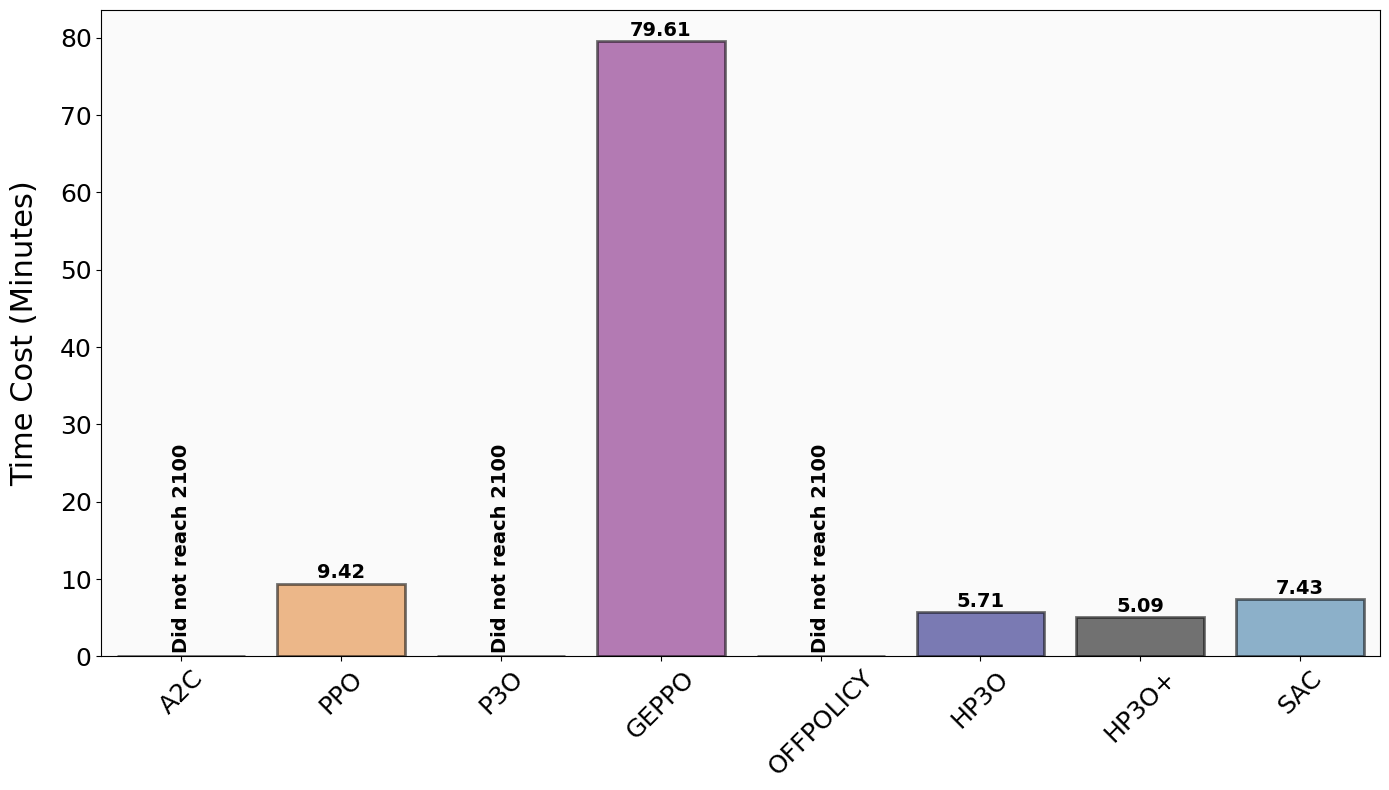}
        \caption{Time to reach the reward of 2100 for HalfCheetah}
        \label{fig:Bar_Time_to_Reach_Reward_2100_Halfcheetah}
    \end{subfigure}
    \caption{Time taken by algorithms to reach specific reward thresholds in the HalfCheetah environment.}
    \label{fig:time_to_reach_halfcheetah}
\end{figure}

\subsection{Impact of Buffer Size and Memory Usage}
In this subsection, we also discuss the impact of the buffer size on the model performance qualitatively. The other two aspects included here are the discussion on the sampling strategy and the trade-off in sparse reward settings.  
\begin{itemize}
    \item \textbf{Large Trajectory Buffer Size}: A larger trajectory buffer size allows us to store a greater number of diverse trajectories. This diversity can enhance generalization and reduce overfitting, as the agent learns from a broad range of experiences. However, in sparse reward settings, maintaining a large trajectory buffer may mean that the inclusion of outdated or less relevant trajectories could introduce instability and slow down learning, as the agent may be exposed to experiences that no longer align with its current policy.
    \item \textbf{Small Trajectory Buffer Size}: A smaller trajectory buffer retains fewer trajectories, which typically results in the agent learning from more recent experiences that are closely aligned with the current policy. This can improve stability, as updates are based on recent, relevant data. However, a smaller buffer can reduce the diversity of sampled experiences, leading to an increased risk of overfitting and limiting the agent's ability to effectively explore different parts of the environment.
    \item \textbf{Sampling Strategy}: Our sampling strategy also plays a critical role in managing the trade-off between stability and performance. By ensuring that the best return trajectory is always included in the sampled trajectories, we provide a strong guiding signal that improves policy performance. The sample rate, where we evenly sample from each selected trajectory, helps in maintaining a balance between exploration and exploitation, as well as in utilizing high-quality trajectories effectively.
    \item \textbf{Trade-off in Sparse Reward Settings}: In sparse reward environments, the need to maintain high-quality trajectories becomes even more crucial. A large trajectory buffer can help capture rare, valuable experiences, but the risk of dequeuing these valuable trajectories before they can contribute meaningfully to learning is higher. Ensuring that the best trajectory is always sampled helps mitigate this issue, but the buffer size still influences how effectively these rare rewards are retained and leveraged.
\end{itemize}
Regarding memory consumption, the total memory usage depends on the following:
\begin{enumerate}
    \item \textbf{Number of Trajectories in the Buffer (Buffer Size)}: We use a fixed-size replay buffer, which holds $N$ trajectories.
    \item \textbf{Average Length of Trajectories ($T$)}: Since trajectories have varying lengths, the memory usage will depend on the average trajectory length.
    \item \textbf{Dimensionality of State and Action Spaces}: Let $d_s$ be the dimension of the state, and $d_a$ be the dimension of the action. Each step in a trajectory stores both state and action information.
    \item \textbf{Data type size}: Denote by $n$ the specific data type size.
\end{enumerate}

The memory consumption ($M$) can be roughly estimated as:
\[
M = NTn(d_s + d_a)
\]
For example, if $N = 1000$ trajectories are stored, each with an average length of 200 steps, and assuming $d_s = 17$, $d_a = 6$, and using 32-bit floats (4 bytes), the memory requirement would be:
\[
M = 1000 \times 200 \times (17 + 6) \times 4 \text{ bytes} = 18,400,000 \text{ bytes} \approx 18.4 \text{ MB}
\]
This calculation provides an estimate, but the actual memory usage may vary depending on the environment and the specific implementation of the replay buffer.

\subsection{Evaluation Results }
\begin{figure}[h]
    \centering
    \begin{subfigure}[t]{0.48\textwidth}
        \centering
        \includegraphics[width=\textwidth]{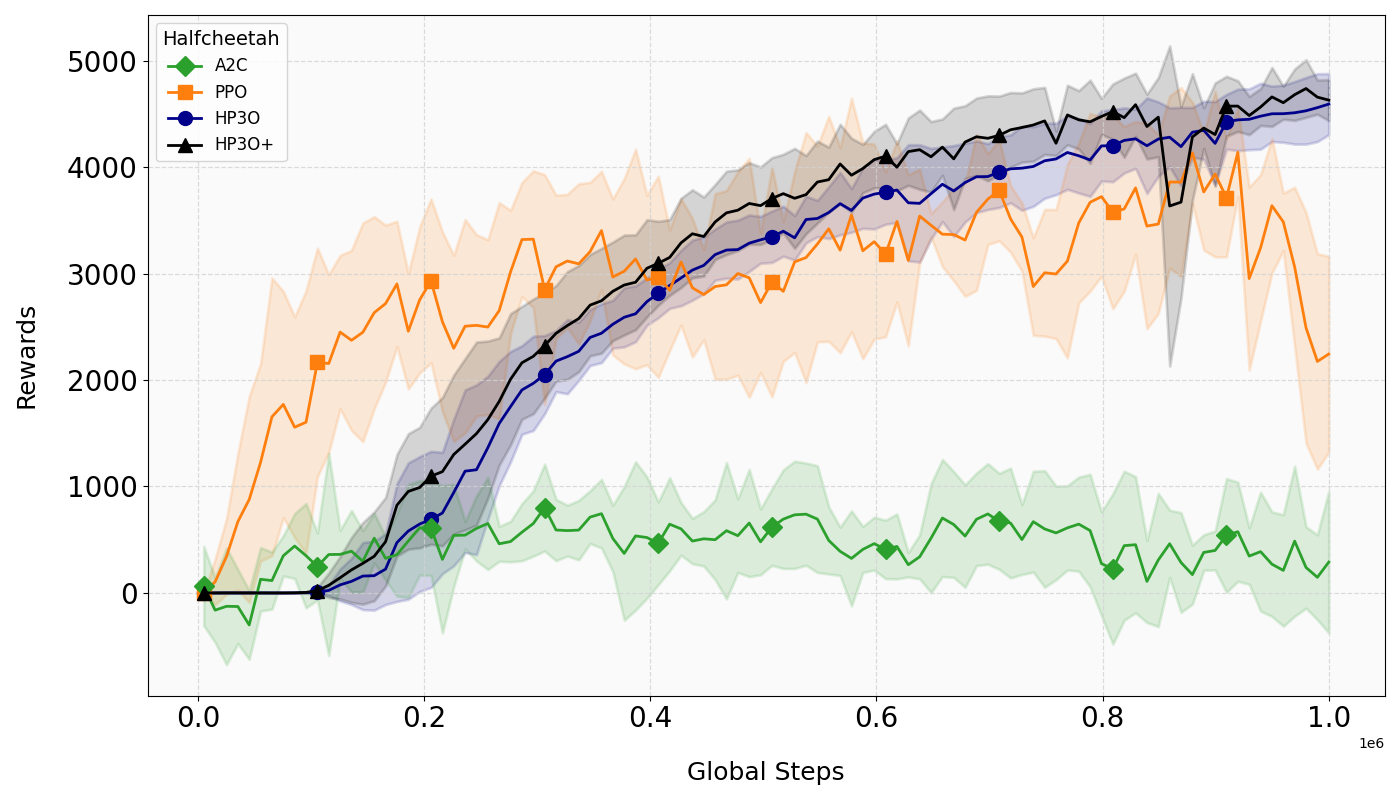}
        \caption{Evaluation plot for the HalfCheetah environment.}
        \label{fig:halfcheetah_eval}
    \end{subfigure}
    \hfill
    \begin{subfigure}[t]{0.48\textwidth}
        \centering
        \includegraphics[width=\textwidth]{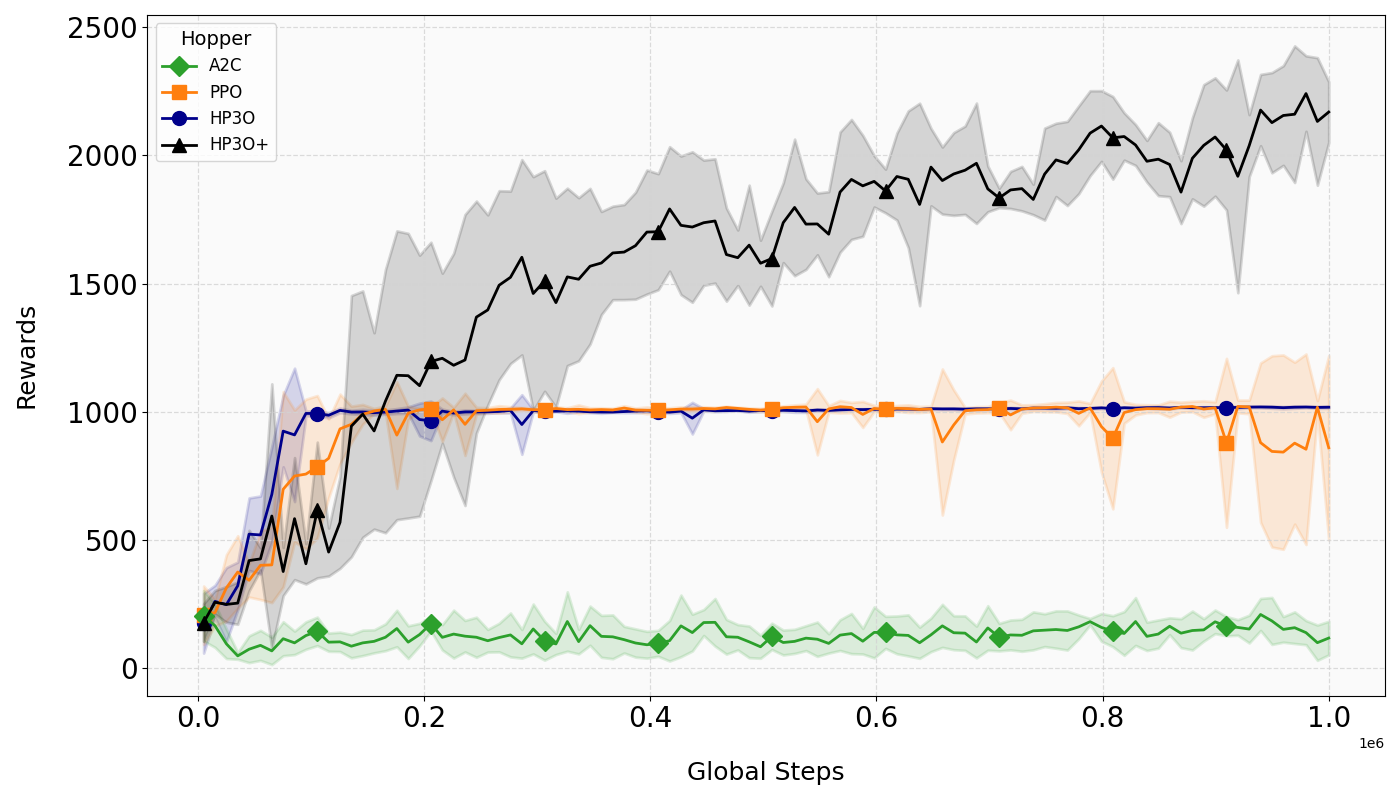}
        \caption{Evaluation plot for the Hopper environment. }
        \label{fig:hopper_eval}
    \end{subfigure}
    \hfill
    \begin{subfigure}[t]{0.48\textwidth}
        \centering
        \includegraphics[width=\textwidth]{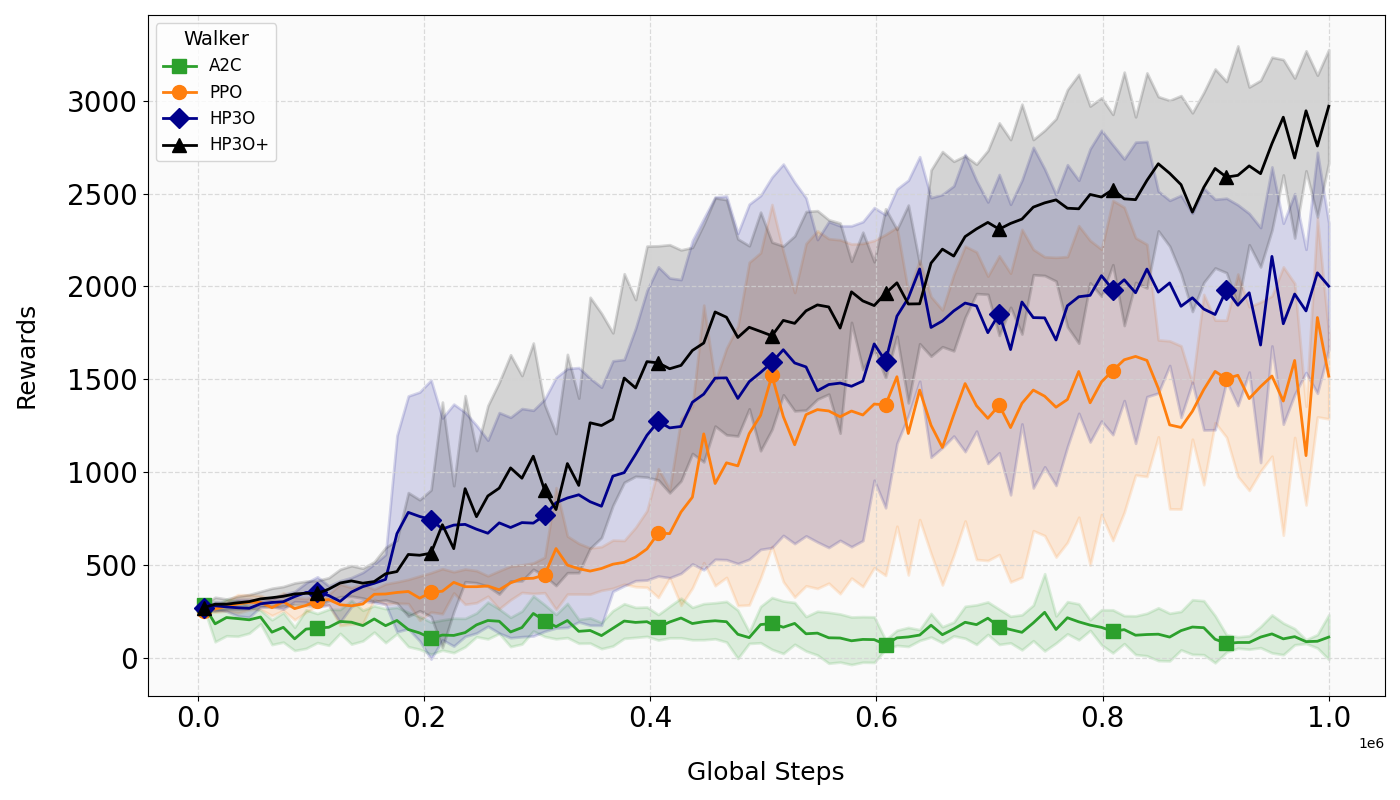}
        \caption{Evaluation plot for the Walker environment. }
        \label{fig:hopper_eval}
    \end{subfigure}
    \caption{Evaluation plots for the HalfCheetah, Hopper, and Walker environments during the evaluation stage every 5000 steps. Each experiment includes five different runs with various random seeds. The solid curves indicate the mean, while the shaded areas represent the standard deviation over the five runs.}
    \label{fig:evaluation_comparison}
\end{figure}

The evaluation results align with our training expectations. Overall, our presented models, HP3O and HP3O+, consistently outperform the baseline models across all environments, achieving higher rewards while maintaining relatively low variance. The PPO baseline performs well initially but tends to be less sample efficient and has relatively higher variance, whereas A2C struggles to reach comparable performance.

The results clearly demonstrate that our presented models, HP3O and HP3O+, are better equipped for these environments. This also verifies our claim from the training analysis. Both HP3O and HP3O+ combine improved sample efficiency and reduced variance, leading to more stable learning outcomes and higher cumulative rewards. These advantages enable our models to not only outperform the baselines but also maintain robustness and efficiency across diverse environments. Due to the time limitation, we are unable to obtain results for other environments with other methods, and will include additional results in the final version.

\end{document}